\theoremstyle{definition}
\newtheorem{assumption}{Assumption}
\newtheorem{theorem}{Theorem}
\newtheorem{definition}{Definition}
\newtheorem{corollary}{Corollary}
\newtheorem{lemma}{Lemma}
\newtheorem{remark}{Remark}
\newcommand{\LD}{L_\mathcal{D} (w^{t+0.75})}
\newcommand{\LT}{L_\mathcal{D} (w^{t})}
\newcommand{\wT}{w^t}
\newcommand{\wD}{w^{t+0.75}}
\newcommand{\hatwD}{\hat{w}^{t+0.75}}
\newcommand{\lge}{\langle}
\newcommand{\rge}{\rangle}
\def\lc{\left\lceil}   
\def\rc{\right\rceil}
\newcommand{\nm}{\nonumber}
\newcommand{\sunion }{{S^{t+1}\bigcup S^t} }
\newcommand{\sdiffa}{ {S^{t+1}\backslash S^t} }
\newcommand{\sdiffb}{{S^{t}\backslash S^{t+1}} }
\title{High Dimensional Differentially Private Stochastic Optimization with Heavy-tailed Data}
\author{%
  Lijie Hu\\
  CEMSE\\ 
KAUST
  \And
  Shuo Ni\thanks{The first two authors contributed equally. The work was done when Shuo Ni was a research intern at KAUST.} \\
  Department of ECE\\ 
USC \\
\And 
Hanshen Xiao\\
CSAIL\\
MIT
  \And 
  Di Wang \\
    CEMSE\\ 
KAUST
}
\begin{document}

     \maketitle
\begin{abstract}
As one of the most fundamental problems in machine learning, statistics and differential privacy, Differentially Private Stochastic Convex Optimization (DP-SCO) has been extensively studied in recent years. However, most of the previous work can only handle either regular data distributions or irregular data in the low dimensional space case. To better understand the challenges arising from irregular data distributions, in this paper we provide the first study on the problem of DP-SCO with heavy-tailed data in the high dimensional space. In the first part we focus on the problem over some polytope constraint (such as the $\ell_1$-norm ball). We show that if the loss function is smooth and its gradient has bounded second order moment, it is possible to get a (high probability) error bound (excess population risk) of $\tilde{O}(\frac{\log d}{(n\epsilon)^\frac{1}{3}})$ in the $\epsilon$-DP model, where $n$ is the sample size and $d$ is the dimension of the underlying space. Next, for LASSO,  if the data  distribution has bounded fourth-order moments, we improve the bound to $\tilde{O}(\frac{\log d}{(n\epsilon)^\frac{2}{5}})$ in the $(\epsilon, \delta)$-DP model.  In the second part of the paper, we study sparse learning with heavy-tailed data. We first revisit the sparse linear model and propose a truncated DP-IHT method whose output could achieve an error of $\tilde{O}(\frac{s^{*2}\log^2 d}{n\epsilon})$, where $s^*$ is the sparsity of the underlying parameter. Then we study a more general problem over the sparsity ({\em i.e.,} $\ell_0$-norm) constraint, and show that it is possible to achieve an error of $\tilde{O}(\frac{s^{*\frac{3}{2}}\log d}{n\epsilon})$, which is also near optimal up to a factor of $\tilde{O}{(\sqrt{s^*})}$,  if the loss function is smooth and strongly convex. Experiments on both of the synthetic and real world data also support our theoretical analysis.
\end{abstract}

\section{Introduction}

Privacy-preservation has become an important consideration and now is a challenging task for machine learning algorithms with sensitive data. To address the privacy issue, Differential Privacy (DP) has received a great deal of attentions and now has established itself as a de facto notation of privacy for data analysis. Methods to guarantee differential privacy have been widely studied, and recently adopted in industry \cite{apple,ding2017collecting}.

Stochastic Convex Optimization (SCO)  \cite{vapnik2013nature}  and its empirical form, Empirical Risk Minimization (ERM), are the most fundamental problems in machine learning and statistics, which include several basic models, such as linear regression and logistic regression. They find numerous applications in many areas such as medicine, finance, genomics  and social science. Due to their importance, the problem of designing DP algorithms for SCO or ERM ({\em i.e.,} DP-SCO and DP-ERM)
have been
extensively studied for nearly a decade 
starting from
\cite{chaudhuri2009privacy,chaudhuri2011differentially}.  Later on, 
a long list of works have attacked the problems
from different perspectives: \cite{bassily2014private, iyengar2019towards,bassily2020,bassily2020stability,feldman2020private,zhou2020bypassing,song2020characterizing} studied the problems in the low dimensional case and the central model, \cite{kasiviswanathan2016efficient,kifer2012private,talwar2015nearly,wang2020knowledge,cai2020cost} considered the problems in the high dimensional sparse case and the central model, \cite{smith2017interaction,duchi2013local,JMLR:v21:19-253,duchi2018minimax} focused on the problems in the local model.

However, most of those previous work can only handle regular data, {\em i.e.,} they need to assume either the underlying data distribution is bounded or sub-Gaussian,  or the loss function is $O(1)$-Lipschitz for all the data. This is particularly true for those output perturbation based \cite{chaudhuri2011differentially} and objective or gradient perturbation based 
\cite{bassily2014private} DP methods.
However, such assumptions may not always hold when dealing with real-world datasets, especially those from biomedicine and finance, which are often heavy-tailed \cite{woolson2011statistical,biswas2007statistical,ibragimov2015heavy}, implying that existing algorithms may fail to guarantee the DP property. Compared with bounded data, 
heavy-tailed data could lead to unbounded gradient and thus violate the Lipschitz condition. 
For example, consider the linear squared loss $\ell(w, (x, y))=(w^Tx-y)^2$. When $x$ is heavy-tailed, the gradient of $\ell(w, (x,y))$ becomes unbounded. To address the  issue,  one potential approach is to truncating or trimming the gradient, such as in \cite{abadi2016deep}. However, there is no existing convergence result based on  their algorithm. Thus, new private and robust estimation methods for heavy-tailed data are needed. 

 Recently, there are several work studied private mean estimation or DP-SCO with heavy-tailed data \cite{wang2020differentially,kamath2020private,liu2021robust,barber2014privacy} (see Section \ref{sec:related} for details). However, the estimation errors of these results all are dependent on polynomial in the dimension of the underlying space, which impedes them to be implemented to the high dimensional setting, where the dimension is far greater than the sample size. In contrast, as we mentioned earlier, high dimensional DP-SCO with regular data has been studied quite well. Thus, our question is, what are the theoretical behaviors of DP-SCO with heavy-tailed data in the high dimensional space? In this paper, we provide a comprehensive and the first study on the problem under different settings by providing several new methods. Our contributions can be summarized as the following, 
\begin{enumerate}
    \item We first study DP-SCO over some polytope constraint, which has been studied in \cite{talwar2015nearly,asi2021private} for regular data.  We first show that if the loss function is smooth and its gradient has bounded second order moment, it is possible to get an excess population risk (error bound)  of $\tilde{O}(\frac{\log d}{(n\epsilon)^\frac{1}{3}})$ with high probability in the $\epsilon$-DP model, where $n$ is the sample size and $d$ is the dimensionality of the underlying space.   Next, for LASSO, if the data  distribution has bounded fourth-order moments, we improve the bound to $\tilde{O}(\frac{\log d}{(n\epsilon)^\frac{2}{5}})$ in the $(\epsilon, \delta)$-DP model. 
    \item  We then study DP-SCO for sparse learning with heavy-tailed data in the $(\epsilon, \delta)$-DP model, which has been studied in \cite{Wang019a,wang2019differentially12,cai2019cost} in the regular data case. We first revisit the sparse linear regression problem  and propose a new method whose output could achieve an error bound of $\tilde{O}(\frac{s^{*2}\log^2 d}{n\epsilon})$, where $s^*$ is the sparsity of the underlying parameter. Then we study a general DP-SCO problem under the sparsity constraint, and show that it is possible to achieve an error of $\tilde{O}(\frac{s^{*\frac{3}{2}}\log d}{n\epsilon})$, if the loss function is smooth and strongly convex. We also show this bound is near optimal up to a factor of $O(\sqrt{s^*}\log^2 n )$. To get these results, we provide several new methods and hard instances which may be used to in other machine learning problems.

\end{enumerate}
Due to space limit, all the proofs and lemmas 
are included in the Appendix. 

\section{Related Work}\label{sec:related}
As mentioned earlier, there is a long list of results on DP-SCO and DP-ERM. However, most of them consider the case where the underlying data distribution is sub-Gaussian and cannot be extended to heavy-tailed case. 
On the other side, in the non-private case, recently a number of works have studied the SCO and ERM problems with heavy-tailed data, such as  
\cite{brownlees2015empirical,minsker2015geometric,hsu2016loss,lecue2018robust,holland2019better,lugosi2019risk,prasad2018robust}.
It is not clear whether they can be adapted to private versions and in the high dimensional setting.

For DP-SCO or private estimation for heavy-tailed distribution,  \cite{barber2014privacy} provides the first study on private mean estimation for distributions with bounded moment and proposes the minimax private rates. Their methods are based on truncating the data to make each data record has a bounded $\ell_2$-norm. 
However, as \cite{kamath2020private} mentioned, they need a stronger assumption on the bounded moment, {\em e.g., for the mean estimation problem they need to assume $\mathbb{E}[\|x\|_2^2]\leq 1$ while we only assume $\mathbb{E}[x_j^2]\leq 1$ for each coordinate $j\in [d]$}. Moreover, their method cannot be extended to the high dimensional sparse setting directly, and  their error bound is in the expectation form, while in the robust statistics it is preferable to get high probability results (see Definition \ref{def:4} for details).  Later, \cite{kamath2020private} also studies the heavy-tailed mean estimation, which is also studied by \cite{liu2021robust} recently. However, their results for general $d$ dimensional space are still not the high probability form (they can only show their results hold with probability at least 0.7). Thus, their methods cannot be used to DP-SCO directly. 
Moreover, it is unknown whether their methods could be extended to the high dimensional or the sparse setting. \cite{brunel2020propose} recently also studies the same problem and proposes a method based on the PTR mechanism \cite{dwork2009differential}. However, their method can be only used in the $1$-dimensional space and and needs stronger assumptions. 

Meanwhile, instead of the mean estimation,  \cite{wang2020differentially} provides the first study on DP-SCO with heavy-tailed data and proposes three methods based on different assumptions. Their first method is based on the Sample-and-Aggregate framework \cite{nissim2007smooth}. However, this method needs enormous assumptions and its error bound is quite large. Their second method is still based on the smooth sensitivity \cite{bun2019average}. However, \cite{wang2020differentially} needs to assume the distribution is sub-exponential. It also provides a new private estimator motivated by the previous work in robust statistics. While some our estimators are quite similar as theirs, they are quite a lot differences (see Remark \ref{remark:1} for details).
Based on the mean estimator in \cite{kamath2020private}, \cite{kamath2021improved} recently studies DP-SCO and improves the (expected) excess population risk to $\tilde{O}((\frac{d}{\epsilon n})^\frac{1}{2}) $ and $\tilde{O}(\frac{d}{\epsilon n}) $ for convex and strongly convex loss functions respectively under the assumption that the gradient of the loss has bounded second order moment. These results match the best known result of the heavy-tailed mean estimation problem. However, all of these results are in the expectation form instead of the high probability form. Moreover, their method cannot be extended to the linear model, where the bounded second order moment of loss assumption is quite strong (see Assumption \ref{ass:3} for details). We note that all  these methods cannot be directly extended to the high dimensional case or the sparse learning problem. \footnote{We refer readers the reference \cite{kamath2020private,wang2020differentially} to see more related work on DP methods for unbounded sensitivity.}

\section{Preliminaries}\label{secprelin} 

\paragraph{Notations:} For vectors $v, v_i\in \mathbb{R}^d$, we denote $v_j$ and $v_{i, j}$ as their corresponding the $j$-th coordinate. Given a set of indices $S \subseteq [d]$, we denote the vector $v_S\in \mathbb{R}^d$ as the projection of $v$ onto $S$, {\em i.e., } $v_{S,j}=v_j$ if $j\in S$, and $v_{S, j}=0$ otherwise.  We also denote $|S|$ as the number of elements in $S$ and $\text{supp}(w)=\{j\in [d]: w_j\neq 0\}\subseteq [d]$ for $w$. For a constraint set $\mathcal{W}$, we denote $\|\mathcal{W}\|_1$ as it is $\ell_1$-norm diameter, {\em i.e.,} $\|\mathcal{W}\|_1=\max_{u, v \in \mathcal{W}}\|u-v\|_1$. 
\begin{definition}[Differential Privacy \cite{dwork2006calibrating}]\label{def:1}
	Given a data universe $\mathcal{X}$, we say that two datasets $D,D'\subseteq \mathcal{X}$ are neighbors if they differ by only one data sample, which is denoted as $D \sim D'$. A randomized algorithm $\mathcal{A}$ is $(\epsilon,\delta)$-differentially private (DP) if for all neighboring datasets $D,D'$ and for all events $S$ in the output space of $\mathcal{A}$, we have $$\text{Pr}(\mathcal{A}(D)\in S)\leq e^{\epsilon} \text{Pr}(\mathcal{A}(D')\in S)+\delta.$$
\end{definition}
In this paper, we will mainly use the Laplacian and the Exponential mechanism, and the Advanced Composition Theorem to guarantee DP property. 
\begin{definition}[Laplacian Mechanism]\label{def:2}
		Given a function $q : \mathcal{X}^n\rightarrow \mathbb{R}^d$, the Laplacian Mechanism is defined as:
		$\mathcal{M}_L(D,q,\epsilon)=q(D)+ (Y_1, Y_2, \cdots, Y_d),$
		where $Y_i$ is i.i.d. drawn from a Laplacian Distribution $\text{Lap}(\frac{\Delta_1(q)}{\epsilon}),$ where $\Delta_1(q)$ is the $\ell_1$-sensitivity of the function $q$, {\em i.e.,}
		$\Delta_1(q)=\sup_{D\sim D'}||q(D)-q(D')||_1.$ For a parameter $\lambda$, the Laplacian distribution has the density function $\text{Lap}(\lambda) (x)=\frac{1}{2\lambda}\exp(-\frac{x}{\lambda})$. 
		Laplacian Mechanism preserves $\epsilon$-DP.
\end{definition}
	\begin{definition}[Exponential Mechanism]\label{def:4}
		The Exponential Mechanism allows differentially private computation over arbitrary domains and range $\mathcal{R}$, parametrized by a score function $u(D,r)$ which maps a pair of input data set $D$ and candidate result $r\in \mathcal{R}$ to a real valued score. With the score function $u$ and privacy budget $\epsilon$, the mechanism yields an output with exponential bias in favor of high scoring outputs. Let $\mathcal{M}(D, u, \mathcal{R})$ denote the exponential mechanism, and $\Delta$ be the sensitivity of $u$ in the range $\mathcal{R}$, {\em i.e.,}
		$\Delta=\max_{r\in \mathcal{R}}\max_{D\sim D'}|u(D,r)-u(D',r)|.$
		Then if $\mathcal{M}(D, u, R)$ selects and outputs an element $r\in \mathcal{R}$ with probability proportional to $\exp(\frac{\epsilon u(D,r)}{2\Delta u})$, it  preserves $\epsilon$-DP.
	\end{definition}
	The output of exponential mechanism has the following utility. 
		\begin{lemma}[\cite{dwork2014algorithmic}]\label{lemma:exp}
		For the exponential mechanism $\mathcal{M}(D,u, \mathcal{R})$, we have 
		\begin{equation*}
		\text{Pr}\{u(\mathcal{M}(D, u, \mathcal{R}))\leq \text{OPT}_u(x)-\frac{2\Delta u}{\epsilon}(\ln|\mathcal{R}|+t)\}\leq e^{-t}.
		\end{equation*}
		where $\text{OPT}_u(x)$ is the highest score in the range $\mathcal{R}$, {\em i.e.} $\max_{r\in \mathcal{R}}u(D,r)$.
	\end{lemma}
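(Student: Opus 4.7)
The plan is to bound the probability mass the exponential mechanism places on ``bad'' outcomes, namely the set $\mathcal{B} = \{r \in \mathcal{R} : u(D,r) \leq \text{OPT}_u(x) - \tau\}$ with $\tau = \frac{2\Delta u}{\epsilon}(\ln|\mathcal{R}| + t)$, by comparing it against the probability mass concentrated near a single witness $r^\star \in \arg\max_{r \in \mathcal{R}} u(D,r)$. The key observation is that the normalizing constant of the mechanism's density already contains the contribution of $r^\star$, which dominates the total mass on $\mathcal{B}$ once $\tau$ is large enough.

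First I would write the mechanism's output density explicitly. Let $Z = \sum_{r \in \mathcal{R}} \exp(\epsilon u(D,r)/(2\Delta u))$ be the partition function; then by Definition \ref{def:4},
\[
\text{Pr}[\mathcal{M}(D,u,\mathcal{R}) \in \mathcal{B}] \;=\; \frac{\sum_{r \in \mathcal{B}} \exp(\epsilon u(D,r)/(2\Delta u))}{Z}.
\]
For the numerator, every $r \in \mathcal{B}$ satisfies $u(D,r) \leq \text{OPT}_u(x) - \tau$, so the sum is at most $|\mathcal{B}|\exp(\epsilon(\text{OPT}_u(x)-\tau)/(2\Delta u)) \leq |\mathcal{R}|\exp(\epsilon(\text{OPT}_u(x)-\tau)/(2\Delta u))$. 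For the denominator, I would simply discard all terms but $r^\star$ to get the crude but sufficient lower bound $Z \geq \exp(\epsilon\, \text{OPT}_u(x)/(2\Delta u))$.

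Dividing, the $\exp(\epsilon\,\text{OPT}_u(x)/(2\Delta u))$ factors cancel and I obtain
\[
\text{Pr}[\mathcal{M} \in \mathcal{B}] \;\leq\; |\mathcal{R}| \exp\!\bigl(-\tfrac{\epsilon \tau}{2\Delta u}\bigr) \;=\; |\mathcal{R}| \exp(-\ln|\mathcal{R}| - t) \;=\; e^{-t},
\]
which is exactly the claimed tail bound after substituting the chosen $\tau$.

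There is no real obstacle here, since the argument reduces to a one-line partition-function estimate; the only mild subtlety is recognizing that lower-bounding $Z$ by the \emph{single} term at $r^\star$ (rather than something tighter that accounts for many near-optimal elements) is both necessary, because the lemma makes no structural assumption on $\mathcal{R}$, and sufficient, because the extra $|\mathcal{R}|$ factor it costs us is absorbed by the $\ln|\mathcal{R}|$ in the definition of $\tau$. A slightly sharper version of the lemma could replace $|\mathcal{R}|$ by $|\mathcal{B}|$ inside the $\ln$, but this refinement is not needed for the statement as written and plays no role in its later use in this paper.
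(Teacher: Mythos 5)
Your proof is correct and is essentially the standard argument from Dwork and Roth (\cite{dwork2014algorithmic}, Theorem 3.11): bound the mass on the bad set by a single-term lower bound on the partition function and absorb the $|\mathcal{R}|$ factor into the $\ln|\mathcal{R}|$ slack. The paper cites this lemma rather than reproving it, so there is no internal proof to compare against, but your argument is exactly the one the cited reference gives.
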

	\begin{lemma}[Advanced Composition Theorem]\label{lemma:adv}
Given target privacy parameters $0<\epsilon <1$ and $0<\delta<1$, to ensure $(\epsilon, T\delta'+\delta)$-DP over $T$ mechanisms, it suffices that each mechanism is $(\epsilon',\delta')$-DP, where $\epsilon'=\frac{\epsilon}{2\sqrt{2T\ln(2/\delta)}}$ and $\delta'=\frac{\delta}{T}$.  
\end{lemma}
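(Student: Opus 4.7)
The plan is to prove this via the standard privacy loss random variable (PLRV) framework. For a mechanism $\mathcal{M}$ and neighboring datasets $D \sim D'$, define the PLRV as $Z(o) = \ln \frac{\Pr[\mathcal{M}(D) = o]}{\Pr[\mathcal{M}(D') = o]}$ where $o \sim \mathcal{M}(D)$. The first step is to establish the equivalence (up to small $\delta$-slack) between $(\epsilon,\delta)$-DP and a tail bound on the PLRV: a mechanism is $(\epsilon,\delta)$-DP if, after conditioning on an event of probability at least $1-\delta$, we have $|Z| \le \epsilon$. Conversely, a tail bound $\Pr[Z > \epsilon] \le \delta$ on $\mathcal{M}(D)$ and $\mathcal{M}(D')$ implies $(\epsilon,\delta)$-DP.

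Next I would handle a single $(\epsilon',\delta')$-DP mechanism $\mathcal{M}_i$. By the equivalence above, there is an event $E_i$ of probability at least $1 - \delta'$ on which $|Z_i| \le \epsilon'$. A short calculation using $(\epsilon',\delta')$-DP in both directions shows that, conditioned on $E_i$, the expectation satisfies $\mathbb{E}[Z_i \mid \text{history}] \le \epsilon'(e^{\epsilon'} - 1)/(e^{\epsilon'}+1) \le (\epsilon')^2$ for small $\epsilon'$. This is the key per-step bound; crucially it is a conditional expectation, so it holds adaptively as each $\mathcal{M}_i$ may depend on outputs of $\mathcal{M}_1,\dots,\mathcal{M}_{i-1}$.

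Then I would apply Azuma--Hoeffding to the martingale $\sum_{i=1}^T (Z_i - \mathbb{E}[Z_i \mid \text{history}])$. Since each centered increment is bounded by $2\epsilon'$ on the good event, with probability at least $1 - \delta/2$,
\begin{equation*}
\sum_{i=1}^T Z_i \;\le\; T(\epsilon')^2 \;+\; \epsilon'\sqrt{2T\ln(2/\delta)}.
\end{equation*}
Plugging in $\epsilon' = \epsilon / (2\sqrt{2T\ln(2/\delta)})$, the second term equals $\epsilon/2$ and the first term is $\le \epsilon/2$ whenever $\epsilon \le 1$, so the aggregate PLRV $\sum_i Z_i$ is bounded by $\epsilon$ with probability at least $1 - \delta/2$. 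A symmetric argument handles the other direction.

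Finally I would take a union bound over the $T$ per-mechanism failure events $\bar E_i$ (contributing $T\delta'$) and the Azuma tail event (contributing $\delta$), and convert back from the PLRV tail bound to DP using the other direction of the equivalence, yielding $(\epsilon, T\delta' + \delta)$-DP as required. The main obstacle, and the reason this is not a trivial composition, is the martingale setup: proving that the per-round expected privacy loss bound $\mathbb{E}[Z_i \mid Z_1,\dots,Z_{i-1}] \le (\epsilon')^2$ holds \emph{adaptively} requires that the equivalence between DP and PLRV tail bounds be applied conditionally on the history, which in turn relies on $(\epsilon',\delta')$-DP of $\mathcal{M}_i$ holding for every realization of previous outputs.
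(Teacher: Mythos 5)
The paper does not actually prove Lemma \ref{lemma:adv}: it is stated as an imported black-box fact, the standard Advanced Composition Theorem of Dwork, Rothblum and Vadhan (restated, e.g., as Theorem 3.20 in Dwork and Roth's monograph), and the paper simply invokes it in the privacy analyses of Algorithms \ref{alg:2}, \ref{alg:3}, and \ref{alg:5}. Your proposal reconstructs essentially the canonical proof of that theorem: define the privacy loss random variable, pass to the $\delta$-approximate version via the equivalence between $(\epsilon,\delta)$-DP and PLRV tail bounds, bound the conditional expected privacy loss of each $(\epsilon',\delta')$-DP round by roughly $(\epsilon')^2$, and apply Azuma--Hoeffding to the martingale of centered privacy losses, then union-bound over the $T$ per-round bad events and the tail event. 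This is exactly the argument in Dwork--Rothblum--Vadhan (2010), so the route is the standard one rather than anything the paper itself carries out.

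Two small points of care so the constants come out as stated. First, when you invoke Azuma--Hoeffding you should use the bounded-range form: on the good event $Z_i$ is supported in $[-\epsilon',\epsilon']$, hence $Z_i-\mathbb{E}[Z_i\mid\text{history}]$ lies in an interval of \emph{length} $2\epsilon'$. That version yields $\Pr\bigl[\sum_i (Z_i-\mu_i)\ge t\bigr]\le \exp\bigl(-t^2/(2T(\epsilon')^2)\bigr)$ and hence the tail $t=\epsilon'\sqrt{2T\ln(2/\delta)}$ that you wrote; if you instead only use $|Z_i-\mu_i|\le 2\epsilon'$ as an absolute bound, the symmetric-increment form of Azuma costs an extra factor of $2$ and the arithmetic no longer closes with $\epsilon'=\epsilon/(2\sqrt{2T\ln(2/\delta)})$. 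Second, your per-step bound $\mathbb{E}[Z_i\mid\text{history}]\le \epsilon'\tfrac{e^{\epsilon'}-1}{e^{\epsilon'}+1}$ is the sharpened KL bound for a pure-$\epsilon'$ pair; it is fine to use it, but one must first apply the PLRV equivalence to replace the $(\epsilon',\delta')$-DP mechanism by a nearby pure-$\epsilon'$ one, which is exactly where the $T\delta'$ charge in the union bound comes from. With those two clarifications your sketch is a correct proof of the cited lemma.
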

	\begin{definition}[DP-SCO \cite{bassily2014private}]\label{def:5}
		Given a dataset $D=\{z_1,\cdots,z_n\}$ from a data universe $\mathcal{Z}$ where $z_i=(x_i, y_i)$ with a feature vector $x_i$ and a label/response $y_i$ are i.i.d. samples from some unknown distribution $\mathcal{D}$,  a convex constraint set  $\mathcal{W} \subseteq \mathbb{R}^d$, and a convex loss function $\ell: \mathcal{W}\times \mathcal{Z}\mapsto \mathbb{R}$. Differentially Private Stochastic Convex Optimization (DP-SCO) is to find $w^{\text{priv}}$ so as to minimize the population risk, {\em i.e.,} $L_\mathcal{D} (w)=\mathbb{E}_{z\sim \mathcal{D}}[\ell(w, z)]$
		with the guarantee of being differentially private.\footnote{Note that in this paper we consider the improper learning case, that is $w^{\text{priv}}$ may not in $\mathcal{W}$.} 
		 The utility of the algorithm is measured by the excess population risk, that is  $$L_\mathcal{D} (w^{\text{priv}})-\min_{w\in \mathbb{\mathcal{W}}}L_\mathcal{D} (w).$$ 	 Besides the population risk, we can also measure the \textit{empirical risk} of dataset $D$: $\hat{L}(w, D)=\frac{1}{n}\sum_{i=1}^n \ell(w, z_i).$ It is notable that in the \textbf{high probability setting}, we need to get a high probability excess population risk.  That is given a failure probability $0<\zeta<1$, we want get a (polynomial) function $f(d, \log\frac{1}{\delta}, \log\frac{1}{\zeta},  \frac{1}{n}, \frac{1}{\epsilon})$ such that with probability at least $1-\zeta$ (over the randomness of the algorithm and the data distribution), $$L_\mathcal{D} (w^{\text{priv}})-\min_{w\in \mathbb{\mathcal{W}}}L_\mathcal{D} (w)\leq O(f(d, \log\frac{1}{\delta}, \log\frac{1}{\zeta}, \frac{1}{n}, \frac{1}{\epsilon})).$$ 
		 Compared with the high probability setting, there is another setting namely the expectation setting where our goal is to get a (polynomial) function $f(d, \log\frac{1}{\delta},  \frac{1}{n}, \frac{1}{\epsilon})$ such that $$\mathbb{E} L_\mathcal{D} (w^{\text{priv}})-\min_{w\in \mathbb{\mathcal{W}}}L_\mathcal{D} (w)\leq O(f(d, \log\frac{1}{\delta},  \frac{1}{n}, \frac{1}{\epsilon})),$$ where the expectation takes over the randomness of the data records and the algorithm. 
		 
	\end{definition}
	It is notable that, in the regular data case where the data distribution $\mathcal{D}$ or the gradient of the loss is bounded or sub-Gaussian, it is easy to transform an expected excess population risk to an excess population risk with high probability. However, this is not true for the heavy-tailed case. \footnote{See \cite{catoni2012challenging} for the necessity to consider the high probability setting.} Thus, all of the recent studies on robust statistics such as \cite{brownlees2015empirical,minsker2015geometric,hsu2016loss,lecue2018robust,holland2019better,lugosi2019risk,prasad2018robust} focused on the high probability setting. In the paper, we will study the problem in the high probability setting. Moreover, throughout the paper we focus on the high dimensional case where $d$ could be far greater than $n$. Thus we wish the error bounds (excess population risk) be logarithmic of $d$. 
	
	The following two definitions on loss functions are commonly used in machine learning, optimization and statistics. 
	\begin{definition}\label{def:6}
	    A function $f$ is $L$-Lipschitz w.r.t the norm $\|\cdot\|$ if  for all $w, w'\in\mathcal{W}$, $|f(w)-f(w')|\leq L\|w-w'\|$.
	\end{definition}
	\begin{definition}\label{def:7}
	    	    A function $f$ is $\alpha$-smooth on $\mathcal{W}$ if for all $w, w'\in \mathcal{W}$, $f(w')\leq f(w)+\langle \nabla f(w), w'-w \rangle+\frac{\alpha}{2}\|w'-w\|_2^2.$
	\end{definition}

\section{High Dimensional DP-SCO over Polytope Domain}
In this section we will study DP-SCO over polytope domain, {\em i.e.,} the underlying constraint set $\mathcal{W}$ is some polytope and thus could be written as the convex hull of a finite set $V$. This contains numerous of  learning models that address high dimensional data, such as LASSO and minimization over  probability simplex.

\cite{talwar2015nearly} first studied the problem of DP-ERM over polytope domain in the regular data setting ({\em i.e.,} the gradient of loss function has bounded norm).  Specifically, they showed that when the loss function is Lipschitz w.r.t $\ell_1$-norm, there is an $(\epsilon, \delta)$-DP algorithm (DP Frank-Wolfe) whose  output  could achieve an error of $O(\frac{\log (|V|n)}{(n\epsilon)^\frac{2}{3}})$. However, to generalize to the heavy-tailed data setting, the main difficulty  is that  the assumption of $\ell_1$-norm Lipschitz does not hold anymore. 
To address the problem, one possible approach may be truncating the gradient to make it has bounded $\ell_\infty$-norm (since $\ell_1$-norm Lipschitz is equivalent to its gradient has bounded $\ell_\infty$-norm). However, as mentioned in \cite{wang2020differentially}, it could introduce enormous amount of error and it is difficult to select the best threshold parameter. In the following we will propose a new method to overcome this challenge.  We will focus on the case where the gradient of the loss is heavy-tailed. Specifically, following from the previous work on robust statistics such as \cite{prasad2018robust,holland2019better}, here we propose the following assumption on the gradient of the loss function. 

\begin{assumption}\label{ass:1}
We assume $L_D(\cdot)$ is $\alpha$-smooth,  and there exists a $\tau>0$ such that for any $w\in \mathcal{W}$ and each coordinate $j\in [d]$, we have $\mathbb{E}[(\nabla_j \ell(w, x))^2]\leq \tau$. 
\end{assumption}
First, it is notable that the smoothness condition in Assumption \ref{ass:1} is necessary for the high dimensional setting. As shown by \cite{asi2021private},  when the loss function is non-smooth and  $\ell_1$-norm Lipschitz, even in the regular data setting the excess population risk is lower bounded by $\Omega(\sqrt{\frac{\log d}{n}}+\frac{\sqrt{d}}{n\epsilon})$, which depends on $\Omega(\sqrt{d})$. Secondly, in some other work on studying private estimation for distributions with bounded second-order moment (such as \cite{kamath2020private}), they assume that for each unit vector $u\in \mathbb{R}^d$, $\mathbb{E}[\langle u, \nabla \ell(w, x)\rangle^2]\leq \tau=O(1)$. Thus, our assumption on the moment is reasonable. Thirdly, we note that $\tau$ may be not a constant, it could depend on the structure of the loss function, data distribution and the underlying structure of $\mathcal{W}$ \cite{DBLP:journals/corr/abs-2010-13520}.  Throughout the whole paper we assume $\tau$ is known, which is commonly used in other related work in robust statistics such as  \cite{kamath2020private,bubeck2013bandits}. 

Our approach, namely Heavy-tailed DP-FW, could be seen as a generalization of the DP Frank-Wolfe method in \cite{talwar2015nearly}. The approach is motivated by a robust mean estimator for heavy-tailed distribution given by \cite{catoni2017dimension} which was extended by \cite{holland2019better}. For simplicity, we first consider a $1$-dimensional random variable $x$ and assume that $x_1, x_2, \cdots, x_n$ are i.i.d. sampled from $x$. The robust mean estimator consists of three steps:

\noindent \textbf{Scaling and Truncation} For each sample $x_i$, we first re-scale it by dividing $s$ (which will be specified later). Then, the re-scaled one was passed through a  soft truncation function $\phi$. Finally, we put the truncated mean back to the original scale. That is, 
\begin{equation}\label{eq:1}
    \frac{s}{n}\sum_{i=1}^n \phi(\frac{x_i}{s})\approx \mathbb{E}x.
\end{equation}
Here, we use the function given in \cite{catoni2017dimension},
\begin{equation}\label{eq:2}
    \phi(x)= \begin{cases} x-\frac{x^3}{6}, & -\sqrt{2}\leq x\leq \sqrt{2} \\
    \frac{2\sqrt{2}}{3}, & x>\sqrt{2} \\
    -\frac{2\sqrt{2}}{3}, & x<-\sqrt{2}.   
    \end{cases}
\end{equation}
A key property for $\phi$ is that $\phi$ is bounded, that is,  $|\phi(x)|\leq \frac{2\sqrt{2}}{3}$.

\noindent \textbf{Noise Multiplication} Let $\eta_1, \eta_2, \cdots, \eta_n$ be random noise generated from a common distribution $\eta\sim \chi$ with $\mathbb{E}\eta =0$. We multiply each data $x_i$ by a factor of $1+\eta_i$, and then perform the scaling and truncation step on the term $x_i(1+\eta_i)$.  
That is, 
\begin{equation}\label{eq:3}
    \tilde{x}(\eta) =\frac{s}{n}\sum_{i=1}^n \phi(\frac{x_i+\eta_i x_i}{s}).
\end{equation}
\noindent \textbf{Noise Smoothing} In this final step, we smooth the multiplicative noise by taking the expectation w.r.t. the distributions. In total the robust mean estimator $\hat{x}(s, \beta)$ could be written as,
\begin{equation}\label{eq:4}
    \hat{x}(s, \beta)=\mathbb{E}  \tilde{x}(\eta, s, \beta) = \frac{s}{n}\sum_{i=1}^n \int \phi(\frac{x_i+\eta_i x_i}{s})d \chi(\eta_i).
\end{equation}
Computing the explicit form of each integral in (\ref{eq:4}) depends on the function $\phi(\cdot)$ and the distribution $\chi$. Fortunately, \cite{catoni2017dimension} showed that when $\phi$ is in (\ref{eq:2}) and $\chi\sim \mathcal{N}(0, \frac{1}{\beta})$ (where $\beta$ will be specified later), we have for any $a$ and  $b>0$
\begin{equation}\label{eq:5}
    \mathbb{E}_{\eta}\phi(a+b\sqrt{\beta}\eta)=a(1-\frac{b^2}{2})-\frac{a^3}{6}+\hat{C}(a,b),
\end{equation}
where $\hat{C}(a, b)$ is a correction form which is easy to implement and its explicit form will be given in Appendix.

The key idea of our method is that, by the definition of $ \hat{x}(s, \beta)$ in (\ref{eq:4}) and the function $\phi$ is in (\ref{eq:2}), we can see that the value of $\hat{x}(s, \beta)$ will be changed at most $\frac{4\sqrt{2}s}{3n}$ if we change one sample in the data, {\em i.e.,} the sensitivity of $\hat{x}(s, \beta)$ is bounded by $\frac{4\sqrt{2}s}{3n}$. That is, given a fixed vector $w$ and $n$ gradients $\{\nabla \ell(w, z_i)\}_{i=1}^n$, we can use the above estimator to the entrywise of these gradients to get an estimator (we denote it as $\tilde{g}(w, D)$) of $\mathbb{E}[\ell(w, z)]$. Moreover, we can see the $\ell_\infty$-norm sensitivity of $\tilde{g}(w,  D)$ is bounded $\frac{4\sqrt{2}s}{3n}$, {\em i.e.,} $\|\tilde{g}(w, D)-\tilde{g}(w, D')\|_\infty\leq \frac{4\sqrt{2}s}{3n}$, where $D$ and $D'$ are neighboring datasets. Combing this result with DP Frank Wolfe method, we propose our algorithm. See Algorithm \ref{alg:1} for details. 

\begin{algorithm}
	\caption{Heavy-tailed DP-FW \label{alg:1}}
	\begin{algorithmic}[1]
		\State {\bfseries Input:} $n$-size dataset $D$, loss function $\ell(\cdot, \cdot)$, initial parameter $w^0$, parameters $s, T, \beta, \{\eta_t\}_{t}$ (will be specified later), privacy parameter $\epsilon$, failure probability $\zeta$.  $\mathcal{W}$ is the convex hull of a finite set $V$.
        \State Split the data $D$ into $T$ parts $\{D_{t}\}_{t=1}^{T}$ with $|D_t|=m=\frac{n}{T}$.
        \For{$t=1, \cdots, T$}
           \State For each $j\in [d]$, calculate the robust gradient by (\ref{eq:2})-(\ref{eq:5}), that is 
    \begin{align*}
        &g_j^{t-1}(w^{t-1}, D_t)\\
        &= \frac{1}{m}\sum_{x\in D_{t}} \left(\nabla_j \ell(w^{t-1}, x)\big(1-\frac{\nabla^2_j \ell(w^{t-1}, x)}{2s^2\beta}\big)- \frac{\nabla^3_j \ell(w^{t-1}, x)}{6s^2}\right)\\&+\frac{s}{m}\sum_{x\in D_{t}}\hat{C}\left(\frac{\nabla_j \ell(w^{t-1}, x)}{s}, \frac{|\nabla_j \ell(w^{t-1}, x)|}{s\sqrt{\beta}}\right).
    \end{align*}
      \State Let vector $\tilde{g}(w^{t-1}, D_t)\in \mathbb{R}^d$ as 
     $\tilde{g}(w^{t-1}, D_t)=(g_1^{t-1}(w^{t-1}, D_t), g_2^{t-1}(w^{t-1}, D_t), \cdots, g_d^{t-1}(w^{t-1}, D_t))$. 
        \State Denote the score function $u(D_{t}, \cdot): V\mapsto \mathbb{R}$ such that for each $v\in V$ let $u(D_{t}, v)= -\langle v, \tilde{g}(w^{t-1}, D_t)\rangle$. Run the exponential mechanism with the range $R=V$, sensitivity $\Delta=\frac{4\|\mathcal{W}\|_1\sqrt{2}s}{3m}$ and  the privacy budget $\epsilon$.  Denote the output as $\tilde{w}^{t-1}\in \mathcal{W}$. 
        \State Let $w^t=(1-\eta_{t-1}) w^{t-1}+\eta_{t-1}\tilde{w}^{t-1}.$
        \EndFor \\
        \Return $w^{T}$.
	\end{algorithmic}
\end{algorithm}
\begin{theorem}\label{thm:1}
For any $\epsilon>0$, Algorithm \ref{alg:1} is $\epsilon$-DP. 
\end{theorem}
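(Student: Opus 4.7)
The plan is to exploit the disjoint partitioning of $D$ into $\{D_t\}_{t=1}^T$ to reduce the global privacy guarantee to a per-iteration one, and then inside each iteration reduce everything to a single exponential-mechanism call. Concretely, for any two neighboring datasets $D\sim D'$, the differing record lies in exactly one batch $D_{t^\star}$, and $D_t=D_t'$ for $t\neq t^\star$. The update rule $w^t=(1-\eta_{t-1})w^{t-1}+\eta_{t-1}\tilde w^{t-1}$ is deterministic post-processing of $\tilde w^{t-1}$, so the only data-touching step of iteration $t$ is the exponential mechanism over $V$ with score $u(D_t,v)=-\langle v,\tilde g(w^{t-1},D_t)\rangle$. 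If each such exponential-mechanism call is $\epsilon$-DP with respect to its own batch, then because data records are used in exactly one batch, the composition of all $T$ calls is still $\epsilon$-DP (parallel composition). So the whole proof collapses to verifying the stated sensitivity $\Delta=\tfrac{4\|\mathcal W\|_1 \sqrt{2}s}{3m}$ and invoking Definition \ref{def:4}.

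The first substantive step I would carry out is to bound the coordinate-wise sensitivity of $\tilde g(w^{t-1},D_t)$. Using identity (\ref{eq:5}), each entry can be rewritten compactly as $g_j^{t-1}(w^{t-1},D_t)=\tfrac{s}{m}\sum_{x\in D_t}\mathbb E_\eta\,\phi\!\left(\tfrac{\nabla_j \ell(w^{t-1},x)(1+\eta)}{s}\right)$, exposing the fact that the explicit cubic-plus-correction formula in the algorithm is just an averaged Catoni truncation. The key property is that $\phi$ defined in (\ref{eq:2}) satisfies $|\phi|\le\tfrac{2\sqrt 2}{3}$, hence $|\mathbb E_\eta \phi|\le\tfrac{2\sqrt 2}{3}$ too. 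Replacing one sample in $D_t$ modifies exactly one summand, so $|g_j^{t-1}(w^{t-1},D_t)-g_j^{t-1}(w^{t-1},D_t')|\le \tfrac{s}{m}\cdot\tfrac{4\sqrt 2}{3}=\tfrac{4\sqrt 2 s}{3m}$ uniformly in $j$. This yields $\|\tilde g(w^{t-1},D_t)-\tilde g(w^{t-1},D_t')\|_\infty\le \tfrac{4\sqrt 2 s}{3m}$.

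The final step turns this $\ell_\infty$ gradient bound into a scalar sensitivity for the score $u(D_t,\cdot)$, and this is the step that needs the most care. A naive application of Hölder's inequality produces $|u(D_t,v)-u(D_t',v)|\le \|v\|_1\cdot\tfrac{4\sqrt 2 s}{3m}$, which is unsatisfactory because $\|v\|_1$ need not be controlled by the $\ell_1$-diameter $\|\mathcal W\|_1$ for polytopes far from the origin. The fix is to use the well-known fact that the exponential mechanism is invariant under shifting the score by an arbitrary $v$-independent quantity: pick any anchor $v_0\in V$ and replace $u(D_t,v)$ with $\tilde u(D_t,v):=-\langle v-v_0,\tilde g(w^{t-1},D_t)\rangle$. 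The output distribution is unchanged, and now
\begin{equation*}
|\tilde u(D_t,v)-\tilde u(D_t',v)|\le \|v-v_0\|_1\,\|\tilde g(w^{t-1},D_t)-\tilde g(w^{t-1},D_t')\|_\infty\le \|\mathcal W\|_1\cdot\frac{4\sqrt 2 s}{3m}=\Delta,
\end{equation*}
because $\|v-v_0\|_1\le\max_{u,v'\in\mathcal W}\|u-v'\|_1=\|\mathcal W\|_1$. Definition \ref{def:4} then certifies each iteration as $\epsilon$-DP, and parallel composition over the disjoint $\{D_t\}$ finishes the proof. I expect this anchor-shift argument to be the main obstacle in a fully rigorous write-up—every other step is a direct bookkeeping calculation driven by the uniform bound on $\phi$.
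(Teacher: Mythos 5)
Your proof follows the same skeleton as the paper's: bound the $\ell_\infty$-sensitivity of the Catoni-averaged gradient by $\frac{4\sqrt 2 s}{3m}$ via $|\phi|\le\frac{2\sqrt 2}{3}$, convert to a sensitivity for the linear score via H\"older, invoke the exponential mechanism, and close with parallel composition over the disjoint batches $\{D_t\}$. The one place you diverge is a genuine refinement rather than a different route. The paper's proof writes $|u(D_t,v)-u(D'_t,v)|\le\|v\|_1\|\tilde g(w^{t-1},D_t)-\tilde g(w^{t-1},D'_t)\|_\infty\le\|\mathcal W\|_1\cdot\frac{4\sqrt 2 s}{3m}$ and silently uses $\|v\|_1\le\|\mathcal W\|_1$. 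Since the paper defines $\|\mathcal W\|_1$ as the $\ell_1$-\emph{diameter} $\max_{u,v\in\mathcal W}\|u-v\|_1$, this inequality only holds if, e.g., $0\in\mathcal W$ or by a margin-of-error symmetry argument; for a general polytope translated away from the origin it can fail. Your anchor-shift fix (replace the score by $\tilde u(D_t,v)=-\langle v-v_0,\tilde g\rangle$ for some fixed $v_0\in V$; the exponential mechanism's output distribution is invariant to adding any $v$-independent constant, and the new score has sensitivity at most $\|v-v_0\|_1\cdot\frac{4\sqrt 2 s}{3m}\le\|\mathcal W\|_1\cdot\frac{4\sqrt 2 s}{3m}$) closes this gap cleanly and justifies the theorem for arbitrary polytopes, while the paper's argument is strictly correct only when $\max_{v\in V}\|v\|_1\le\|\mathcal W\|_1$. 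Both yield the same $\Delta$ used in the algorithm, so the algorithm itself is unchanged; your version is simply more careful about why the stated sensitivity is valid.
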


\begin{theorem}\label{thm:2}
Under Assumption \ref{ass:1} and if $\mathcal{W}$ is a  convex hull of a finite compact set $V$. Then for any given probability of failure $0<\zeta<1$, if we set $T=\tilde{O}\big( (\frac{n\epsilon \alpha^2}{\tau\log \frac{|V|d}{\zeta}})^\frac{1}{3}  \big)$, $\beta=O(1), s=O(\sqrt{\frac{n\epsilon\tau }{T \log \frac{|V|dT}{\zeta}}})$ and $\eta_{t-1}=\frac{2}{t+2}$  in Algorithm \ref{alg:1}, with probability at least $1-\zeta$, 
 \begin{equation}\label{eq:7}
          L_\mathcal{D}(w^T)-\min_{w\in \mathcal{W}} L_\mathcal{D}(w)\leq O \big( \frac{\|\mathcal{W}\|_1 (\alpha\tau\log \frac{n|V|d}{\zeta} )^\frac{1}{3} }{(n\epsilon)^\frac{1}{3}}\big).
   \end{equation}
\end{theorem}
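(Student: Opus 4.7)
The plan is to decompose the excess risk into three sources: (i) the Frank--Wolfe optimization error coming from iterating a noisy linear minimization oracle on a smooth objective, (ii) the per-iteration exponential--mechanism utility gap of size $\tilde{O}(\Delta \log|V|/\epsilon)$ with $\Delta = 4\sqrt{2}\|\mathcal{W}\|_1 s/(3m)$, and (iii) the $\ell_\infty$ estimation error $\|\tilde{g}(w^{t-1},D_t)-\nabla L_\mathcal{D}(w^{t-1})\|_\infty$ incurred by the coordinatewise Catoni-style robust mean estimator. I would first show that the three errors can be composed cleanly in the Frank--Wolfe recursion, and only then optimize $s$, $\beta$, and $T$.

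The first technical step is to control the coordinatewise estimation error. Fix a coordinate $j$ and iteration $t$. Because $\phi$ is bounded by $\tfrac{2\sqrt{2}}{3}$ and satisfies the key inequality $-\log(1-x+x^2/2)\le \phi(x)\le \log(1+x+x^2/2)$ of Catoni, a Chernoff argument on the multiplicatively--perturbed sum in (\ref{eq:4}), followed by taking expectation over $\eta\sim\mathcal{N}(0,\beta^{-1})$ to recover the closed form in (\ref{eq:5}), yields that with probability at least $1-\zeta'$,
\begin{equation*}
  \bigl|g^{t-1}_j(w^{t-1},D_t)-\nabla_j L_\mathcal{D}(w^{t-1})\bigr|
  \;\le\; C\left(\frac{\tau}{s}\Bigl(1+\frac{1}{\beta}\Bigr)+\frac{s\log(2/\zeta')}{m}\right),
\end{equation*}
uniformly for any fixed $w^{t-1}$; here I use Assumption~\ref{ass:1} to supply $\mathbb{E}[(\nabla_j\ell)^2]\le\tau$. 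Setting $\zeta'=\zeta/(dT)$ and taking a union bound over the $d$ coordinates and $T$ rounds delivers, with probability $1-\zeta/2$,
\begin{equation*}
  \xi \;:=\; \max_{t}\bigl\|\tilde{g}(w^{t-1},D_t)-\nabla L_\mathcal{D}(w^{t-1})\bigr\|_\infty
  \;\le\; \tilde{O}\!\left(\frac{\tau}{s}+\frac{s\log(dT/\zeta)}{m}\right),
\end{equation*}
where the constant $\beta=O(1)$ is fixed. Simultaneously, Lemma~\ref{lemma:exp} applied to the exponential mechanism with sensitivity $\Delta=4\sqrt{2}\|\mathcal{W}\|_1 s/(3m)$ gives, via a union bound over $T$ rounds and probability $\zeta/2$, a per-round linear-minimization gap $\gamma=\tilde{O}(\|\mathcal{W}\|_1 s\log(|V|T/\zeta)/(m\epsilon))$.

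Next I would run the standard approximate Frank--Wolfe analysis using $\alpha$-smoothness. For any iterate, the noisy vertex $\tilde{w}^{t-1}$ satisfies $\langle\tilde{w}^{t-1},\tilde g\rangle\le\langle v^\ast,\tilde g\rangle+\gamma$ and Cauchy--Schwarz in $(\ell_1,\ell_\infty)$ bounds $|\langle v,\tilde g-\nabla L_\mathcal{D}\rangle|\le\|\mathcal{W}\|_1\xi$, so $\langle\tilde{w}^{t-1}-w^\ast,\nabla L_\mathcal{D}(w^{t-1})\rangle\le 2\|\mathcal{W}\|_1\xi+\gamma$. Combining with $\alpha$-smoothness and $\|\tilde{w}^{t-1}-w^{t-1}\|_2\le\|\mathcal{W}\|_1$ produces the recursion
\begin{equation*}
  h_t \;\le\; (1-\eta_{t-1})h_{t-1} + \eta_{t-1}\bigl(2\|\mathcal{W}\|_1\xi+\gamma\bigr) + \tfrac{\alpha\|\mathcal{W}\|_1^2}{2}\eta_{t-1}^2,
\end{equation*}
for $h_t=L_\mathcal{D}(w^t)-\min_\mathcal{W} L_\mathcal{D}$. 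With $\eta_{t-1}=2/(t+2)$ an easy induction gives $h_T=O\bigl(\alpha\|\mathcal{W}\|_1^2/T+\|\mathcal{W}\|_1\xi+\gamma\bigr)$. Substituting the expressions for $\xi$ and $\gamma$ (dropping the non-private $s/m$ piece, which is dominated by $s/(m\epsilon)$), balancing the $\tau/s$ and $\|\mathcal{W}\|_1 sT/(n\epsilon)$ terms yields the choice $s=\Theta(\sqrt{n\epsilon\tau/(T\log(|V|dT/\zeta))})$, and then balancing $\alpha\|\mathcal{W}\|_1^2/T$ against $\|\mathcal{W}\|_1\sqrt{\tau T\log/(n\epsilon)}$ gives $T=\Theta((n\epsilon\alpha^2/(\tau\log))^{1/3})$, leading to the claimed $\tilde O\bigl(\|\mathcal{W}\|_1(\alpha\tau\log(n|V|d/\zeta))^{1/3}/(n\epsilon)^{1/3}\bigr)$ bound.

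The main obstacle I anticipate is the coordinatewise concentration step: one must check that the explicit Catoni expansion (\ref{eq:5}) plus the correction term $\hat C$ retains the bias/variance tradeoff under \emph{only} a second--moment assumption (no sub-Gaussianity), and that the concentration is uniform across all $T$ iterates even though the $w^{t-1}$ depends on the earlier data. Data splitting into fresh batches $D_t$ resolves the latter (each $w^{t-1}$ is independent of $D_t$), so the per-round bound can be applied conditionally and then union-bounded; careful bookkeeping of the $\log(dT/\zeta)$ factor in the choice of $s$ is what ultimately produces the $\log$ factors inside the final rate.
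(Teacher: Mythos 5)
Your proposal is correct and follows essentially the same route as the paper: a Catoni-style high-probability $\ell_\infty$ bound on the coordinatewise robust gradient estimator under only a second-moment assumption (the paper's Lemma~\ref{lemma:3}, obtained via the Legendre/cumulant-generating-function argument you sketch), a union bound over coordinates and rounds enabled by data splitting, the exponential-mechanism utility guarantee (Lemma~\ref{lemma:exp}) combined into a single additive per-round oracle error (the paper's Lemma~\ref{lemma:4}), an approximate Frank--Wolfe recursion with step size $2/(t+2)$ (the paper cites Jaggi's curvature-constant formulation in Lemma~\ref{lemma:5} where you re-derive the induction directly), and the same balancing of $s$ and $T$. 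The only cosmetic discrepancy is that your recursion correctly carries $\alpha\|\mathcal{W}\|_1^2$ where the paper's Lemma~\ref{lemma:5} writes $\Gamma_L \le \alpha\|\mathcal{W}\|_1$, but this does not affect the stated rate since $\|\mathcal{W}\|_1=O(1)$ in the target applications.
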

\begin{remark}\label{remark:1}
From Theorem \ref{thm:2} we can see that when $|V|=\text{poly}(d)$ and $\tau=O(1)$, the excess population risk will be upper bounded by $\tilde{O}(\frac{1}{(n\epsilon)^\frac{1}{3}})$. Compared with the previous results in private heavy-tailed estimation \cite{kamath2020private,brunel2020propose,wang2020differentially}, we improve the error bound from $O(d)$ to $O(\log d)$. It is also notable that \cite{wang2020differentially} also used a similar robust estimator as ours. However, there are several differences: First, \cite{wang2020differentially} first performs the robust estimator to each coordinate of the gradients and then add Gaussian noise to the whole vector to ensure DP. Thus, all the errors in   \cite{wang2020differentially} depend on $\text{poly}(d)$ and their method cannot be extended to high dimensional space directly. Secondly, \cite{wang2020differentially} sets $s=O(\sqrt{n})$ while our $s$ depends on both $n, \epsilon$ and $T$. We provide a finer analysis on the  trade-off between the bias and variance of the robust estimator,  and the noise we added in each iteration (see the proof of Theorem \ref{thm:2} for details). Thus, our error is much lower than theirs and our method could be used in \cite{wang2020differentially} and improve their bounds. 
\end{remark}
\begin{corollary}\label{cor:1}
Consider the LASSO problem where $L_D(w)=\mathbb{E}(\langle x, w\rangle -y)^2$ and $\mathcal{W}=\{w\in \mathbb{R}^d: \|w\|_1\leq 1\}$. We know that  the population risk function is $\lambda_{\max}(\mathbb{E}(xx^T))$-smooth, where $\lambda_{\max}(M)$ is the maximal eigenvalue of the matrix $M$. If we further assume each coordinate of the gradient has bound second moment {\em i.e.,} for each $w\in \mathcal{W}$ and $j\in [d] $, $\mathbb{E}[(x_j(\langle x, w\rangle -y))^2]\leq O(1)$ (for example $x_j$ and $y$ are $O(1)$-sub-Gaussian). Then the output of Algorithm \ref{alg:1} satisfies the following with probability at least $1-\zeta$:
 \begin{equation}\label{eq:8}
          L_\mathcal{D}(w^T)-\min_{w\in \mathcal{W}} L_\mathcal{D}(w)\leq O \big( \frac{ (\lambda_{\max}(\mathbb{E}(xx^T)\log \frac{d}{\zeta}\log n )^\frac{1}{3} }{(n\epsilon)^\frac{1}{3}}\big).
   \end{equation}
\end{corollary}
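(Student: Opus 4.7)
The plan is to verify that the LASSO setup satisfies the hypotheses of Theorem \ref{thm:2} and then read off the bound by direct substitution.

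First, I would compute the Hessian of $L_\mathcal{D}$. Since $\ell(w,(x,y)) = (\langle x,w\rangle - y)^2$ has Hessian $2xx^T$ in $w$, linearity of expectation gives $\nabla^2 L_\mathcal{D}(w) = 2\mathbb{E}[xx^T]$, so $L_\mathcal{D}$ is $\alpha$-smooth with $\alpha = 2\lambda_{\max}(\mathbb{E}[xx^T])$. The $j$-th component of the stochastic gradient is $2x_j(\langle x,w\rangle - y)$, and the coordinate-wise second moment hypothesis supplies the condition $\mathbb{E}[(\nabla_j \ell(w,(x,y)))^2] \leq \tau$ with $\tau = O(1)$, uniformly in $w \in \mathcal{W}$. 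Hence Assumption \ref{ass:1} is satisfied.

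Second, I would identify the vertex structure of the constraint set. The $\ell_1$-ball $\mathcal{W} = \{w : \|w\|_1 \leq 1\}$ is exactly the convex hull of the finite set $V = \{\pm e_j : j \in [d]\}$, so $|V| = 2d$, and its $\ell_1$-diameter is $\|\mathcal{W}\|_1 = 2 = O(1)$. Thus the polytope hypothesis of Theorem \ref{thm:2} applies with these parameters.

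Finally, I would plug $\alpha = O(\lambda_{\max}(\mathbb{E}[xx^T]))$, $\tau = O(1)$, $|V| = 2d$, and $\|\mathcal{W}\|_1 = O(1)$ into \eqref{eq:7}, which yields
\[
L_\mathcal{D}(w^T) - \min_{w \in \mathcal{W}} L_\mathcal{D}(w) \;\leq\; O\!\left(\frac{\bigl(\lambda_{\max}(\mathbb{E}[xx^T])\,\log(nd/\zeta)\bigr)^{1/3}}{(n\epsilon)^{1/3}}\right),
\]
which matches \eqref{eq:8} after collecting logarithmic factors (using $\log(n|V|d/\zeta) = \log(2nd^2/\zeta) = O(\log(d/\zeta)\log n)$). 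There is essentially no obstacle: the corollary is a pure specialization of Theorem \ref{thm:2}. The only mild check is that the bounded coordinate-wise second moment hypothesis holds uniformly over $w \in \mathcal{W}$, which follows from expanding $x_j(\langle x,w\rangle - y)$ together with $\|w\|_1 \leq 1$ and the sub-Gaussian assumptions on $x_j$ and $y$.
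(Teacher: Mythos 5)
Your proposal is correct and follows exactly the intended route: the paper gives no separate proof of Corollary \ref{cor:1} because it is a direct instantiation of Theorem \ref{thm:2}, and you verify the same three items (smoothness via $\nabla^2 L_\mathcal{D}=2\mathbb{E}[xx^T]$, the coordinate-wise second-moment bound giving $\tau=O(1)$, and $V=\{\pm e_j\}$ with $|V|=2d$, $\|\mathcal{W}\|_1=2$) before substituting into \eqref{eq:7}. The only caveat, which is harmless, is that the step $\log(n|V|d/\zeta)=O(\log(d/\zeta)\log n)$ is a deliberate loosening ($a+b\le 2ab$ for $a,b\ge 1$) rather than an equality, but it exactly reproduces the form of \eqref{eq:8}.
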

In the previous theorem, we need to assume the loss function is convex. However, we can also show that Algorithm \ref{alg:1} could be used to some specific non-convex loss functions. Below we will study the Robust Regression and provide an upper bound of $\tilde{O}(\frac{1}{(n\epsilon)^\frac{1}{4}})$.  For $\mathcal{W}=\{w\in\mathbb{R}^d| \|w\|_1\leq 1\}$, 
 and  a non-convex positive loss function $\psi$, the loss of robust regression  is defined as $\ell(w, (x,y))=\psi(\langle x, w \rangle-y)$. We make the following assumptions on $\psi$, which includes the biweight loss function \footnote{For a fixed parameter $c>0$, the biweight loss is defined as $\psi(s)=\frac{c^2}{6}\cdot\begin{cases}
1-(1-(\frac{s}{c})^2)^3, |t|\leq c\\
1, |t|\geq c.
\end{cases}$ }
\cite{loh2013regularized}.
\begin{assumption}\label{ass:2}
We assume that 
\begin{enumerate}
    \item There is a constant $ C_\psi \geq 1$, s.t. $\max\{\psi'(s), \psi''(s)\}\leq C_\psi=O(1), \text{ for }$ for all $s$.
    \item $\psi'(\cdot)$ is odd with $\psi'(s)>0, \text{ for } \forall s>0$; and $h(s):=\mathbb{E}_\xi[\psi'(s+\xi)] $ satisfies $h'(0)>c_\psi$, where $c_\psi=O(1)>0$.
    \item There is $w^*\in \mathcal{W}$  such that $y=\langle w^*, x\rangle +\xi$, where $\xi$ is symmetric noise with a zero-mean given $x$. Also we assume that for each coordinate $j\in [d]$, $x_j$ has bounded second order moment, that is $\mathbb{E} x_j^2 \leq O(1)$. 
\end{enumerate}
\end{assumption}
\begin{theorem}\label{thm:3}
Under Assumption \ref{ass:2}, for any given probability of failure $0<\zeta<1$, if we set 	$\beta=O(1)$, $s=O(\frac{\sqrt{n\epsilon}}{ \sqrt{T\log \frac{dT}{\zeta}}})$, $\eta=\frac{1}{\sqrt{T}}$, and $T=\tilde{O}(\sqrt{\frac{n\epsilon}{\log \frac{d}{\zeta}}})$ in Algorithm \ref{alg:1}. Then with probability at least $1-\zeta$ (we omit the $C_\psi$ and $c_\psi$ term), 
     \begin{equation}\label{eq:9}
          L_\mathcal{D}(w^T)-\min_{w\in \mathcal{W}} L_\mathcal{D}(w)\leq O \big( \frac{\lambda_{\max}(\mathbb{E}(xx^T))\log^\frac{1}{4}\frac{dn}{\zeta} }{(n\epsilon)^\frac{1}{4}}.  
          \big).
   \end{equation}

\end{theorem}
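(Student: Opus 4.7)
The proof follows the same privacy-first, utility-second template as Theorem~\ref{thm:2}: privacy is inherited from Theorem~\ref{thm:1}, so everything happens in the utility analysis. The first step is per-coordinate concentration of the Catoni-style estimator $\tilde g(w^{t-1},D_t)$. Under Assumption~\ref{ass:2}(1) and (3) each coordinate of the gradient is $\nabla_j\ell(w,(x,y)) = x_j\psi'(\langle w,x\rangle - y)$ with $|\psi'|\leq C_\psi$ and $\mathbb{E}[x_j^2]=O(1)$, so its second moment is $O(1)$. Applying the Catoni-type concentration bound used in \cite{holland2019better}, together with a union bound over $d$ coordinates and $T$ rounds, gives, with probability at least $1-\zeta/2$,
\[
\max_{t\leq T}\|\tilde g(w^{t-1},D_t)-\nabla L_{\mathcal{D}}(w^{t-1})\|_\infty =\tilde O\!\left(\tfrac{1}{s}+\tfrac{s\log(dT/\zeta)}{m}\right),\qquad m=n/T.
\]
Lemma~\ref{lemma:exp} applied to the exponential mechanism with sensitivity $O(\|\mathcal{W}\|_1 s/m)$ then shows that the selected $\tilde w^{t-1}\in V$ is an $\epsilon'$-approximate minimizer of the \emph{true} linearization $v\mapsto\langle\nabla L_{\mathcal{D}}(w^{t-1}),v\rangle$ with $\epsilon'=\tilde O\bigl(\tfrac{\|\mathcal{W}\|_1}{s}+\tfrac{\|\mathcal{W}\|_1 sT\log(d|V|T/\zeta)}{n\epsilon}\bigr)$.

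Next I would feed this into a non-convex Frank--Wolfe descent argument. Smoothness of $L_{\mathcal{D}}$ with constant $\alpha=O(C_\psi\lambda_{\max}(\mathbb{E}[xx^T]))$ (from $|\psi''|\leq C_\psi$ and the bounded second moment of $x_j$) gives
\[
L_{\mathcal{D}}(w^t)\leq L_{\mathcal{D}}(w^{t-1})-\eta\,\mathrm{gap}(w^{t-1})+\eta\,\epsilon'+\tfrac{\eta^2\alpha\|\mathcal{W}\|_1^2}{2},
\]
where $\mathrm{gap}(w)=\max_{v\in V}\langle\nabla L_{\mathcal{D}}(w),w-v\rangle$. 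Telescoping with $\eta=1/\sqrt T$ and using that $L_{\mathcal{D}}$ is bounded ($\psi$ is bounded under Assumption~\ref{ass:2}) yields $\tfrac{1}{T}\sum_t\mathrm{gap}(w^{t-1})=\tilde O(1/\sqrt T + \epsilon')$. Balancing $\tfrac{1}{s}\asymp\tfrac{sT}{n\epsilon}$ picks out $s\asymp\sqrt{n\epsilon/T}$ and reduces $\epsilon'$ to $\tilde O(\sqrt{T/(n\epsilon)})$; then balancing $\tfrac{1}{\sqrt T}\asymp\sqrt{T/(n\epsilon)}$ picks out $T\asymp\sqrt{n\epsilon/\log(d/\zeta)}$ and produces a mean Frank--Wolfe gap of order $\tilde O((n\epsilon)^{-1/4})$, matching the parameter choices in the theorem.

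The final, and hardest, step is converting a gap-type guarantee into an excess \emph{population risk} bound on the output $w^T$. Under Assumption~\ref{ass:2}, the symmetry of $\xi$ and oddness of $\psi'$ give $\nabla L_{\mathcal{D}}(w^*)=0$, and a Taylor expansion of $\psi$ around $-\xi$ combined with $\mathbb{E}[\psi'(-\xi)\mid x]=0$ gives the global quadratic upper bound $L_{\mathcal{D}}(w)-L_{\mathcal{D}}(w^*)\leq \tfrac{C_\psi}{2}(w-w^*)^\top\mathbb{E}[xx^T](w-w^*)$, while Assumption~\ref{ass:2}(2) yields $\nabla^2L_{\mathcal{D}}(w^*)=h'(0)\mathbb{E}[xx^T]$ and hence a restricted-strong-convexity inequality $\langle\nabla L_{\mathcal{D}}(w),w-w^*\rangle\geq c\|w-w^*\|_2^2$ on a neighborhood of $w^*$. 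Since $w^*\in\mathcal{W}$, this local RSC gives $\mathrm{gap}(w)\geq\langle\nabla L_{\mathcal{D}}(w),w-w^*\rangle\geq c\|w-w^*\|_2^2$, so a small gap implies small $\|w-w^*\|_2^2$, which via the quadratic upper bound gives $L_{\mathcal{D}}(w)-L_{\mathcal{D}}(w^*)\lesssim\lambda_{\max}(\mathbb{E}[xx^T])\cdot\mathrm{gap}(w)/c$. The missing piece is to show that this bound applies to $w^T$ specifically, not merely to the argmin iterate: the plan is to combine the monotone-descent inequality above with the boundedness of $L_{\mathcal{D}}$ to argue that after the gap has once become small the trajectory remains in the RSC basin, so $\mathrm{gap}(w^T)$ is of the same order as the averaged gap. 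This \emph{last-iterate, gap-to-risk} conversion is the main obstacle, because unlike in Theorem~\ref{thm:2} we cannot use convexity to directly bound $L_{\mathcal{D}}(w^T)-L_{\mathcal{D}}(w^*)$ by $\mathrm{gap}(w^T)$; once accomplished, plugging in $\mathrm{gap}(w^T)=\tilde O((n\epsilon)^{-1/4})$ yields the advertised rate in \eqref{eq:9}.
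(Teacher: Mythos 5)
Your concentration bounds, exponential-mechanism utility, smoothness constant $\alpha = O(C_\psi \lambda_{\max}(\mathbb{E}[xx^T]))$, and the parameter balancing $s \asymp \sqrt{n\epsilon/T}$, $T \asymp \sqrt{n\epsilon}$, $\eta = 1/\sqrt{T}$ all match the paper. The problem is the structure of the descent analysis: you telescope to bound the \emph{averaged} Frank--Wolfe gap and then try to convert that to a \emph{last-iterate} excess-risk bound, and you explicitly flag this conversion as unsolved. That flag is justified --- an averaged-gap bound does not control $\mathrm{gap}(w^T)$, and the basin-tracking argument you sketch would require additional non-trivial work (and a restriction of the Frank--Wolfe steps to a neighborhood of $w^*$ that Algorithm~\ref{alg:1} does not enforce). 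So the proposal as written has a genuine gap at exactly the step you identify as ``the main obstacle.''

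The paper sidesteps this entirely by never introducing a gap quantity. Its key tool is the global inequality (Lemma~\ref{lemma:6} in the appendix): under Assumption~\ref{ass:2},
\[
L_\mathcal{D}(w) - L_\mathcal{D}(w^*) \;\leq\; \frac{C_\psi}{2c_\psi}\,\big\langle \nabla L_\mathcal{D}(w),\, w - w^* \big\rangle
\qquad\text{for all } w \in \mathcal{W},
\]
obtained by combining the quadratic upper bound $L_\mathcal{D}(w)-L_\mathcal{D}(w^*)\leq\frac{C_\psi}{2}\mathbb{E}\langle w-w^*,x\rangle^2$ (which you have) with the global lower bound $\langle\nabla L_\mathcal{D}(w),w-w^*\rangle \geq c_\psi\,\mathbb{E}\langle w-w^*,x\rangle^2$, which follows from $h(u)u \geq c_\psi u^2$ using $h(0)=0$ and $h'(0)\geq c_\psi$ (the paper applies this globally, not just on a neighborhood as you do). This is then inserted \emph{inside} the one-step descent inequality: starting from smoothness, writing $w^t=(1-\eta)w^{t-1}+\eta\tilde w^{t-1}$, replacing $\tilde w^{t-1}$ by the exact minimizer $v^{t-1}$ at cost $\mu$, and using $\langle\nabla L_\mathcal{D}(w^{t-1}),v^{t-1}-w^{t-1}\rangle\leq\langle\nabla L_\mathcal{D}(w^{t-1}),w^*-w^{t-1}\rangle$ (since $w^*\in\mathcal{W}$), one gets
\[
\Delta_t \;\leq\; \Big(1-\tfrac{2c_\psi}{C_\psi}\eta\Big)\Delta_{t-1} + \tfrac{C_\psi\eta^2\lambda_{\max}}{2}\|\mathcal{W}\|_1^2 + \eta\mu,
\qquad \Delta_t := L_\mathcal{D}(w^t)-L_\mathcal{D}(w^*).
\]
This geometric contraction directly bounds the last iterate $\Delta_T$ after unrolling and using $(1-\eta)^T\leq\tfrac{1}{\eta T}$; no averaged-gap argument or basin-tracking is needed. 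In short: you had the right two ingredients (quadratic upper bound and the inner-product lower bound) but assembled them as a post-hoc gap-to-risk converter instead of as a PL-type inequality feeding a contraction recurrence, and that is precisely where the argument stalls.
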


For LASSO, there are enormous differences between our results and the results in \cite{talwar2015nearly}. First, \cite{talwar2015nearly} needs to assume that each $|x_{ij}|\leq O(1)$ and $|y_i|\leq O(1)$ to guarantee the loss function be $\ell_1$-norm Lipschitz, while here we just need a bounded second order moment condition.  Secondly,  \cite{talwar2015nearly} only considers the empirical risk  function while here we consider the population risk. It is notable that their method cannot be extended to population risk directly based on their theoretical analysis. Thus, our result of $\tilde{O}(\frac{1}{(n\epsilon)^\frac{1}{3}})$ cannot be compared with theirs directly. Recently \cite{asi2021private} considers  DP-SCO with $\ell_1$-norm Lipschitz loss functions and it provides an upper bound of $\tilde{O}(\frac{1}{  \sqrt{\epsilon n}})$ and $\tilde{O}(\sqrt{\frac{1}{n}}+\frac{1}{(n\epsilon)^\frac{2}{3}})$ for $\epsilon$ and $(\epsilon, \delta)$-DP model respectively. Compared with this, we can see, due to the heavy-tailed distribution, the upper bound now decreases to $\tilde{O}(\frac{1}{(n\epsilon)^\frac{1}{3}})$ for $\epsilon$-DP. 
Thirdly, the DP Frank Wolfe algorithm given by \cite{talwar2015nearly} could guarantee both $\epsilon$ and $(\epsilon, \delta)$-DP with error upper bounds   of $\tilde{O}({\frac{1}{\sqrt{n\epsilon}}})$ and $\tilde{O}(\frac{1}{(n\epsilon)^\frac{2}{3}})$ respectively.\footnote{We can adopt the idea in  \cite{talwar2015nearly} and get the result for $\epsilon$-DP} However, our method can only guarantee $\epsilon$-DP and cannot get improved bounds in the $(\epsilon, \delta)$-DP model.  The mainly reason is that, \cite{talwar2015nearly}  performs the exponential mechanism on the whole data to achieve $O(\frac{\epsilon}{\sqrt{T\log\frac{1}{\delta}}})$-DP in each iteration,  then the whole algorithm will be $(\epsilon, \delta)$-DP due to the advanced composition theorem. However here we cannot adopt this technique directly. The major difficulty is that if we use whole dataset in each iteration then $w^{t-1}$ will depend on the whole dataset. And this cause us in the proof to analyze an upper bound of $\sup_{v\in V}\sup_{w\in \mathcal{W}}\langle v, \tilde{g}(w, D)-\mathbb{E}[\nabla \ell(w; z)]\rangle $, which is difficult to analyze due to the complex form of our estimator $\tilde{g}(w, D)$ in step 5. Thus, we need to get avoid of the dependency. Our strategy is splitting the whole dataset into several parts and in each iteration we use the exponential mechanism on one subset. 
That is why here we  only consider the  $\epsilon$-DP model. 
It is an open problem that whether we can get an improved $(\epsilon, \delta)$-DP method in general. Below we will show that for LASSO it is possible to improve the upper bound from $\tilde{O}(\frac{1}{(n\epsilon)^\frac{1}{3}})$ in Corollary \ref{cor:1} to $\tilde{O}(\frac{1}{(n\epsilon)^\frac{2}{5}})$ in $(\epsilon, \delta)$-DP model if the data distribution has bounded fourth-order moments.

The algorithm consists of two parts. In the first part, motivated by \cite{fan2016shrinkage}, 
we shrunk each entry of each sample by a threshold $K$, which will be determined later. That is, for each $i\in [n]$ and $j\in [d]$, we let $\tilde{x}_{i, j}=\text{sign}(x_{i,j})\min\{|x_{i,j}|, K\}$ and $\tilde{y}_i=\text{sign}(y_i)\min\{|y_i|, K\}$.
Note that since now each entry is bounded, the loss function will be $\ell_1$-norm Lipschitz with $O(K^2)$. Thus, in the second part,  we perform the DP-FW in \cite{talwar2015nearly} on the shrunken data. See Algorithm \ref{alg:2} for details.

\begin{algorithm}
	\caption{Heavy-tailed Private LASSO \label{alg:2}}
	\begin{algorithmic}[1]
		\State {\bfseries Input:} $n$-size dataset $D=\{(x_i, y_i)\}_{i=1}^n$, loss function $\ell(w, (x,y))=(\langle w, x \rangle-y)^2$, initial parameter $w^0$, parameters $K, T, \{\eta_t\}$ (will be specified later), privacy parameter $\epsilon, \delta$, failure probability $\zeta$.  $\mathcal{W}$ is the $\ell_1$-norm ball with set of vertices $V$. 
\State For each $i\in [n]$, we denote a truncated sample $\tilde{x}_i\in \mathbb{R}^d$ where 
for $j\in [d]$ $\tilde{x}_{i, j}=\text{sign}(x_{i,j})\min\{|x_{i,j}|, K\}$, and $\tilde{y}_i=\text{sign}(y_i)\min\{|y_i|, K\}$. Denote the truncated dataset as $\tilde{D}=\{(\tilde{x}_i, \tilde{y}_i)\}_{i=1}^n$.
        \For{$t=1, \cdots, T$}. 
 
        \State Denote the score function $u(\tilde{D}, \cdot): V\mapsto \mathbb{R}$ such that for each $v\in V$ let $u(\tilde{D}, v)= -\langle v, \tilde{g}(w^{t-1}, \tilde{D})\rangle$, where 
     $\tilde{g}(w^{t-1}, \tilde{D})= \frac{2}{n}\sum_{i=1}^n \tilde{x}_i(\langle \tilde{x}_i,  w^{t-1} \rangle-\tilde{y}_i).$
        Run the exponential mechanism with the range $R=V$, sensitivity $\Delta=\frac{8\|\mathcal{W}\|_1 K^2}{n}$ and  the privacy budget $\frac{\epsilon}{2\sqrt{2T\log \frac{1}{\delta}}}$.  Denote the output as $\tilde{w}^{t-1}\in V$. 
        \State Let $w^t=(1-\eta_{t-1}) w^{t-1}+\eta_{t-1}\tilde{w}^{t-1}.$
        \EndFor \\
        \Return $w^{T}$.
	\end{algorithmic}
\end{algorithm}
\begin{theorem}\label{thm:4}
For any $0<\epsilon, \delta<1$, Algorithm \ref{alg:1} is $(\epsilon, \delta)$-DP. 
\end{theorem}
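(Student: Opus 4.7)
\textbf{Proof plan for Theorem \ref{thm:4}.} The strategy is to observe that the truncation step (line 2) is a deterministic function of each individual sample, so it is a per-record preprocessing transformation that does not by itself consume any privacy budget: if $D$ and $D'$ differ in a single record, then the truncated datasets $\tilde{D}$ and $\tilde{D}'$ also differ in exactly that one record. All subsequent algorithmic randomness lives in the $T$ invocations of the exponential mechanism, so the whole analysis reduces to (i) bounding the $\ell_\infty$-sensitivity of each score function $u(\tilde{D},\cdot)$ over the range $V$, and (ii) combining $T$ exponential mechanism calls via the advanced composition theorem (Lemma \ref{lemma:adv}).

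For step (i), the plan is to fix a vertex $v\in V$ and neighboring truncated datasets $\tilde{D},\tilde{D}'$ that differ in the $i$-th record, and bound
\[
|u(\tilde{D},v)-u(\tilde{D}',v)|=\Bigl|\bigl\langle v,\,\tilde{g}(w^{t-1},\tilde{D})-\tilde{g}(w^{t-1},\tilde{D}')\bigr\rangle\Bigr|.
\]
Only the $i$-th summand in $\tilde{g}$ changes, so I would use H\"older's inequality $|\langle v,\cdot\rangle|\leq \|\mathcal{W}\|_1\,\|\cdot\|_\infty$ (since $v\in V\subseteq\mathcal{W}$) together with the coordinate-wise bounds coming from the truncation: $|\tilde{x}_{i,j}|\leq K$, $|\tilde{y}_i|\leq K$, and $|\langle \tilde{x}_i,w^{t-1}\rangle|\leq K\|w^{t-1}\|_1\leq K$ because every iterate is a convex combination of vertices of the $\ell_1$-ball. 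Each residual $\langle \tilde{x}_i,w^{t-1}\rangle-\tilde{y}_i$ is therefore bounded by $2K$, each coordinate of $\tilde{x}_i(\langle \tilde{x}_i,w^{t-1}\rangle-\tilde{y}_i)$ by $2K^2$, and the factor $2/n$ with the triangle inequality on the two datasets yields exactly $\|\tilde{g}(w^{t-1},\tilde{D})-\tilde{g}(w^{t-1},\tilde{D}')\|_\infty\leq 8K^2/n$. Multiplying by $\|\mathcal{W}\|_1$ gives the claimed sensitivity $\Delta=\frac{8\|\mathcal{W}\|_1 K^2}{n}$.

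For step (ii), each iteration invokes the exponential mechanism with privacy parameter $\epsilon_0:=\frac{\epsilon}{2\sqrt{2T\log(1/\delta)}}$ and the sensitivity just computed, so by Definition \ref{def:4} it is $\epsilon_0$-DP (hence $(\epsilon_0,0)$-DP). The iterate update $w^t=(1-\eta_{t-1})w^{t-1}+\eta_{t-1}\tilde{w}^{t-1}$ is post-processing of $\tilde{w}^{t-1}$ and does not touch the dataset, so it is free. Composing $T$ such $(\epsilon_0,0)$-DP mechanisms via Lemma \ref{lemma:adv} with $\delta'=0$ yields $(\epsilon,\delta)$-DP overall (up to absorbing the $\log(2/\delta)$ versus $\log(1/\delta)$ discrepancy into the constants, exactly as is standard in the DP-FW literature).

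There is no real obstacle here beyond bookkeeping: the only mildly subtle point is that one must justify $\|w^{t-1}\|_1\leq 1$ at every step in order to bound the inner product $\langle \tilde{x}_i,w^{t-1}\rangle$, which I would handle by induction on $t$ using $w^0\in\mathcal{W}$, $\tilde{w}^{t-1}\in V\subseteq\mathcal{W}$, and convexity of $\mathcal{W}$. The remainder is a routine application of the exponential mechanism and advanced composition.
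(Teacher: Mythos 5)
Your proof is correct and follows essentially the same route as the paper's: bound the $\ell_\infty$-sensitivity of the score function at $\frac{8\|\mathcal{W}\|_1 K^2}{n}$ using the $K$-truncation together with $\|w^{t-1}\|_1\le 1$, then apply advanced composition to the $T$ exponential mechanism calls run at privacy level $\frac{\epsilon}{2\sqrt{2T\log(1/\delta)}}$. The extra bookkeeping you supply (truncation as a per-record map preserving the neighboring relation, the inductive argument that $w^{t-1}$ stays in $\mathcal{W}$, and the post-processing remark for the iterate update) are details the paper leaves implicit but does not change the argument.
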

\begin{assumption}\label{ass:3}
We assume that $x$ and $y$ have  bounded forth order moment, {\em i.e.,} for each $j_1, j_2\in [d]$, $\mathbb{E}(x_{j_1}x_{j_2})^2\leq M$, and $\mathbb{E}[y^4]\leq M$, where $M=O(1)$ is a constant.

\end{assumption}
\begin{remark}
We note that Assumption \ref{ass:1} implies $\mathbb{E}(x_{j}x_{k})^2\leq O(\tau)$ for any $j, k\in [d]$.  Since in Assumption \ref{ass:1} we can get $\mathbb{E}[(x_j y)^2]\leq \tau$ if we take $w=0$. And we have $\mathbb{E}[(x_j(x_k-y))^2]\leq \tau$ when we take $w=e_k$ (the $k$-th basis vector), thus $\mathbb{E}[(x_j x_k)^2]\leq O(\tau)$. From this view, Assumption \ref{ass:3} is weaker than Assumption \ref{ass:1}. Moreover, in Assumption \ref{ass:1} we need to assume that the term $\mathbb{E}[x_i^2 (\langle w, x\rangle-y)^2]$ is bounded for each $\|w\|_1\leq 1$, which  is hard to be verified and is unnatural for the linear model compared with the previous work on linear regression with heavy-tailed data \cite{zhang2018ell_1,hsu2016loss}. 
\end{remark}

\begin{theorem}\label{thm:5}
Under Assumption \ref{ass:3}, for any given probability of failure $0<\zeta<1$, if we set $K=\frac{{(n\epsilon)^{\frac{1}{4}}}}{T^\frac{1}{8}}$,  $T=\tilde{O}((\frac{\sqrt{n\epsilon}\lambda_{\max}(\mathbb{E}(xx^T))}{\sqrt{\log \frac{1}{\delta}}  \log \frac{dT}{\zeta}})^\frac{4}{5})$ and $\eta_{t-1}=\frac{2}{t+2}$  in Algorithm \ref{alg:2}, then with probability at least $1-\zeta$, 
\begin{equation}\label{aeq:32}
      L_\mathcal{D}(w^T)-\min_{w\in \mathcal{W}}L_\mathcal{D} (w)\leq O(\frac{\lambda_{\max}^\frac{1}{5}(\mathbb{E}(xx^T)){(\sqrt{\log \frac{1}{\delta}}  \log \frac{dn}{\zeta})^\frac{4}{5}}}{(n\epsilon)^\frac{2}{5}}). 
\end{equation}
\end{theorem}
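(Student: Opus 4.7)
The plan is to view Algorithm~\ref{alg:2} as a noisy/biased Frank-Wolfe procedure and to decompose the excess population risk into four pieces: the usual Frank-Wolfe $O(\alpha/T)$ term, a truncation bias from the shrinkage, a sampling fluctuation around the truncated expectation, and the exponential-mechanism utility cost; then to optimize the two free parameters $K$ and $T$ to balance these sources of error. Starting from $\alpha$-smoothness of $L_\mathcal{D}$ with $\alpha = \lambda_{\max}(\mathbb{E}(xx^T))$ and the FW update $w^t = (1-\eta_{t-1})w^{t-1}+\eta_{t-1}\tilde{w}^{t-1}$, the standard one-step inequality together with convexity gives
\[
L_\mathcal{D}(w^t) - L_\mathcal{D}(w^*) \leq (1-\eta_{t-1})\bigl(L_\mathcal{D}(w^{t-1}) - L_\mathcal{D}(w^*)\bigr) + \eta_{t-1}\langle \tilde{w}^{t-1} - w^*, \nabla L_\mathcal{D}(w^{t-1})\rangle + O(\eta_{t-1}^2\alpha\|\mathcal{W}\|_1^2).
\]
I split the linear-oracle error as $\langle \tilde{w}^{t-1} - w^*, \tilde{g}(w^{t-1},\tilde{D})\rangle + \langle \tilde{w}^{t-1} - w^*, \nabla L_\mathcal{D}(w^{t-1}) - \tilde{g}(w^{t-1},\tilde{D})\rangle$. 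The first summand is controlled by Lemma~\ref{lemma:exp} using the per-round budget $\epsilon' = \epsilon/\sqrt{8T\log(1/\delta)}$ dictated by Lemma~\ref{lemma:adv} and the sensitivity $\Delta = O(K^2/n)$, which follows because each entry of the truncated gradient is bounded by $O(K^2)$; the second is bounded by $\|\mathcal{W}\|_1 \cdot \|\nabla L_\mathcal{D}(w^{t-1}) - \tilde{g}(w^{t-1},\tilde{D})\|_\infty$.

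The crux is a uniform-in-$w$ control of $\|\nabla L_\mathcal{D}(w) - \tilde{g}(w,\tilde{D})\|_\infty$, which I further split into a truncation bias $\|\nabla L_\mathcal{D}(w) - \mathbb{E}\tilde{g}(w,\tilde{D})\|_\infty$ and a fluctuation $\|\mathbb{E}\tilde{g}(w,\tilde{D}) - \tilde{g}(w,\tilde{D})\|_\infty$. Because $\tilde{g}$ is \emph{linear} in $w$ (squared loss), each coordinate has the form $\langle w, A_j\rangle + b_j$, so the $\|w\|_1\le 1$ supremum reduces to entrywise maxima of a $d\times d$ Gram-matrix deviation and a $d$-vector cross-moment deviation. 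For the bias, I use $x_j x_k = \tilde{x}_j \tilde{x}_k$ on $\{|x_j|,|x_k|\le K\}$ together with $|x_j x_k|\mathbbm{1}\{|x_j x_k|>K^2\}\le (x_j x_k)^2/K^2$ and Assumption~\ref{ass:3} to conclude $|\mathbb{E}[x_j x_k] - \mathbb{E}[\tilde{x}_j \tilde{x}_k]|\le M/K^2$, with the identical bound for the $\mathbb{E}[x_j y]$ entries; hence the bias is $O(M/K^2)$. For the fluctuation, each sample contribution is bounded by $K^2$ and has variance $\le \mathbb{E}[(x_j x_k)^2]\le M$ (since truncation cannot increase moments), so Bernstein with a union bound over the $O(d^2)$ entries and the $T$ rounds gives the uniform-in-$w$ estimate $O\!\bigl(\sqrt{M\log(dT/\zeta)/n}+K^2\log(dT/\zeta)/n\bigr)$.

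Feeding these into the FW recursion with $\eta_{t-1}=2/(t+2)$, the recursion collapses (standard induction on $a_t\le(1-\eta_{t-1})a_{t-1}+\eta_{t-1}^2B+\eta_{t-1}E_t$) to
\[
L_\mathcal{D}(w^T)-L_\mathcal{D}(w^*) = \tilde{O}\!\left(\frac{\alpha}{T} + \frac{M}{K^2} + \sqrt{\frac{M\log(dT/\zeta)}{n}} + \frac{K^2 \log d\,\sqrt{T\log(1/\delta)}}{n\epsilon}\right),
\]
the last three summands coming from bias, fluctuation, and exponential-mechanism utility, respectively. Balancing the bias $M/K^2$ against the mechanism term $K^2\log d\sqrt{T\log(1/\delta)}/(n\epsilon)$ yields $K = (n\epsilon)^{1/4}/T^{1/8}$ (up to logs); plugging back and balancing the resulting contribution against $\alpha/T$ gives $T = \tilde{O}\bigl((\sqrt{n\epsilon}\,\alpha/(\sqrt{\log(1/\delta)}\log(dT/\zeta)))^{4/5}\bigr)$, exactly the prescriptions in the theorem. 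The dominant error becomes $\tilde{O}\bigl(\alpha^{1/5}(\sqrt{\log(1/\delta)}\log(dn/\zeta))^{4/5}/(n\epsilon)^{2/5}\bigr)$, recovering~(\ref{aeq:32}); the $\sqrt{M\log d/n}$ fluctuation term has a $1/\sqrt{n}$ rate and is dominated by $1/(n\epsilon)^{2/5}$ in the regime of interest.

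The main obstacle I anticipate is the \emph{uniform} control of the gradient estimate over iterations: unlike Algorithm~\ref{alg:1}, Algorithm~\ref{alg:2} does not partition the dataset, so $w^{t-1}$ depends on the same $\tilde{D}$ used to form $\tilde{g}(w^{t-1},\tilde{D})$. The linearity of the squared-loss gradient in $w$ is what rescues us: it reduces the $\mathcal{W}$-supremum to the coordinate maximum of fixed (data-independent-in-$w$) Gram-matrix and cross-moment deviations, bypassing any covering/chaining over $\mathcal{W}$ and letting a vanilla Bernstein bound with the fourth-moment assumption do all the work.
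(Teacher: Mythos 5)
Your proposal follows essentially the same route as the paper's proof: decompose the excess risk via the smooth Frank--Wolfe recursion plus exponential-mechanism utility, and reduce the uniform gradient-estimation error to a $d\times d$ entrywise bound on the truncated Gram/cross-moment deviations (the paper's Lemma~\ref{lemma:7}), split into a truncation bias and a Bernstein fluctuation, then balance $K$ and $T$; the observation that the squared-loss gradient's linearity in $w$ is what lets you use the whole dataset in each round (avoiding the dependency problem that forces data-splitting in Algorithm~\ref{alg:1}) is exactly the key structural point. The one step where your argument is imprecise is the truncation-bias bound: the event on which $x_jx_k\neq\tilde{x}_j\tilde{x}_k$ is $\{|x_j|>K\}\cup\{|x_k|>K\}$, which is \emph{not} contained in $\{|x_jx_k|>K^2\}$, so the inequality $|x_jx_k|\mathbbm{1}\{|x_jx_k|>K^2\}\le (x_jx_k)^2/K^2$ does not directly control $|\mathbb{E}[x_jx_k]-\mathbb{E}[\tilde{x}_j\tilde{x}_k]|$. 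The clean fix, which is what the paper does, is to write the difference as $\mathbb{E}[\tilde{x}_j(\tilde{x}_k-x_k)\mathbbm{1}(|x_k|\ge K)]+\mathbb{E}[(\tilde{x}_j-x_j)\mathbbm{1}(|x_j|\ge K)x_k]$ and apply Cauchy--Schwarz together with Markov ($\Pr(|x_k|\ge K)\le\mathbb{E}[x_k^4]/K^4\le M/K^4$), giving the same $O(M/K^2)$ bias; with that substitution your argument goes through as the paper's does.
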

Truncating or shrunking the data to let them has bounded norm (or bounded sensitivity) is a commonly used technique in previous study on DP machine learning such as \cite{barber2014privacy,cai2019cost,cai2020cost}. However, all of these methods need to assume the data distribution is sub-Gaussian so that truncation may not lose too much information about the original record. Here we generalized to a heavy-tailed case, which may could be used to other problems. Moreover, the thresholds in the truncation step for  sub-Gaussian and heavy-tailed cases are also quite different. In the sub-Gaussian case, the threshold always depends on the sub-Gaussian parameter and $\log n, \log d$, while in Algorithm \ref{alg:2} we set the threshold as a function of $n, \epsilon$ and $T$. 

\section{Heavy-tailed DP-SCO for Sparse Learning}
\subsection{Private Heavy-tailed Sparse Linear Regression }\label{linearregression}
In the previous section, we studied DP-SCO over polytope constraint. However, in the high dimensional statistics we always assume the underlying parameter has additional structure of sparsity. Directly solving DP-SCO over $\ell_1$-norm ball constraint may not provide efficient estimation to the sparse parameters.  In this section, we will focus on  sparse learning with heavy-tailed data. Specifically, we will consider two canonical models, one is the sparse linear model, the other one is the DP-SCO over sparsity constraint, which includes sparse regularized logistic regression and spase mean estimation. First we consider the sparse linear regression, where  for each pair $(x, y)$ we have a linear model, $$y=\langle w^*, x\rangle +\iota, $$
here $\iota$ is some randomized noise and $\|w^*\|_2\leq C$ (for simplicity we assume $C=1$) and $w^*$ is $s^*$-sparse.

Similar to the previous section, here we assume Assumption \ref{ass:3} holds. Instead of using DP variants of the Frank-Wolfe method, here we will adopt a private variant of the iterative hard thresholding (IHT) method. Specifically, first we will shrunk the original heavy-tailed data, which is similar to Algorithm \ref{alg:2}. After that we will perform the DP-IHT procedure. That is, in each iteration, we fist calculate the gradient on the shrunken data, and update our vector via the gradient descent. Next, we perform a DP-thresholding step, provided by \cite{cai2019cost}  (Algorithm \ref{alg:4}). That is, we will privately select the indices with largest $s$ magnitude of the vector, keep the entries of vectors among these indices and let the remain entries be $0$. See Algorithm \ref{alg:3} and \ref{alg:4} for details. 

\begin{algorithm}
	\caption{Heavy-tailed Private Sparse Linear Regression \label{alg:3}}
	\begin{algorithmic}[1]
		\State {\bfseries Input:} $n$-size dataset $D=\{(x_i, y_i)\}_{i=1}^n$, loss function $\ell(w, (x,y))=(\langle w, x \rangle-y)^2$, initial vector $w^1$ satisfies $\|w^1\|_2\leq 1$ and is $s$-sparse, parameters $K, T, \eta_0, s$ (will be specified later), privacy parameter $\epsilon, \delta$, failure probability $\zeta$.  $\mathcal{W}$ is the unit $\ell_2$-norm ball. 
\State For each $i\in [n]$, we denote a truncated sample $\tilde{x}_i\in \mathbb{R}^d$ where 
for $j\in [d]$ $\tilde{x}_{i, j}=\text{sign}(x_{i,j})\min\{|x_{i,j}|, K\}$, and $\tilde{y}_i=\text{sign}(y_i)\min\{|y_i|, K\}$. Denote the truncated dataset as $\tilde{D}=\{(\tilde{x}_i, \tilde{y}_i)\}_{i=1}^n$. 
\State Split the data $\tilde{D}$ into $T$ parts $\{\tilde{D}_t\}_{t=1}^T$, each with $m=\frac{n}{T}$ samples. 
        \For{$t=1, \cdots, T$}. 
 
        \State Denote 
        $w^{t+0.5}=w^{t}-\frac{\eta_0 }{m}\sum_{x\in \tilde{D}_t} \tilde{x}(\langle \tilde{x},  w^{t} \rangle-\tilde{y}) $
        \State Let ${w}^{t+0.75}=\text{Peeling}(w^{t+0.5}, D_t, s, \epsilon, \delta, \frac{2K^2\eta_0(\sqrt{s}+1)}{m}).$
        \State Let $w^{t+1}=\Pi_{\mathcal{W}}({w}^{t+0.75})$
        \EndFor \\
        \Return $w^{T+1}$.
	\end{algorithmic}
\end{algorithm}

\begin{theorem}\label{thm:6}
For any $0<\epsilon, \delta<1$, Algorithm \ref{alg:3} is $(\epsilon, \delta)$-DP.
\end{theorem}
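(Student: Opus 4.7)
The plan is to derive the privacy guarantee from two basic facts: (i) the disjoint sample-splitting in Step 3 turns the problem into one of parallel composition, so it suffices to show that each iteration is $(\epsilon,\delta)$-DP with respect to its own subset $\tilde{D}_t$, and (ii) in each iteration the only data-dependent randomized step is the call to Peeling, whose privacy is already certified by the analysis of Cai--Wang--Zhang once we feed it a correct sensitivity bound.

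First I would observe that the truncation map $x_i \mapsto \tilde{x}_i$ (and likewise for $y_i$) is applied coordinatewise and depends only on the single record $i$, so changing one record in $D$ changes exactly one record of $\tilde{D}$, and in turn changes exactly one of the partition blocks $\tilde{D}_t$. Since each $\tilde{D}_t$ is used in only one iteration (the $t$-th), privacy w.r.t.\ neighboring datasets $D \sim D'$ reduces to privacy of a single iteration with respect to $\tilde{D}_t \sim \tilde{D}_t'$.

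Next I would bound the $\ell_\infty$-sensitivity of $w^{t+0.5}$ seen as a function of $\tilde{D}_t$, holding the iterate $w^t$ fixed (it depends only on previous blocks $\tilde{D}_{1},\dots,\tilde{D}_{t-1}$, so it is independent of $\tilde{D}_t$). Each summand $\tilde{x}(\langle \tilde{x}, w^t\rangle - \tilde{y})$ has $\ell_\infty$-norm at most $\|\tilde{x}\|_\infty \,(\|\tilde{x}\|_\infty \|w^t\|_1 + |\tilde{y}|)$. Using $\|\tilde{x}\|_\infty \le K$, $|\tilde{y}| \le K$, and the fact that $w^t$ is $s$-sparse with $\|w^t\|_2 \le 1$ (so $\|w^t\|_1 \le \sqrt{s}$), each summand is bounded in $\ell_\infty$ by $K^2(\sqrt{s}+1)$. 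Replacing one sample therefore changes $w^{t+0.5}$ coordinatewise by at most $\tfrac{2\eta_0 K^2(\sqrt{s}+1)}{m}$, which is exactly the sensitivity parameter passed to Peeling in line 6.

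Finally, by the privacy guarantee of the Peeling algorithm of Cai--Wang--Zhang (Algorithm \ref{alg:4}), feeding it this sensitivity together with the parameters $(s,\epsilon,\delta)$ produces an output $w^{t+0.75}$ that is $(\epsilon,\delta)$-DP w.r.t.\ $\tilde{D}_t$; the subsequent Euclidean projection $\Pi_{\mathcal{W}}$ is deterministic post-processing and preserves the guarantee. Combining this with parallel composition across the disjoint blocks $\tilde{D}_1,\ldots,\tilde{D}_T$ yields that the whole algorithm is $(\epsilon,\delta)$-DP. The only mildly delicate point, and the one I would be careful to state explicitly, is the independence argument used in the sensitivity bound: because $w^t$ is produced from $\tilde{D}_1,\ldots,\tilde{D}_{t-1}$, the sensitivity calculation at iteration $t$ treats $w^t$ as a fixed (possibly $s$-sparse) vector, which is what licenses both the bound $\|w^t\|_1 \le \sqrt{s}$ and the use of parallel rather than advanced composition.
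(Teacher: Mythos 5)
Your proposal is correct and follows essentially the same route as the paper: reduce to a single iteration via disjoint sample-splitting (parallel composition), bound the $\ell_\infty$-sensitivity of $w^{t+0.5}$ by $\frac{2\eta_0 K^2(\sqrt{s}+1)}{m}$ using $\|\tilde{x}\|_\infty,|\tilde{y}|\le K$ together with $\|w^t\|_1\le\sqrt{s}\|w^t\|_2\le\sqrt{s}$, and then invoke the Peeling privacy lemma (Lemma 3.3 of Cai--Wang--Zhang) followed by post-processing. Your explicit remark that $w^t$ is independent of $\tilde{D}_t$ is a slightly more careful statement of what the paper leaves implicit, but the argument is the same.
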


\begin{theorem}\label{thm:7}
Under Assumption \ref{ass:3},  if $\|w^*\|_2\leq \frac{1}{2}$, the initial vector $w^1$ satisfies $\|w^1-w^*\|\leq O(\frac{\gamma}{\mu})$ and $n$ is sufficiently large such that 
$n\geq \tilde{O}( \frac{s^2 M\log^2 \frac{d}{\zeta}\log\frac{1}{\delta}}{\gamma \mu^4 \epsilon}).$
Then if we set $T=\tilde{O}(\frac{\gamma}{\mu}\log n)$, $K=\frac{(n\epsilon)^\frac{1}{4}}{(sT)^\frac{1}{4}}$, $s\geq 72(\frac{\gamma}{\mu})^2 s^*$ and $\eta=\frac{2}{3\gamma}$ in Algorithm \ref{alg:3}, then with probability at least $1-\zeta$
\begin{align}
   & L_\mathcal{D}(w^{T+1})- L_\mathcal{D}(w^*)\leq O(\frac{M \gamma^4 s^{*2}\log n \log^2 \frac{d}{\zeta}\log\frac{1}{\delta}}{\mu^7 n\epsilon}),  \nm
\end{align}
where $\gamma=\lambda_{\max}(\mathbb{E}(xx^T))$ and $\mu=\lambda_{\min}(\mathbb{E}(xx^T))$ and the Big-$O$ notation omits other $\log$ terms. 
\end{theorem}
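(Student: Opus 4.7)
The plan is to establish a per-iteration contraction $\|w^{t+1}-w^*\|_2 \le \rho\|w^t-w^*\|_2 + \xi$ with $\rho<1$, iterate, then optimize $K$ and $T$. Privacy is already granted by Theorem~\ref{thm:6}, so only utility remains. Three error sources feed $\xi$: the truncation bias of the population gradient, the sampling fluctuation of the empirical gradient on the truncated batch $\tilde D_t$, and the Laplace noise injected by Peeling; these must be combined with the restricted-strong-convexity/IHT recursion of \cite{cai2019cost}.

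\emph{Controlling the gradient input to Peeling.} Since $y=\langle w^*,x\rangle+\iota$ with $\iota$ symmetric given $x$, $\nabla L_\mathcal{D}(w)=2\mathbb{E}[xx^\top](w-w^*)$. Writing the ``truncated-population'' gradient as $\nabla\tilde L_\mathcal{D}(w)=2\mathbb{E}[\tilde x(\langle\tilde x,w\rangle-\tilde y)]$, Assumption~\ref{ass:3} together with Markov's inequality on the fourth moments gives $|\mathbb{E}[\tilde x_j\tilde x_k]-\mathbb{E}[x_j x_k]|=O(M/K^2)$ and the analogous bound for the $y$ cross-term. Maintaining inductively that every iterate is $s$-sparse and $\ell_2$-bounded by $1$, hence $\ell_1$-bounded by $\sqrt s$, yields $\|\nabla\tilde L_\mathcal{D}(w^t)-\nabla L_\mathcal{D}(w^t)\|_\infty=O(M\sqrt s/K^2)$. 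Each summand in $\tilde g^t$ is entrywise bounded by $K^2$, so Bernstein's inequality with a union bound over $O(d^2)$ coordinate pairs, applied conditionally on $w^t$ (which is independent of $\tilde D_t$ thanks to the data split), gives $\|\tilde g^t-\nabla\tilde L_\mathcal{D}(w^t)\|_\infty=\tilde O(K^2\sqrt{\log(d/\zeta)/m})$ with $m=n/T$.

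\emph{IHT step and per-iteration contraction.} Let $S=\mathrm{supp}(w^t)\cup\mathrm{supp}(w^{t+1})\cup\mathrm{supp}(w^*)$, so $|S|\le 2s+s^*$. By $\mu$-strong-convexity and $\gamma$-smoothness of $L_\mathcal{D}$, the idealized step $w^t-\eta_0\nabla L_\mathcal{D}(w^t)$ with $\eta_0=2/(3\gamma)$ contracts towards $w^*$ with factor $1-\Omega(\mu/\gamma)$. The hard-thresholding lemma of \cite{cai2019cost} with $s\ge 72(\gamma/\mu)^2 s^*$ absorbs the cost of projecting back to $s$-sparsity, and the $\ell_2$-projection onto $\mathcal{W}$ is nonexpansive since $w^*\in\mathcal{W}$. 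Peeling injects Laplace noise of scale $O(K^2\eta_0(\sqrt s+1)/(m\epsilon'))$ into $s$ coordinates, and Advanced Composition (Lemma~\ref{lemma:adv}) allows $\epsilon'=\epsilon/\sqrt{T\log(1/\delta)}$ across the $T$ rounds. Restricting everything to $S$ and converting $\ell_\infty$ bounds to $\ell_2$ via a factor of $\sqrt{|S|}=O(\sqrt s)$,
\begin{equation*}
\|w^{t+1}-w^*\|_2 \le \rho\,\|w^t-w^*\|_2 + O\!\left(\frac{M s}{K^2} + \frac{K^2\sqrt{s\,\log(d/\zeta)\,T\log(1/\delta)}}{m\epsilon}\right),
\end{equation*}
with $\rho=1-c\mu/\gamma$.

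\emph{Unrolling, parameter choice, and the main obstacle.} Iterating gives $\|w^{T+1}-w^*\|_2 \lesssim \rho^T\|w^1-w^*\|_2 + \xi/(1-\rho)$. Choosing $T=\Theta((\gamma/\mu)\log n)$ kills the initial-error term, and balancing the two terms of $\xi$ forces $K^4\asymp m\epsilon/T$, matching the hypothesized $K=(n\epsilon)^{1/4}/(sT)^{1/4}$. Squaring, invoking $\gamma$-smoothness with $\nabla L_\mathcal{D}(w^*)=0$, and collecting the $(\gamma/\mu)$ factors coming from $1/(1-\rho)$, from $s=\Theta((\gamma/\mu)^2 s^*)$, from the step size, and from the smoothness bound, yields the advertised excess risk. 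The main obstacle is the induction maintaining $\|w^t\|_2\le 1$, $\|w^t\|_0\le s$, and $\|w^t-w^*\|_2=O(\gamma/\mu)$ along the entire trajectory; this is exactly what makes the bias bound $O(M\sqrt s/K^2)$ via $\|w^t\|_1\le\sqrt s$ legitimate at every step and what the lower bound on $n$ in the hypothesis enforces by ensuring the per-step noise $\xi/(1-\rho)$ is small enough to keep the induction closed.
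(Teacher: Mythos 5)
Your overall strategy---a per-iteration contraction combined with parameter balancing---is in the same spirit as the paper's proof, but the recursion you track and the privacy accounting you invoke diverge from the paper in ways worth flagging.

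\textbf{Different recursion.} You propose a distance recursion $\|w^{t+1}-w^*\|_2 \le \rho\|w^t-w^*\|_2+\xi$ with additive noise, whereas the paper's proof runs a \emph{function-value} recursion of the form $L_\mathcal{D}(w^{t+1})-L_\mathcal{D}(w^*)\le(1-c\mu/\gamma)(L_\mathcal{D}(w^t)-L_\mathcal{D}(w^*))+O(N^t+N^t_2)$, where $N^t+N^t_2$ is a squared-noise term. It derives this by a careful decomposition through Lemmas 3.4 and A.3 of \cite{cai2019cost} (which control the support-selection error of Peeling) and then closes the loop via Lemma 6 of \cite{jain2014iterative}, which converts $\|g^t_{S^t\cup S^*}\|_2^2$ into a function-value gap. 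Both recursions are in principle viable, and the equivalence $\mu\|w-w^*\|_2^2\le L_\mathcal{D}(w)-L_\mathcal{D}(w^*)\le\gamma\|w-w^*\|_2^2$ lets one translate between them, so your route is a reasonable alternative. You are, however, silent on the selection-error term $\sum_{i\in[s]}\|w_i\|^2_\infty$ that Peeling introduces in addition to the additive noise $\tilde w_S$; this is exactly what the hard-thresholding lemmas of \cite{cai2019cost} track, and it must be folded into your $\xi$ (it is the same order, so it would not change the final rate, but as written your $\xi$ omits it).

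\textbf{Privacy accounting is inconsistent.} You invoke Advanced Composition to set the per-round budget to $\epsilon'=\epsilon/\sqrt{T\log(1/\delta)}$, while in the same breath claiming $w^t$ is ``independent of $\tilde D_t$ thanks to the data split.'' These cannot both hold. Algorithm~\ref{alg:3} explicitly splits $\tilde D$ into $T$ disjoint pieces and feeds a fresh $D_t$ to Peeling with the \emph{full} budget $(\epsilon,\delta)$; Theorem~\ref{thm:6} (parallel composition) certifies privacy, no Advanced Composition is used, and this is precisely what makes the independence you need for Bernstein legitimate. Introducing the spurious $\sqrt{T\log(1/\delta)}$ deflation of $\epsilon$ shifts your $K$-balancing point away from the hypothesized $K=(n\epsilon)^{1/4}/(sT)^{1/4}$ and forces an extra $\sqrt{T}$ into the noise term. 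Since $T=\tilde O((\gamma/\mu)\log n)$ this would only cost polylogarithmic factors here, but conceptually it is a wrong step: if you had actually used the whole data each round you would lose the conditional independence that your concentration argument relies on, and this is exactly the obstruction the paper calls out in its discussion of why Algorithm~\ref{alg:1} uses data splitting rather than composition. Fixing this is straightforward (drop Advanced Composition and use the full $\epsilon$ per round), and with the selection-error term added to $\xi$ your sketch closes along the same lines as the paper.
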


\begin{algorithm}
	\caption{Peeling \cite{cai2019cost} \label{alg:4}}
	\begin{algorithmic}[1]
		\State {\bfseries Input:} Vector $v= v(D)\in \mathbb{R}^d$ which depends on the data $D$, sparsity $s$, privacy parameter $\epsilon, \delta$, and noise scale $\lambda$. 
		\State Initialize $S=\emptyset$. 
		\For{$i=1\cdots s$}
		\State Generate $w_i\in \mathbb{R}^d$ with $ w_{i,1}, \cdots, w_{i,d}\sim \text{Lap}(\frac{2\lambda\sqrt{3s\log\frac{1}{\delta}}}{\epsilon}). $
		\State Append $j^*=\arg\max_{j\in [d]\backslash S}|v_j|+w_{i, j}$ to $S$.
		\EndFor
        		\State Generate $\tilde{w}\in \mathbb{R}^d$ with $ \tilde{w}_{1}, \cdots, \tilde{w}_{d}\sim \text{Lap}(\frac{2\lambda\sqrt{3s\log\frac{1}{\delta}}}{\epsilon}). $ \\ 
        \Return $v_S+\tilde{w}_S$. 
	\end{algorithmic}
\end{algorithm}
\begin{remark}
In Theorem \ref{thm:7}, we need to assume that $\|w^*\|_2\leq \frac{1}{2}$ and the initial vector be close to $w^*$. These two conditions guarantee $\|w^{t+0.75}\|_2\leq 1$ in each iteration, which simplify our theoretical analysis. 
For sub-Gaussian data, with some other additional assumptions, \cite{cai2019cost,wang2019differentially12} showed that the optimal rate is $\tilde{O}(\frac{s^*\log d}{n}+\frac{(s^*\log d)^2}{(n\epsilon)^2})$. Thus, due to the data irregularity, the error now increases to $\tilde{O}(\frac{s^2\log^2 d}{n\epsilon})$.  Moreover, we can see although both Algorithm \ref{alg:3} and \ref{alg:2} shrunk the data in the first step, the threshold value $K$ are quite different, where $K=\frac{{(n\epsilon)^{\frac{1}{4}}}}{T^\frac{1}{8}}$ in LASSO and $K=\frac{(n\epsilon)^\frac{1}{4}}{(sT)^\frac{1}{4}}$ in the sparse linear model. This is due to different trade-offs between the bias, variance in the estimation error and the noises we added. 
\end{remark}
\subsection{Extending to Sparse Learning}
In this section, we extend our previous ideas and methods to the problem of DP-SCO over sparsity constraints. 
That is, $\mathcal{W}$ is defined as $\mathcal{W}=\{w: \|w\|_0\leq s^*\}$. We note that such a formulation encapsulates several important problems such as the $\ell_0$-constrained linear/logistic regression \cite{bahmani2013greedy}.  DP-SCO over sparsity constraints has been studied previously \cite{wang2019differentially12,wang2020knowledge,Wang019a,WangX21}. However, all of the previous methods need either the loss function is Lipschitz, or the data follows some sub-Gaussian distribution \cite{cai2019cost,cai2020cost}.  In the following we extend to the heavy-tailed case. We first introduce some assumptions to the loss functions, which are commonly used in previous research on sparse learning. 

\begin{definition}[Restricted Strong Convexity, RSC]
A differentiable function $f(x)$ is restricted $\rho_r$-strongly convex  with parameter $r$ if there exists a constant $\mu_r>0$ such that for any $x, x'$ with $\|x-x'\|_0\leq r$, we have $f(x)-f(x')-\langle \nabla f(x'),x-x'\rangle \geq \frac{\mu_r}{2}\|x-x'\|_2^2.$
\end{definition}
\begin{definition}[Restricted Strong Smoothness, RSS]
A differentiable function $f(x)$ is restricted $\mu_s$-strong smooth  with parameter $r$ if there exists a constant $\gamma_r>0$ such that for any $x, x'$ with $\|x-x'\|_0\leq r$, we have $f(x)-f(x')-\langle \nabla f(x'),x-x'\rangle \leq  \frac{\gamma_r}{2}\|x-x'\|_2^2.$
\end{definition}

\begin{assumption}\label{ass:4}
 We assume that the objective function $L_\mathcal{D}(\cdot)$ is $\mu_r$-RSC and $\ell(w, z)$ is $\gamma_r$-RSS with parameter $r=2s+s^*$, where $s=O((\frac{\gamma_r}{\mu_r})^2s^*)$. We also assume for any $w\in \mathcal{W}'$ and each coordinate $j\in [d]$, we have $\mathbb{E}[(\nabla_j \ell(w, x))^2]\leq \tau=O(1)$, where $\tau$ is some known constant and $\mathcal{W}'=\{w|\|w\|_0\leq s\}$. 
\end{assumption}
Many problems satisfy Assumption \ref{ass:4}, e.g.,  mean estimation and  $\ell_2$-norm regularized generalized linear loss where  $L_\mathcal{D}(w)=\mathbb{E}[\ell(y\lge w, x\rge)]+\frac{\lambda}{2}\|w\|_2^2$. If $|\ell'(\cdot)|\leq O(1)$, $|\ell''(\cdot)|\leq O(1)$ (such as the logistic loss) and $x_j$ has bounded second-order moment, then we can see it satisfies Assumption \ref{ass:4}.

Since now the loss function becomes non-linear, the approach of shrunking the data in Algorithm \ref{alg:3} may introduce tremendous error. However, since in Assumption \ref{ass:4} we have stronger assumptions on the loss function, we may use the private estimator in Algorithm \ref{alg:1}. Thus, our idea is that, we first perform the robust one-dimensional mean estimator in (\ref{eq:2})-(\ref{eq:5})  to each coordinate of the gradient, then we use the private selection algorithm to select top $s$ indices, which is the same as in Algorithm \ref{alg:3}. Note that \cite{wang2020differentially} also provides a similar method in the low dimensional space. However, the main difference is that here we do not add noise directly to the vector $\tilde{g}(w^{t-1}, D_t)$. Instead, we first privately select the top $s$ indices and then add noises to the corresponding sub-vector. See Algorithm \ref{alg:5} for details. 

\begin{algorithm}
	\caption{Heavy-tailed Private Sparse Optimization \label{alg:5}}
	\begin{algorithmic}[1]
		\State {\bfseries Input:} $n$-size dataset $D=\{(x_i, y_i)\}_{i=1}^n$, initial parameter $w^1$ is $s$-sparse, parameters $s, \beta, k, T, \eta$ (will be specified later), privacy parameter $\epsilon, \delta$, failure probability $\zeta$.  
\State Split the data $D$ into $T$ parts $\{{D}_t\}_{t=1}^T$, each with $m=\frac{n}{T}$ samples. 
        \For{$t=1, \cdots, T$}. 
        \State For each $j\in [d]$, calculate the robust gradient by (\ref{eq:2})-(\ref{eq:5}), that is 
            \begin{align*} 
        &g_j^{t-1}(w^{t-1}, D_t)\\
        &= \frac{1}{m}\sum_{x\in D_{t}} \left(\nabla_j \ell(w^{t-1}, x)\big(1-\frac{\nabla^2_j \ell(w^{t-1}, x)}{2k^2\beta}\big)- \frac{\nabla^3_j \ell(w^{t-1}, x)}{6k^2}\right)\\&+\frac{k}{m}\sum_{x\in D_{t}}\hat{C}\left(\frac{\nabla_j \ell(w^{t-1}, x)}{k}, \frac{|\nabla_j \ell(w^{t-1}, x)|}{k\sqrt{\beta}}\right).
    \end{align*}
      \State Let vector $\tilde{g}(w^{t-1}, D_t)\in \mathbb{R}^d$ as 
      $\tilde{g}(w^{t-1}, D_t)=(g_1^{t-1}(w^{t-1}, D_t), g_2^{t-1}(w^{t-1}, D_t), \cdots, g_d^{t-1}(w^{t-1}, D_t))$. 
        \State Denote 
        $w^{t+0.5}=w^{t}-\eta\tilde{g}(w^{t-1}, D_t) $
        \State Let ${w}^{t+1}=\text{Peeling}(w^{t+0.5}, D_t, s, \epsilon, \delta, \frac{4k\sqrt{2}\eta}{m}).$
        \EndFor \\
        \Return $w^{T+1}$.
	\end{algorithmic}
\end{algorithm}
\begin{theorem}\label{thm:8}
For any $0<\epsilon, \delta<1$, Algorithm \ref{alg:5} is $(\epsilon, \delta)$-DP. Moreover, under Assumption \ref{ass:4}, if we set $T=\tilde{O}(\frac{\gamma_r}{\mu_r}\log n)$, $s=O((\frac{\gamma_r}{\mu_r})^2s^*)$, $\beta=O(1)$,  $\eta=\frac{2}{3\gamma_r}$  and $k=\tilde{O}(\sqrt{n\epsilon\tau})$, then with probability at least $1-\zeta$, 
$$L_\mathcal{D}(w^{T+1})-L_\mathcal{D}(w^*)\leq O(\frac{\tau \gamma_r^4 s^{*\frac{3}{2}}\log n\log \frac{d}{\zeta}\sqrt{\log\frac{1}{\delta}}}{\mu^5_r n\epsilon}),$$  
where the Big-$O$ notation omits other $\log$ terms.
\end{theorem}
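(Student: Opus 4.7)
The plan is to prove the two claims of the theorem separately: the DP guarantee via a per-iteration sensitivity calculation combined with \emph{parallel} (rather than advanced) composition, and the utility bound via an IHT-style contraction argument driven by the RSC/RSS inequalities.

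For the privacy part I would first exploit the fact that the data is split into $T$ disjoint subsets $D_1,\dots,D_T$ and that iteration $t$ touches only $D_t$. Thus by parallel composition it suffices to show each iteration is $(\epsilon,\delta)$-DP. Inside one iteration, the coordinatewise sensitivity of $\tilde g(w^{t-1},D_t)$ is bounded by $\frac{4\sqrt 2 k}{3m}$, because each coordinate is an average of terms $\frac{k}{m}\phi(\cdot)$ with $|\phi|\le 2\sqrt 2/3$, so one record can move each coordinate by at most that amount. Multiplying by $\eta$ controls the $\ell_\infty$-sensitivity of $w^{t+0.5}$, and feeding this sensitivity $\lambda=\frac{4\sqrt 2 k\eta}{m}$ into the Peeling mechanism (Algorithm 4) yields $(\epsilon,\delta)$-DP per iteration by the analysis of \cite{cai2019cost}.

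For the utility part the key observation is that, by the splitting, $w^{t-1}$ is independent of $D_t$, so coordinatewise concentration of the Catoni–Holland estimator can be applied conditionally. Using the second-moment bound from Assumption~4 together with the properties of $\phi$ in equations (\ref{eq:2})–(\ref{eq:5}), I would establish
\begin{equation*}
\|\tilde g(w^{t-1},D_t)-\nabla L_\mathcal{D}(w^{t-1})\|_\infty \le \tilde O\!\left(\sqrt{\tau \log(d/\zeta)/m}\right),
\end{equation*}
which, after a union bound over the $T$ iterations, holds simultaneously at every step. The choice $k=\tilde\Theta(\sqrt{n\epsilon\tau})$ is dictated by balancing the bias of the truncated estimator (of order $1/k^2$ times a third-moment) against its variance; this is exactly the same tradeoff as in Theorem 2, now carried out with a fresh batch of size $m=n/T$.

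From here I would run the standard DP-IHT recursion. Let $\Delta^t:=w^t-w^*$. Because $w^{t+1}$ is obtained by adding a $(2s)$-sparse perturbation to the top-$s$ projection of the gradient step, and because the loss is $\mu_r$-RSC and $\gamma_r$-RSS on the set $\{w:\|w\|_0\le 2s+s^*\}$, together with the choice $s\ge O((\gamma_r/\mu_r)^2 s^*)$ and the Peeling utility lemma from \cite{cai2019cost}, I expect to derive a recursion of the form
\begin{equation*}
\|\Delta^{t+1}\|_2^2 \le \alpha \|\Delta^t\|_2^2 + C \cdot s^* \cdot \Bigl( \underbrace{\tau\log(d/\zeta)/m}_{\text{bias/variance of }\tilde g} + \underbrace{k^2\eta^2 s \log^2 d \log(1/\delta)/(m\epsilon)^2}_{\text{Peeling noise}}\Bigr)
\end{equation*}
with $\alpha<1$ (the contraction factor of IHT under RSC/RSS). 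Iterating for $T=\tilde O((\gamma_r/\mu_r)\log n)$ steps drives the initial geometric term below $1/\text{poly}(n)$, leaving only the additive error. Plugging in $k=\tilde\Theta(\sqrt{n\epsilon\tau})$, $\eta=2/(3\gamma_r)$, $s=\Theta((\gamma_r/\mu_r)^2 s^*)$, and $m=n/T$, and converting the $\ell_2$-error into an excess risk via $L_\mathcal{D}(w^{T+1})-L_\mathcal{D}(w^*)\le \tfrac{\gamma_r}{2}\|\Delta^{T+1}\|_2^2$ using RSS, yields the stated rate $\tilde O(\tau\gamma_r^4 s^{*3/2}\log n\log(d/\zeta)\sqrt{\log(1/\delta)}/(\mu_r^5 n\epsilon))$; the fractional power $s^{*3/2}$ arises because the Peeling noise contributes $s$ coordinates of Laplace noise and $s=\Theta(s^*)$, while the outer $\sqrt{s^*}$ comes from bounding $\|\Delta_S\|_2\le \sqrt{|S|}\|\Delta_S\|_\infty$ on the support of size $\Theta(s^*)$.

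The main obstacle I anticipate is the uniform-over-iterations coordinatewise concentration of the robust estimator while simultaneously making the bias small enough not to dominate. Unlike the standard Catoni estimator in one dimension, here the bias propagates through the IHT recursion and interacts with the peeling noise through the scale $k$; the $k=\tilde\Theta(\sqrt{n\epsilon\tau})$ choice is forced by equating these two sources of error, and verifying that this choice is consistent with the sensitivity-driven noise in Peeling is the delicate quantitative step. The secondary subtlety is maintaining the invariant $\|w^t\|_0\le s$ and $w^t$ staying in a region where RSC/RSS hold with the same parameters $\mu_r,\gamma_r$; this is handled by taking $r=2s+s^*$ in Assumption~4, so that all relevant differences $w^t-w^*$ have support of size at most $r$.
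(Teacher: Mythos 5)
Your privacy argument (parallel composition over the disjoint splits, coordinatewise sensitivity $\frac{4\sqrt 2 k}{3m}$ of the robust estimator, then Peeling with $\lambda=\frac{4\sqrt 2 k\eta}{m}$) matches the paper exactly, and the high-level structure of the utility argument (the IHT contraction under RSC/RSS, Catoni--Holland gradient estimator per iteration, Peeling noise accumulation, reduce the $\ell_2$-error to excess risk via RSS) is also what the paper does; it reuses (\ref{aeq:69}) from the proof of Theorem~\ref{thm:7} with $w^{t+0.75}=w^{t+1}$.

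There is, however, a genuine quantitative gap in your error-accounting for the robust gradient, and it is precisely the step that produces the $s^{*3/2}$ factor. You claim the coordinatewise concentration bound $\|\tilde g(w^{t-1},D_t)-\nabla L_\mathcal{D}(w^{t-1})\|_\infty \le \tilde O(\sqrt{\tau\log(d/\zeta)/m})$. That is the statistically optimal Catoni rate, and it is achieved only when the scale is $k\asymp\sqrt{\tau m/\log}$. But in this theorem $k$ is not chosen to minimize the estimator error in isolation; it is chosen much larger (the theorem states $k=\tilde\Theta(\sqrt{n\epsilon\tau})$, which in the proof is $k\propto\sqrt{n\epsilon\tau/(s^{1/2}T\log(1/\delta)^{1/2})}$) in order to trade off against the Peeling noise, which scales \emph{up} with $k$. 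With the large $k$ that DP forces, the estimator's bias $\tau/k$ dominates its variance $k\log/m$, and Lemma~\ref{lemma:3} actually gives $\|\tilde g-g\|_\infty\approx\tau/k$ (roughly $\sqrt{\tau/(m\epsilon)}$ up to the $s,T,\log$ factors), \emph{not} $\sqrt{\tau\log/m}$. These differ by roughly $1/\sqrt{\epsilon}$. Relatedly, the bias of the Catoni estimator in this construction is $O(\tau/k)$ (second-moment divided by scale), not "of order $1/k^2$ times a third-moment."

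As a consequence, your stated recursion, with $s^*\bigl(\tau\log/m + k^2\eta^2 s\log^2 d\log(1/\delta)/(m\epsilon)^2\bigr)$ as the additive term, would \emph{not} produce the claimed rate: if you plug $k^2\approx n\epsilon\tau$ into the Peeling term, you get $s^* s\,\tau T\log(1/\delta)\log^2/(n\epsilon)\sim s^{*2}/(n\epsilon)$, which dominates your first term for small $\epsilon$, giving $s^{*2}$ rather than $s^{*3/2}$. The $s^{*3/2}$ in the theorem is an artifact of balancing two terms that scale differently in $s$: the gradient-bias contribution $(2s+s^*)\|\tilde g^t-g^t\|_\infty^2\approx s\,\tau^2/k^2$ (linear in $s$) and the Peeling noise $\sum_i\|w_i\|_\infty^2+s\|\tilde w\|_\infty^2\approx s^2 k^2\eta^2\log(1/\delta)\log^2/(m^2\epsilon^2)$ (quadratic in $s$). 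Optimizing $k$ to equalize these forces $k^4\propto \tau^2 m^2\epsilon^2/(s\,\eta^2\log(1/\delta)\log^2)$, and the common value of the two terms is then $\propto\sqrt{s\cdot s^2}\,\tau\,T\sqrt{\log(1/\delta)}\log/( \gamma n\epsilon) = s^{3/2}\tau T\sqrt{\log(1/\delta)}\log/(\gamma n\epsilon)$. Your explanation that the $\sqrt{s^*}$ comes from $\|\Delta_S\|_2\le\sqrt{|S|}\|\Delta_S\|_\infty$ is therefore not the mechanism; the $s^{3/2}$ is purely the geometric mean of the $s$-dependence of the two balanced error sources. To make your proof go through, you need to (i) keep the full bias-variance form $\tau/k + k\log/m$ from Lemma~\ref{lemma:3} rather than prematurely optimizing it, (ii) carry it squared times $s$ into the recursion alongside the squared Peeling noise, and (iii) choose $k$ by equating $s\tau^2/k^2$ with $s^2 k^2\eta^2\log(1/\delta)\log^2/(m^2\epsilon^2)$.
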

\begin{remark}\label{remark:4}
 Compared with Theorem \ref{thm:7}, we can see here we do not need the assumptions on $\|w^*\|_2$ and the initial vector. This is due to that we have stronger assumptions on the loss function. Compared with the bound $\tilde{O}(\frac{s^{*2}}{n\epsilon})$ in Theorem \ref{thm:7}, it seems like here our bound is lower. However, we note that they are incomparable due to different assumptions. For example, there is a $\tau$ in the bound of Theorem \ref{thm:8}, which could also depend on the sparsity $s^*$ \cite{DBLP:journals/corr/abs-2010-13520}. \cite{Wang019a} also studies DP-SCO over sparsity constraint, it provides an upper bound of $\tilde{O}(\frac{s^*}{n^2\epsilon^2})$ under the assumption that the loss function is Lipschitz. Moreover, for high dimensional sparse mean estimation and Generalized Linear Model (GLM) with the Lipschitz loss and sub-Gaussian data, \cite{cai2020cost,cai2019cost} provided optimal rates of  $\tilde{O}(\frac{s^*\log d}{n}+\frac{(s^*\log d)^2}{(n\epsilon)^2})$. We can see that compared with these results, the error bound now becomes to $\tilde{O}(\frac{\tau s^{*\frac{3}{2}}}{ n\epsilon})$ due to data irregularity. Moreover, we can see that in the regular data case, the optimal rates of linear regression and GLM are the same, while in the heavy-tailed data case, there is a gap of $\tilde{O}(\sqrt{s^*})$ in the upper bounds. We conjecture this gap is necessary and will leave it as future research.
\end{remark}
In the following we will focus on the lower bound  of the loss functions in Theorem \ref{thm:8}.  Since our lower bound will be in the form of private minimax risk,  we first introduce the classical statistical minimax risk before discussing its $(\epsilon, \delta)$-private version. More details can be found in  \cite{barber2014privacy}.
 
 Let  $\mathcal{P}$ be  a class of distributions  over a data universe $\mathcal{X}$. For each distribution $p\in \mathcal{P}$, there is a deterministic function $\theta(p)\in \Theta$, where $\Theta$ is the parameter space. Let $\rho: \Theta \times \Theta :\mapsto \mathbb{R}_+ $ be  a semi-metric function on the space $\Theta$ and $\Phi: \mathbb{R}_+\mapsto \mathbb{R}_+$ be a non-decreasing function with $\Phi(0)=0$ (in this paper, we assume that  $\rho(x,y)=|x-y|$ and $\Phi(x)=x^2$ unless specified otherwise). We further assume that  $D=\{X_i\}_{i=1}^{n}$ are  $n$ i.i.d observations drawn according to some distribution $p\in \mathcal{P}$, and   $\hat{\theta}:\mathcal{X}^n\mapsto \Theta$ be some estimator. Then the minimax risk in metric $\Phi\circ \rho$ is defined by the following saddle point problem:
\begin{equation*}
\mathcal{M}_n(\theta(\mathcal{P}), \Phi\circ \rho):=\inf_{\hat{\theta}}\sup_{p\in \mathcal{P}}\mathbb{E}_p[\Phi(\rho(\hat{\theta}(D), \theta(p))],
\end{equation*}
where the supremum is taken over distributions $p\in \mathcal{P}$ and the infimum over all estimators $\hat{\theta}$.

In the $(\epsilon, \delta)$-DP model, the estimator $\hat{\theta}$ is obtained via some $(\epsilon, \delta)$-DP mechanism $Q$. Thus, we can also define the $(\epsilon, \delta)$-private minimax risk:   

\begin{equation*}
\mathcal{M}_n(\theta(\mathcal{P}), Q, \Phi\circ \rho):=\inf_{Q\in \mathcal{Q}}\inf_{\hat{\theta}}\sup_{p\in \mathcal{P}}\mathbb{E}_{p, Q}[\Phi(\rho(\hat{\theta}(D), \theta(p))],
\end{equation*}
where $\mathcal{Q}$ is the set of all the $(\epsilon, \delta)$-DP mechanisms. 

To proof the lower bound, we consider the sparse mean estimation problem, {\em i.e.,} $L_\mathcal{D}(w)= \mathbb{E}_{x\sim \mathcal{D}}[\|x-w\|_2^2 ]$, where the mean of $x$, $\mu(\mathcal{D})$, is $s^*$-sparse. Thus, we can see that the population risk function  satisfies Assumption \ref{ass:4} if we assume $\mathbb{E}x_j^2\leq \tau$ for each $j\in [d]$. Moreover, we have $\min_{w\in \mathcal{W}}L_\mathcal{D}(w)=0$ which indicates that the excess population risk of $w$ is equal to $\mathbb{E}\|w-\mu(\mathcal{D})\|_2^2$. That is, the lower bound of Theorem \ref{thm:8} reduced to the sparse mean estimation problem. Therefore, it is sufficient for us to consider the $(\epsilon, \delta)$-private minimax rate for the sparse mean estimation problem with $\mathbb{E}x_j^2\leq \tau$ for each $j\in [d]$.

In the non-private case, a standard approach to prove the lower bound of the minimax risk is reducing the original problem to a testing problem. Specifically, our goal is to identify a parameter $\theta\in \Theta$ from a finite collection of well-separated points. Given an index set $\mathcal{V}$ with finite cardinality, the indexed family of distributions   $\{P_v, v\in \mathcal{V}\}\subset \mathcal{P}$ is said to be a $2\gamma$-packing if $\rho(\theta(P_v), \theta(P_{v'}))\geq 2\gamma$ for all $v\neq v'\in \mathcal{V}$. In the standard hypothesis testing problem, nature chooses $V \in \mathcal{V}$ uniformly at random, then draws samples $X_1, \cdots X_n$ i.i.d. from the distribution $P_V$. The problem is to identify the index $V$. It has been shown that given a $2\gamma$-packing $\{P_v, v\in \mathcal{V}\}\subset \mathcal{P}$, 

\begin{equation*}
    \mathcal{M}_n(\theta(\mathcal{P}), \Phi\circ \rho)\geq \Phi(\gamma)\inf_{\psi}\mathbb{P}(\psi(D)\neq V), 
\end{equation*}
where $\mathbb{P}$ denotes the probability under the joint distribution of both $V$ and the samples $D$.

Similar to the non-private case, for the private minimax risk we have 
\begin{equation*}
    \mathcal{M}_n(\theta(\mathcal{P}), Q, \Phi\circ \rho)\geq \inf_{Q\in \mathcal{Q}}\Phi(\gamma)\inf_{\psi}\mathbb{P}_Q(\psi(\hat{\theta}(D))\neq V), 
\end{equation*}
where $\hat{\theta}(D)$ is the private estimator via some $(\epsilon, \delta)$-DP algorithm $Q$, where $\mathbb{P}_Q$ denotes the probability under the joint distribution of both $V$, the samples $D$ and $\hat{\theta}(D)$.

In the following we will consider a special indexed family of distributions  $\{P_v\}_{v\in \mathcal{V}} \subset \mathcal{P}$, which will be used in our main proof. We assume there exists a distribution $P_0$ such that for some fixed $p\in [0, 1]$ we have $(1-p)P_0+p P_v\in \mathcal{P}$ for all $v\in \mathcal{V}$. For simplicity for each $v\in \mathcal{V}$ we define the following parameter 
\begin{equation*}
    \theta_v:=\theta((1-p)P_0+pP_v).
\end{equation*}
We then define the separation of the set $\{\theta\}_v$ by 
\begin{equation*}
    \rho^*(\mathcal{V}):=\min\{\rho(\theta_v, \theta_{v'})| v, v'\in \mathcal{V}, v\neq v'\}. 
\end{equation*}
We have the following lower bound of $(\epsilon, \delta)$-private minimax risk based on the the family of distributions $\{(1-p)P_0+pP_v\}_{v\in \mathcal{V}}$. 

\begin{lemma}[Theorem 3 in \cite{barber2014privacy}]\label{thm:lower}
Fix $p\in [0,1]$ and define ${P_{\theta_v}}= (1-p)P_0+pP_v\in \mathcal{P}$. Let $\hat{\theta}$ be an $(\epsilon, \delta)$-DP estimator. Then 
\begin{align}
     &\mathcal{M}_n(\theta(\mathcal{P}), Q, \Phi\circ \rho) \nm \\
     &\geq \Phi(  \rho^*(\mathcal{V}))\frac{1}{|\mathcal{V}|}\sum_{v\in \mathcal{V}} P_{\theta_v}(\rho(\hat{\theta}, \theta_v)\geq \rho^*(\mathcal{V}))\notag \\ & \geq \Phi(  \rho^*(\mathcal{V})) \frac{(|\mathcal{V}|-1)(\frac{1}{2}e^{-\epsilon \lc np \rc}-\delta \frac{1-e^{-\epsilon \lc n p \rc} }{1-e^{-\epsilon}})  }{1+(|\mathcal{V}|-1)e^{-\epsilon \lc n p \rc} }. 
\end{align}
\end{lemma}
By Lemma \ref{thm:lower} and a set of hard distributions, we have the following result.

\begin{theorem}\label{thm:9}
Consider the class of distributions $\mathcal{P}_{d}^{s^*}(\tau)$ as distributions $P$ in the $d$ dimensional space satisfying that $\mathbb{E}_{X\sim P}X_j^2\leq \tau$ for all $x_j\in [d]$ and the mean of $P$, $\mu(P)$ is $s^*$-sparse. Then the $(\epsilon, \delta)$-private minimax risk with $\Phi(x)=x^2$ and $\rho(x_1, x_2)=\|x_1-x_2\|_2$ satisfies that 
\begin{equation}
        \mathcal{M}_n(\theta(\mathcal{P}_{d}^{s^*}(\tau)), Q, \Phi\circ \rho) \geq \Omega (\frac{\tau \min \{s^*\log d, \log \frac{1}{\delta}\} }{n\epsilon}).
\end{equation}
Thus, for DP-SCO problem  under Assumption \ref{ass:4}. The information-theoretical lower bound of the expected population risk in the $(\epsilon, \delta)$-DP model is  $\Omega (\frac{\tau \min \{s\log d, \log \frac{1}{\delta}\} }{n\epsilon})$. 
\end{theorem}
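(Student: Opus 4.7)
The plan is to apply the generic private Fano-type inequality (Lemma \ref{thm:lower}) to a product-style packing of $s^*$-sparse mean vectors, after first observing that the DP-SCO statement reduces to sparse mean estimation. The squared loss $\ell(w,x)=\|x-w\|_2^2$ satisfies Assumption \ref{ass:4} with $\mu_r=\gamma_r=2$ and $\mathbb{E}[(\nabla_j\ell)^2]=O(\tau)$ on a bounded domain, and its excess population risk equals $\|w-\mu(\mathcal D)\|_2^2$, so any lower bound on the private minimax risk for estimating an $s^*$-sparse mean under the per-coordinate moment constraint lifts directly to DP-SCO under Assumption \ref{ass:4}.

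For the mean estimation lower bound, I will partition $[d]$ into $s^*$ disjoint blocks $B_1,\ldots,B_{s^*}$ each of size $\lfloor d/s^*\rfloor$ and take
\[
\mathcal V \;=\; B_1\times B_2\times\cdots\times B_{s^*},
\]
so $|\mathcal V|\ge (d/s^*)^{s^*}$ and $\log|\mathcal V|=\Theta(s^*\log d)$. For each $v=(j_1,\ldots,j_{s^*})\in\mathcal V$ set $\mu_v=\sigma\sum_{i=1}^{s^*}e_{j_i}$, and define the mixture $P_{\theta_v}=(1-p)\delta_{\mathbf 0}+p\,\delta_{\mu_v}$, whose mean $\theta_v=p\mu_v$ is automatically $s^*$-sparse. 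Because $\mathbb{E}_{X\sim P_{\theta_v}} X_j^2\le p\sigma^2$ for every coordinate $j$, the choice $\sigma^2=\tau/p$ places every $P_{\theta_v}$ in $\mathcal P_d^{s^*}(\tau)$. Any two distinct $v,v'\in\mathcal V$ differ in at least one block, contributing exactly $2$ to the Hamming distance, so $\rho^*(\mathcal V)\ge p\sigma\sqrt 2$ and therefore
\[
\rho^*(\mathcal V)^2 \;\ge\; 2p^2\sigma^2 \;=\; 2p\tau.
\]

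The final step is to tune $p$ so the two constraints in Lemma \ref{thm:lower} balance. I will choose $p=c_0\min\{s^*\log(d/s^*),\,\log(1/\delta)\}/(n\epsilon)$ for a small absolute constant $c_0<1$. The main-term ratio $(|\mathcal V|-1)/(1+(|\mathcal V|-1)e^{-\epsilon\lceil np\rceil})$ is $\Omega(e^{\epsilon\lceil np\rceil})$ whenever $|\mathcal V|e^{-\epsilon np}\ge 1$, which is guaranteed by $\epsilon np\le c_0\log|\mathcal V|$. The $\delta$-correction $\delta(1-e^{-\epsilon\lceil np\rceil})/(1-e^{-\epsilon})$ is at most $2\delta/\epsilon$ for small $\epsilon$, so dominance by $\tfrac14 e^{-\epsilon\lceil np\rceil}$ reduces to $\epsilon np\le\log(\epsilon/(8\delta))$, which is implied by $\epsilon np\le c_0\log(1/\delta)$. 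Both conditions hold simultaneously with the stated $p$, hence Lemma \ref{thm:lower} yields
\[
\mathcal M_n(\theta(\mathcal P_d^{s^*}(\tau)),Q,\Phi\circ\rho)\;\ge\;\Omega\!\left(\rho^*(\mathcal V)^2\right)\;\ge\;\Omega\!\left(\frac{\tau\,\min\{s^*\log d,\,\log(1/\delta)\}}{n\epsilon}\right),
\]
and the DP-SCO bound follows from the reduction in the first paragraph.

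The main obstacle is the simultaneous tuning just described: the product packing size $\log|\mathcal V|=\Theta(s^*\log d)$ pushes $p$ upward to enlarge the separation, while the $\delta$-correction caps $p$ at $\log(1/\delta)/(n\epsilon)$, and the $\min$ in the theorem statement is exactly the crossover between the two regimes. A minor technical nuisance is the rounding in $\lceil np\rceil$, which only shifts constants and can be absorbed into $c_0$. Using a higher-distance Gilbert--Varshamov packing of $s^*$-subsets would enlarge the minimum Hamming separation, but since the $\delta$-correction already caps $\epsilon np$ independently of $\log|\mathcal V|$, the bottleneck for the bound stated here is set precisely by the product packing with minimum Hamming distance $2$.
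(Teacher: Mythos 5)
Your proof is correct and follows essentially the same strategy as the paper's: reduce to $s^*$-sparse private mean estimation via the squared loss, construct a family of two-point mixtures $(1-p)\delta_{0}+p\,\delta_{\mu_v}$ indexed by a packing, compute the separation $\rho^*(\mathcal{V})^2\gtrsim p\tau$, and invoke Lemma \ref{thm:lower} with $p$ set to the crossover $\Theta(\min\{s^*\log d,\log(1/\delta)\}/(n\epsilon))$. The one structural difference is the packing: the paper uses the Gilbert--Varshamov-type packing of $s$-sparse $\{-1,0,1\}$-vectors from \cite{raskutti2011minimax} (minimum Hamming distance $s/2$, rescaled by $1/\sqrt{2s}$ so the per-coordinate moment budget $\mathbb{E}X_j^2\le\tau$ is met), whereas you take a product-over-blocks packing of one-hot vectors with minimum Hamming distance $2$ and compensate with larger entries $\sigma=\sqrt{\tau/p}$. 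Both give $\log|\mathcal V|=\Theta(s^*\log(d/s^*))$ and $\rho^*(\mathcal V)^2\gtrsim p\tau$, so the choice is cosmetic---your packing is a bit more elementary to describe, the paper's is the standard one in the sparse-estimation literature. One small caveat: the paper folds the $\epsilon$-dependent corrections (the $e^\epsilon$ and $1-e^{-\epsilon}$ factors, which account for the ceiling $\lceil np\rceil$ and the $\delta$-subtraction in Lemma \ref{thm:lower}) directly into the definition of $p$, making the dominance inequality exact; you absorb them into the constant $c_0$, which is fine but implicitly assumes $\log(1/\delta)\gtrsim\log(1/\epsilon)$, i.e., the usual $\delta\ll\epsilon$ regime. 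It would be cleaner to state this assumption explicitly or to mirror the paper's choice of $p$ verbatim.
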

Compared with the upper bound in Theorem \ref{thm:8} and the lower bound in Theorem \ref{thm:9}, we can see there is still a gap of $\tilde{O}(\sqrt{s^*})$. It is an open problem that whether we can further improve the upper bound. For the low dimensional case, \cite{kamath2021improved,barber2014privacy}  showed that the optimal rate of the mean estimation is $O(\frac{\tau d}{n\epsilon})$ in both $\epsilon$ and $(\epsilon, \delta)$-DP models under the assumption that the gradient of loss has bounded second order moment. Compared with this  here we extend to  the high dimensional sparse case. 
\section{Experiments}
In this section we will study the practical behaviours of our previous algorithms on synthetic and real world datas. 
\subsection{Data Generation}
We will mainly study squared loss and logistic loss, and in the following we first clarify the synthetic data generation process for different loss functions. 

For linear regression, we generate the data via the model $y=\langle w^*, x \rangle + \iota$, where $\iota$ is some (heavy-tailed) noise and $x$ is sampled from different (heavy-tailed) distributions (see the following subsections for details). Since in the previous parts we considered two different settings, we generate $w^*$ via different approaches: For the case where the domain is a polytope, we randomly generate a $w^*$ such that $\|w^*\|_1$. Thus, $\mathcal{W}$ will be the unit $\ell_1$-norm ball. For sparse linear regression setting, we first pre-fix the sparsity $s^*$, then we  sample a $w$ form the normal distribution with $mean=0$ and $scale=100$ and  set random $(d-s^{*})$ elements to $0$. After that  we project the vector to the unit $\ell_2$-norm ball and denote the vector after projection as $w^*$. 

For logistic regression,  we generate the data as the followings $y=\text{sign}(\text{sigmoid}(z)-0.5)$ where $\text{sigmoid}(x)=\frac{1}{1+e^{-x}}$ is the sigmoid function and  $z=\langle x,
w^{*}\rangle + \zeta$ where $\zeta$ is some noise and $x$ is sampled from different distributions (see the following subsections for details). The generation of the optimal vector $w^*$ is the same as in the linear regression case.  

For real world data, we will use Blog Feedback data ($n=60021, d=281$), Twitter data ($n=583,249, d=77$),  Winnipeg data ($n=325834, d=175$) and Year Prediction data ($n=515,345, d=90$), which could be found at \cite{Dua:2019}. 

\subsection{Experimental Settings}
\paragraph{Measurements:} For all the experiments we will use the excess population risk, {\em i.e., $L_\mathcal{D}(w)-L_\mathcal{D}(w^*)$}
as our evaluation measurement. However, since it is impossible to precisely evaluate the population risk function, here we will use the empirical risk  to approximate it. 
For the real world data, in the case where $\mathcal{W}$ is some polytope, we use the non-private Frank-Wolfe algorithm to get the optimal parameter $w^*=\arg\min_{w\in \mathcal{W}}L_\mathcal{D}(w)$. 
We note that all of our experiments are repeated for at least $20$ times, and we the average of the results as our final results. 
\paragraph{Initial Vector:} For the initial vector, in the polytope case, we randomly sample a $w$ in the unit $\ell_1$-norm ball. And in the sparse case, we randomly generate a $s^*$-sparse vector in the unit $\ell_2$-norm ball. 
\paragraph{Parameters in Algorithms:}
Most of the parameters are directly followed by theoretical results. In Algorithm \ref{alg:1} we set $T= \lfloor (n\epsilon)^\frac{1}{3} \rfloor$, $\eta_t=\frac{2}{t+2}$ and $s=\lfloor n\epsilon \rfloor$. In Algorithm \ref{alg:2} we set $T=(n\epsilon)^\frac{2}{5}$, $K=\frac{(n\epsilon)^\frac{1}{4}}{T^\frac{1}{8}}$ and $\eta_t=\frac{2}{t+2}$. In Algorithm \ref{alg:3} we set $s=cs^{*}, T=\lfloor \log(n) \rfloor, K=(\frac{n \epsilon}{sT})^{\frac{1}{4}}$ and $\eta=0.5$, where $c$ is some integer which is different under various settings.  In Algorithm \ref{alg:5} we set $s=2s^{*}, T=\lfloor \log(n) \rfloor, k=c_2 n \epsilon$ and $\eta=0.5$, where $c_2$ is some integer which is different under various settings. For $(\epsilon, \delta)$-DP, we set $\delta=\frac{1}{n^{1.1}}$.

\subsection{Results of Algorithm \ref{alg:1}} 
We first study Algorithm \ref{alg:1} on synthetic data. We start from the experiments on linear regression, where $x$ is sampled from the log-normal distribution $\text{Lognormal}(0, 0.6)$ and $\zeta\sim \mathcal{N}(0, 0.1)$ (Note that a log-normal distribution $\text{Lognormal}(0, \sigma^2)$ has the PDF $p(w)=\frac{1}{w\sigma \sqrt{2\pi}} \exp(-\frac{\ln^2 w}{2\sigma^2})$). Figure \ref{fig:1} shows the results: Figure \ref{fig1a} studies the excess empirical risk with different dimensions and privacy parameters $\epsilon$, under the setting where $n=10^4$. Figure \ref{fig1b} reveals the error with different dimensions and sample sizes with fixed $\epsilon=1$. Figure \ref{fig1c} studies the difference of empirical risk between private and non-private case with different sample sizes,  where $\epsilon=1$ and $d=400$. 

 \begin{figure*}[!htbp]
\centering
\subfigure[Sample size $n=10^4$ \label{fig1a}]{
\includegraphics[width=0.31\textwidth]{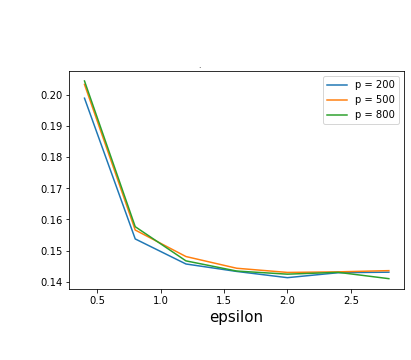}
}
\subfigure[$\epsilon=1$ \label{fig1b}]{
\includegraphics[width=0.31\textwidth]{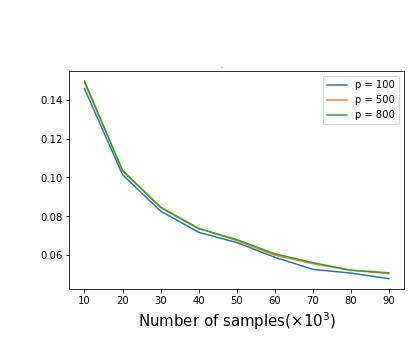}
}
\subfigure[$\epsilon=1$ and $d=400$ \label{fig1c}]{
\includegraphics[width=0.31\textwidth]{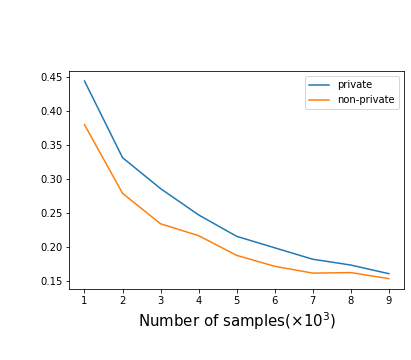}
}
\caption{ Results of Algorithm \ref{alg:1} for linear regression with $x$ sampled from log-normal distribution.}
\label{fig:1}
\end{figure*}

Besides the linear regression, we also study the logistic regression with various dimensions, sample sizes and privacy parameters $\epsilon$,  where $x$ is sampled from $\text{Lognormal}(0, 0.6)$ and there is no noise, see Figure \ref{fig:2} for details. 

 \begin{figure*}[!htbp]
\centering
\subfigure[Sample size $n=10^4$ \label{fig2a}]{
\includegraphics[width=0.31\textwidth]{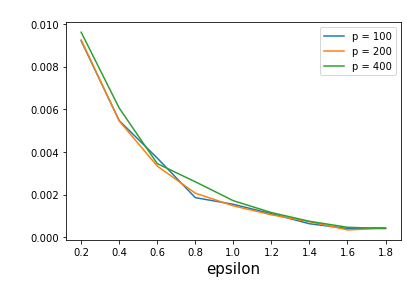}
}
\subfigure[$\epsilon=1$ \label{fig2b}]{
\includegraphics[width=0.31\textwidth]{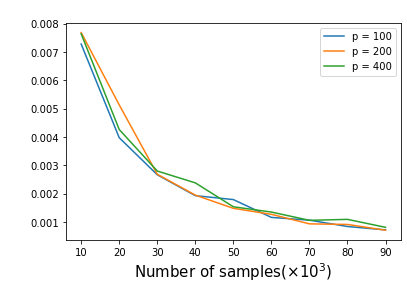}
}
\subfigure[$\epsilon=1$ and $d=400$ \label{fig2c}]{
\includegraphics[width=0.31\textwidth]{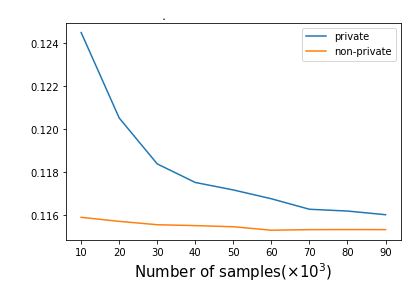}
}
\caption{ Results of Algorithm \ref{alg:1} for logistic regression with $x$ sampled from log-normal distribution.}
\label{fig:2}
\end{figure*}

From the results in Figure \ref{fig:1} and \ref{fig:2} we can see that: 1)The dimension of the data does not affect too much on the error. For example, the errors are almost the same for $p=200$ and $p=800$ under the same $\epsilon$ and $n$. This is due to that theoretically the error is just upper bounded by logarithm, instead of polynomial, of the dimension.  Moreover, when the sample size is larger, the error becomes smaller. This is due to that the error is disproportionate to sample size. However, the rate of this decrease is not fast. For example, in Figure \ref{fig1b} we can see that the error decreases from 0.14 to 0.03 when the sample size increases from $10^4$ to $9\times 10^4$. This possibly is due to that the error is not proportional to $\frac{1}{n}$, as we showed in Theorem \ref{thm:2}. We can also see that when the sample size is large enough, the private estimator is close to the optimal parameter. 

In Figure \ref{fig:25} and \ref{fig:26}, we conduct experiments on real world data for linear regression and logistic regression, respectively. We can see that in most of the cases, the error decreases when the sample size or the privacy parameter $\epsilon$ becomes larger. However, compared with the previous results on synthetic data. The results in real data is unstable. The main reason may be the real data does not satisfy our assumptions or it is difficult to estimation the second order moment of the data. 
 \begin{figure*}[!htbp]
\centering
\subfigure[Blog \label{fig25a}]{
\includegraphics[width=0.31\textwidth]{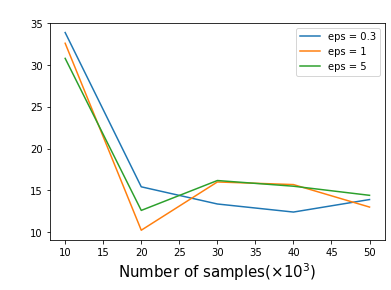}
}
\subfigure[Twitter \label{fig25b}]{
\includegraphics[width=0.31\textwidth]{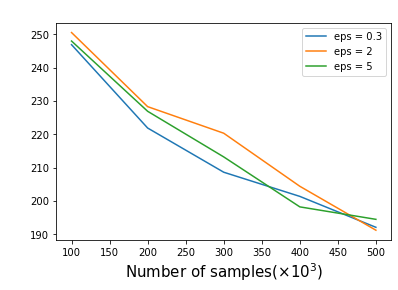}
}
\caption{ Results on real data of Algorithm \ref{alg:1} for linear regression.}
\label{fig:25}
\end{figure*}
 \begin{figure*}[!htbp]
\centering
\subfigure[Winnipeg\label{fig26a}]{
\includegraphics[width=0.31\textwidth]{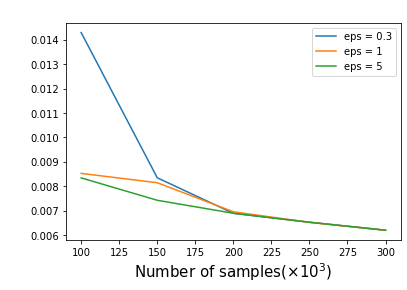}
}
\subfigure[Year Prediction \label{fig26b}]{
\includegraphics[width=0.31\textwidth]{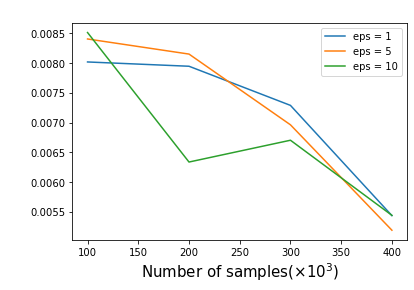}
}
\caption{ Results on real data of Algorithm \ref{alg:1} for logistic regression.}
\label{fig:26}
\end{figure*}
\subsection{Results of Algorithm \ref{alg:2}} 
Since Algorithm \ref{alg:2} is only for linear regression, in this section we only conduct experiments on squared loss. We first study the case where $x$ is sampled from  $\text{Lognormal}(0, 0.6)$ and $\zeta\sim \mathcal{N}(0, 0.1)$. Figure \ref{fig:3} shows the results: Figure \ref{fig3a} studies the excess empirical risk with different dimensions and privacy parameters $\epsilon$, under the setting where $n=10^4$. Figure \ref{fig3b} studies the excess empirical risk with different dimensions and sample size, under the setting where $\epsilon=1$. Figure \ref{fig3c} studies the difference of empirical risk between private and non-private case with different sample size,  where $\epsilon=1$ and $d=200$.  
 \begin{figure*}[!htbp]
\centering
\subfigure[Sample size $n=10^4$ \label{fig3a}]{
\includegraphics[width=0.31\textwidth]{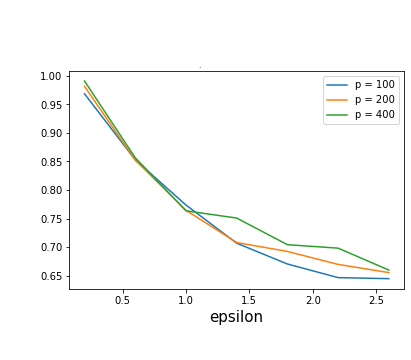}
}
\subfigure[$\epsilon=1$ \label{fig3b}]{
\includegraphics[width=0.31\textwidth]{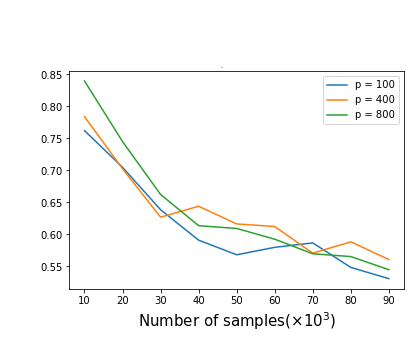}
}
\subfigure[$\epsilon=1$ and $d=200$ \label{fig3c}]{
\includegraphics[width=0.31\textwidth]{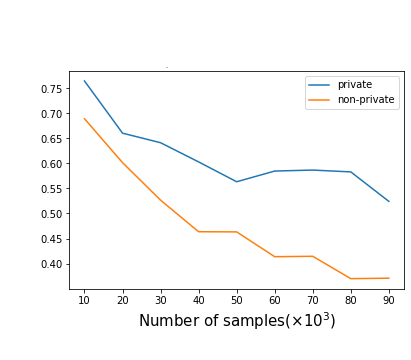}
}
\caption{ Results of Algorithm \ref{alg:2} for linear with $x$ sampled from log-normal distribution.}
\label{fig:3}
\end{figure*}

Besides the log-normal features, in Figure \ref{fig:4} we also study the case where $x$ is sampled from the Student's t-distribution with degree of freedoms $\upsilon=10$, {\em i.e., its PDF is $p(w)=\frac{\Gamma(11/2)}{\sqrt{10\pi}\Gamma(5)}(1+\frac{w^2}{10})^{-5.5}$}, and $\zeta\sim \mathcal{N}(0, 0.1)$. Figure \ref{fig4a} studies the error with various dimensions and $\epsilon$.  Figure \ref{fig4b} studies the error with various sample sizes and $\epsilon$ and Figure \ref{fig4c} provides detailed comparisons between the private and the non-private case. 

 \begin{figure*}[!htbp]
\centering
\subfigure[Sample size $n=10^5$ \label{fig4a}]{
\includegraphics[width=0.31\textwidth]{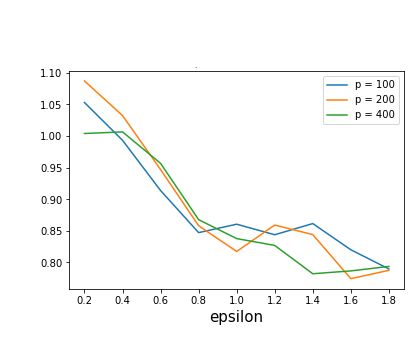}
}
\subfigure[$\epsilon=1$ \label{fig4b}]{
\includegraphics[width=0.31\textwidth]{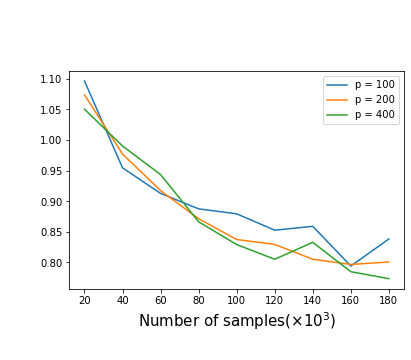}
}
\subfigure[$\epsilon=1$ and $d=200$ \label{fig4c}]{
\includegraphics[width=0.31\textwidth]{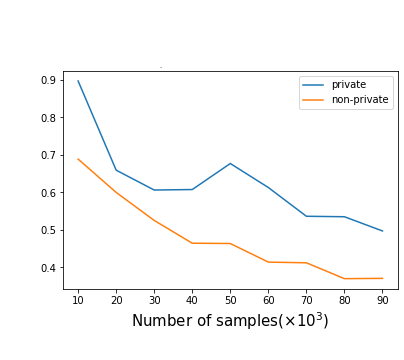}
}
\caption{ Results of Algorithm \ref{alg:2} for linear with $x$ sampled from Student's t-distribution.}
\label{fig:4}
\end{figure*}
 \begin{figure*}[!htbp]
\centering
\subfigure[Sample size $n=5\times 10^4$ and $s^*=20$ \label{fig5a}]{
\includegraphics[width=0.31\textwidth]{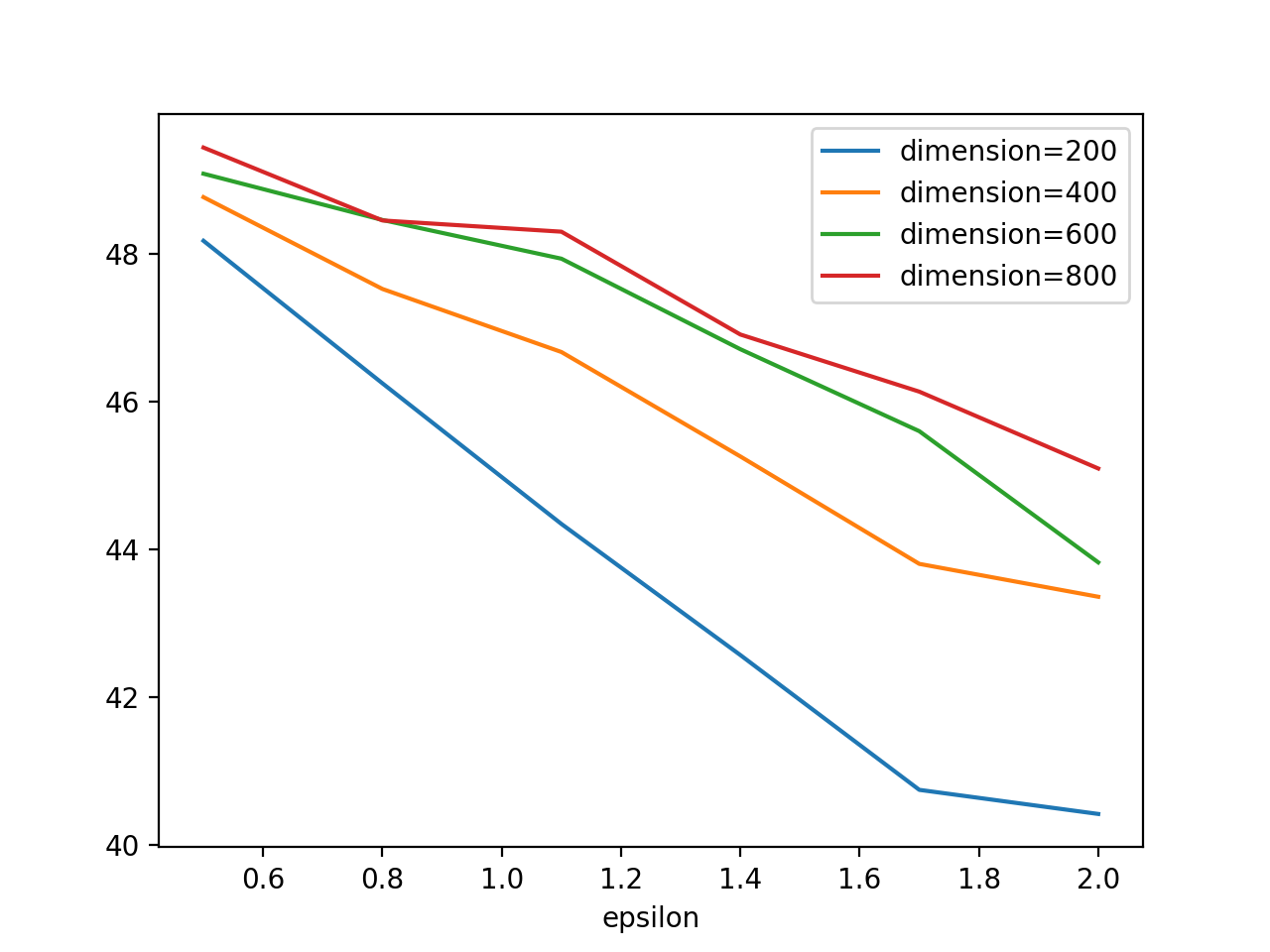}
}
\subfigure[$\epsilon=1$ and $s^*=20$ \label{fig5b}]{
\includegraphics[width=0.31\textwidth]{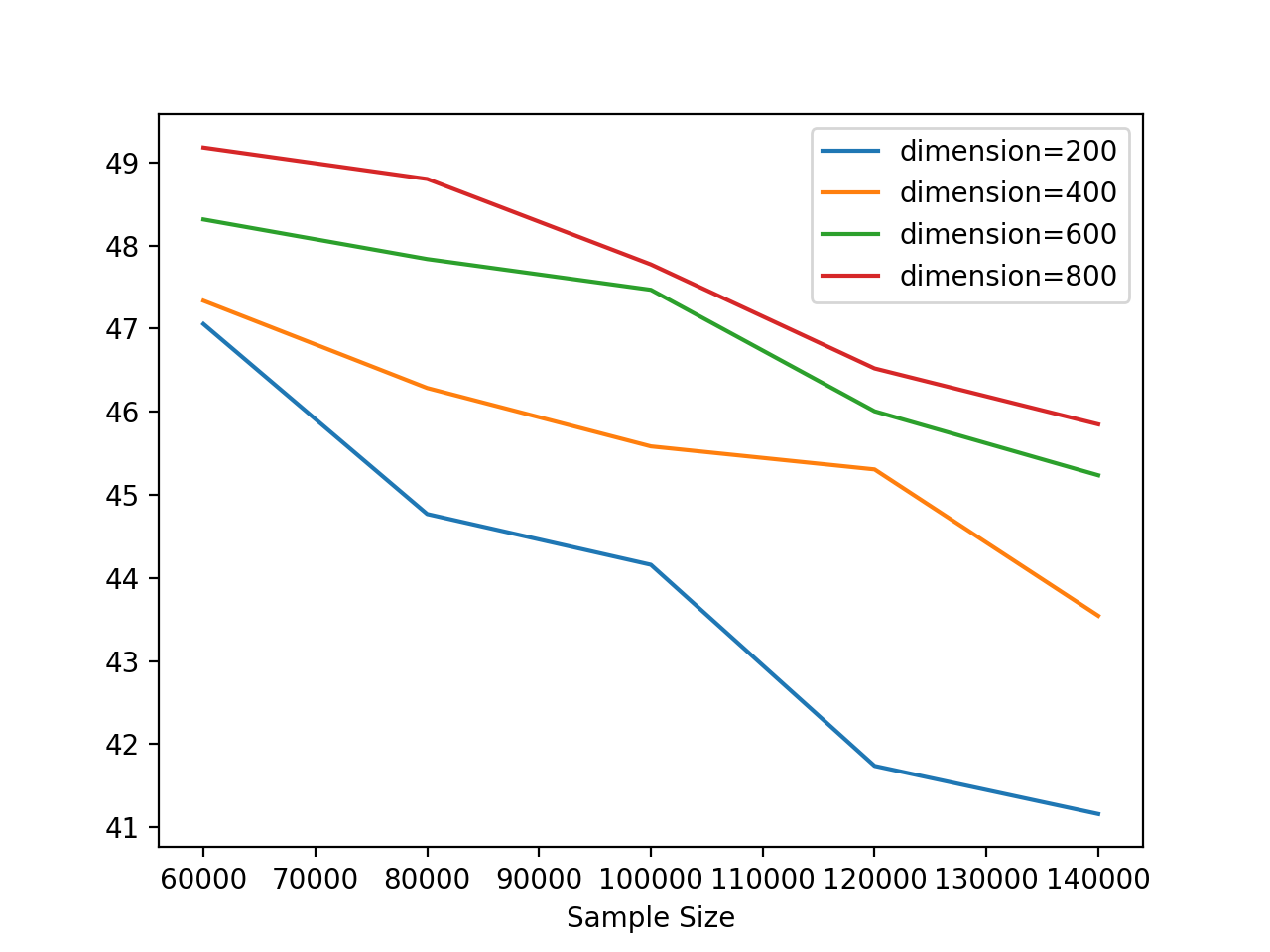} 
}
\subfigure[$\epsilon=1$ and $n=5\times 10^4$  \label{fig5c}]{
\includegraphics[width=0.31\textwidth]{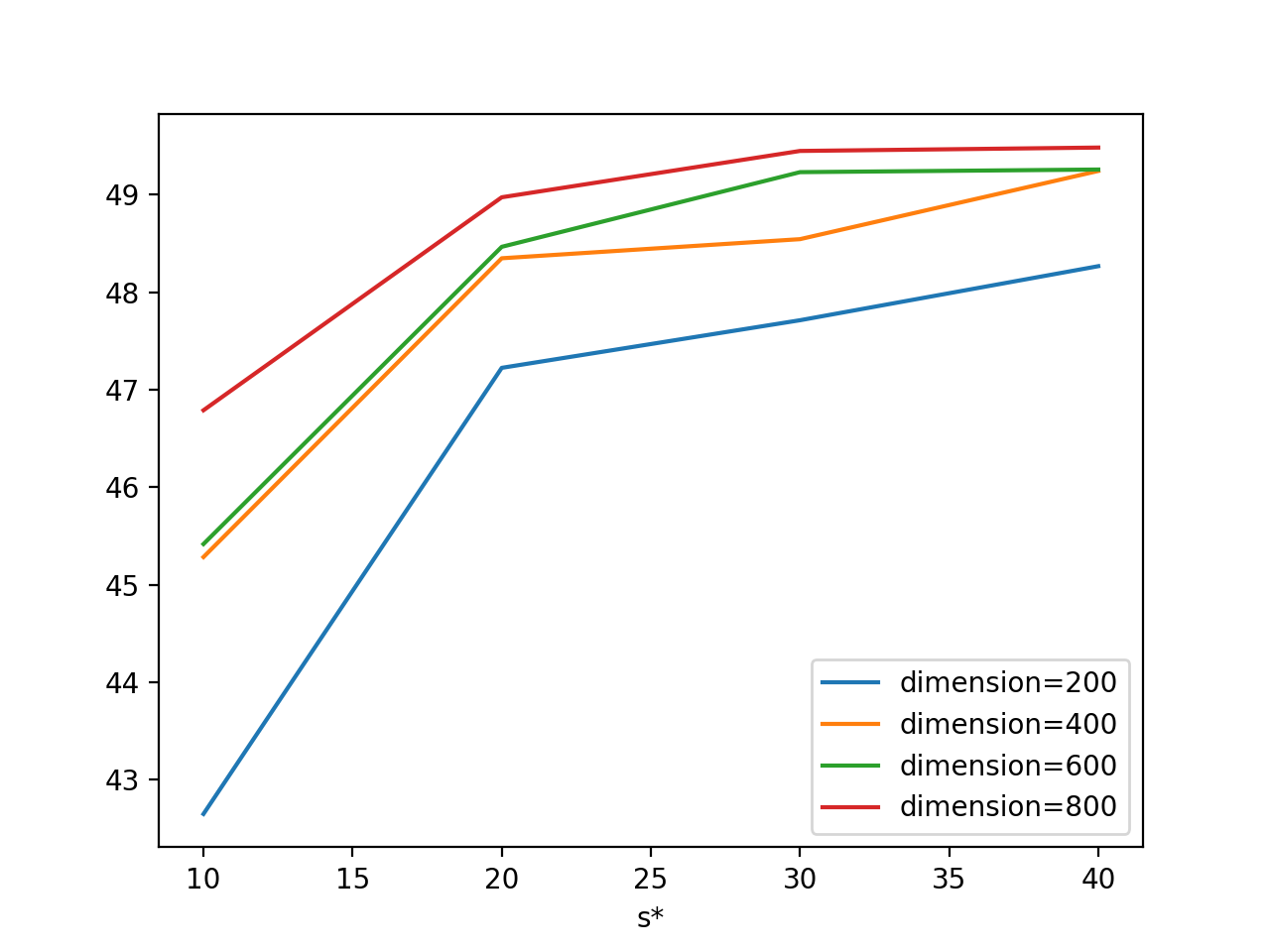}
}
\caption{ Results of Algorithm \ref{alg:3} for linear regression with $x$ sampled from Gaussian distribution and noise sampled from log-normal distribution.}
\label{fig:5}
\end{figure*}
 \begin{figure*}[!htbp]
\centering
\subfigure[Sample size $n=5\times 10^4$ and $s^*=20$ \label{fig7a}]{
\includegraphics[width=0.31\textwidth]{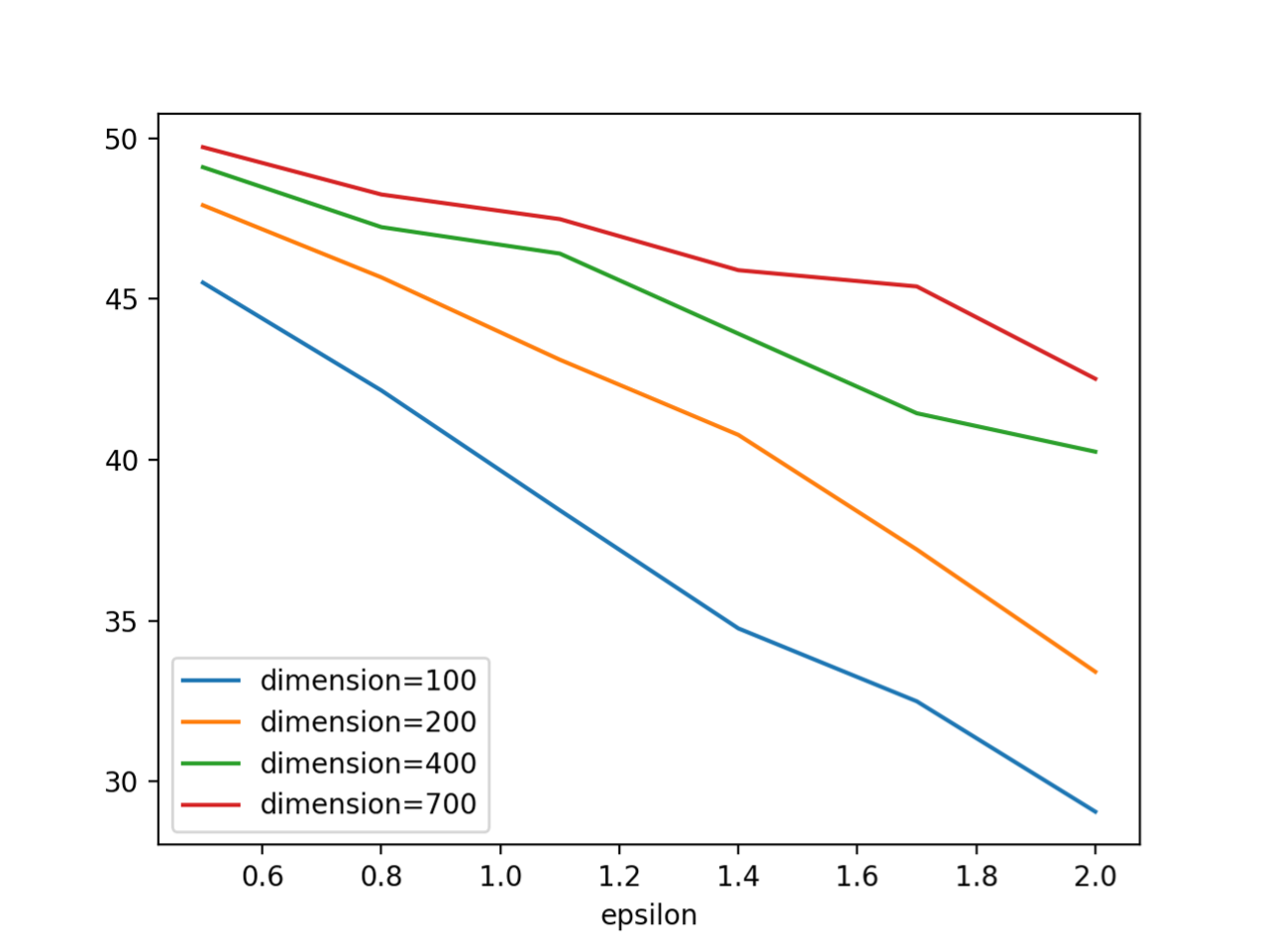} 
}
\subfigure[$\epsilon=1$ and $s^*=20$ \label{fig7b}]{
\includegraphics[width=0.31\textwidth]{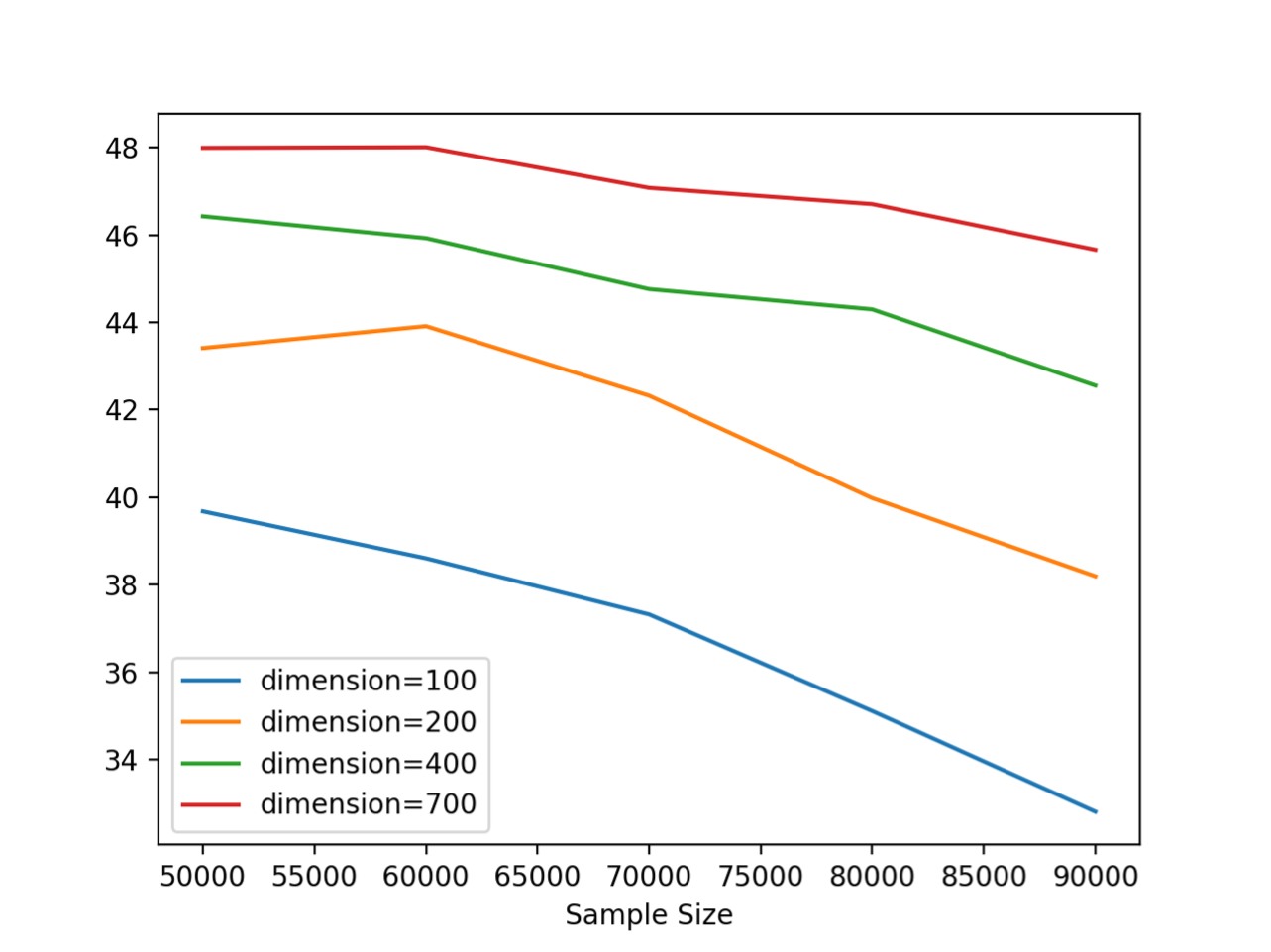} 
}
\subfigure[$\epsilon=1$ and $n=5\times 10^4$ \label{fig7c}]{
\includegraphics[width=0.31\textwidth]{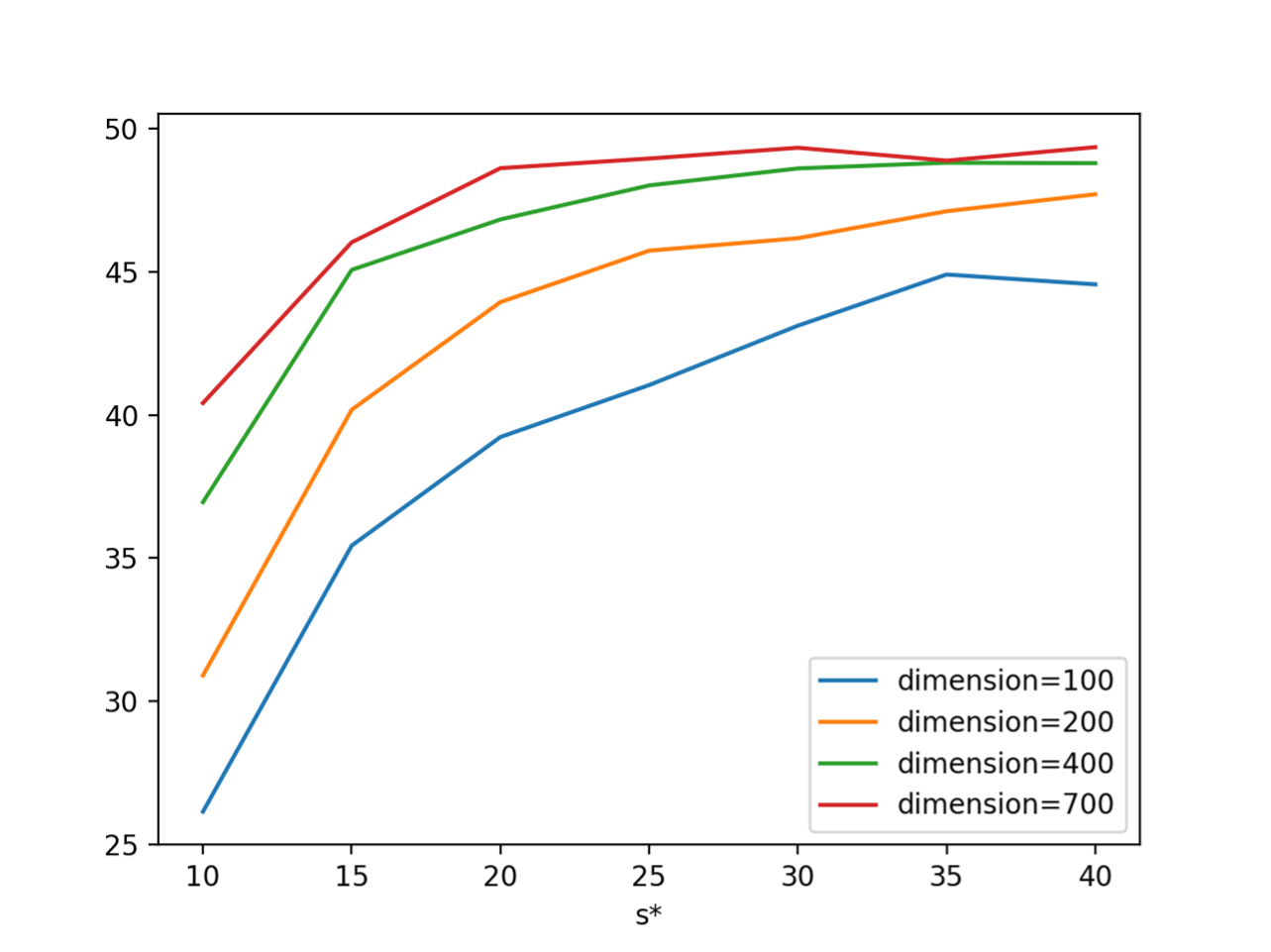} 
}
\caption{ Results of Algorithm \ref{alg:3} for linear regression with $x$ sampled from Gaussian distribution and noise sampled from log-logistic distribution.}
\label{fig:7}
\end{figure*}

 \begin{figure*}[!htbp]
\centering
\subfigure[Sample size $n=5\times 10^4$ and $s^*=20$ \label{fig8a}]{
\includegraphics[width=0.31\textwidth]{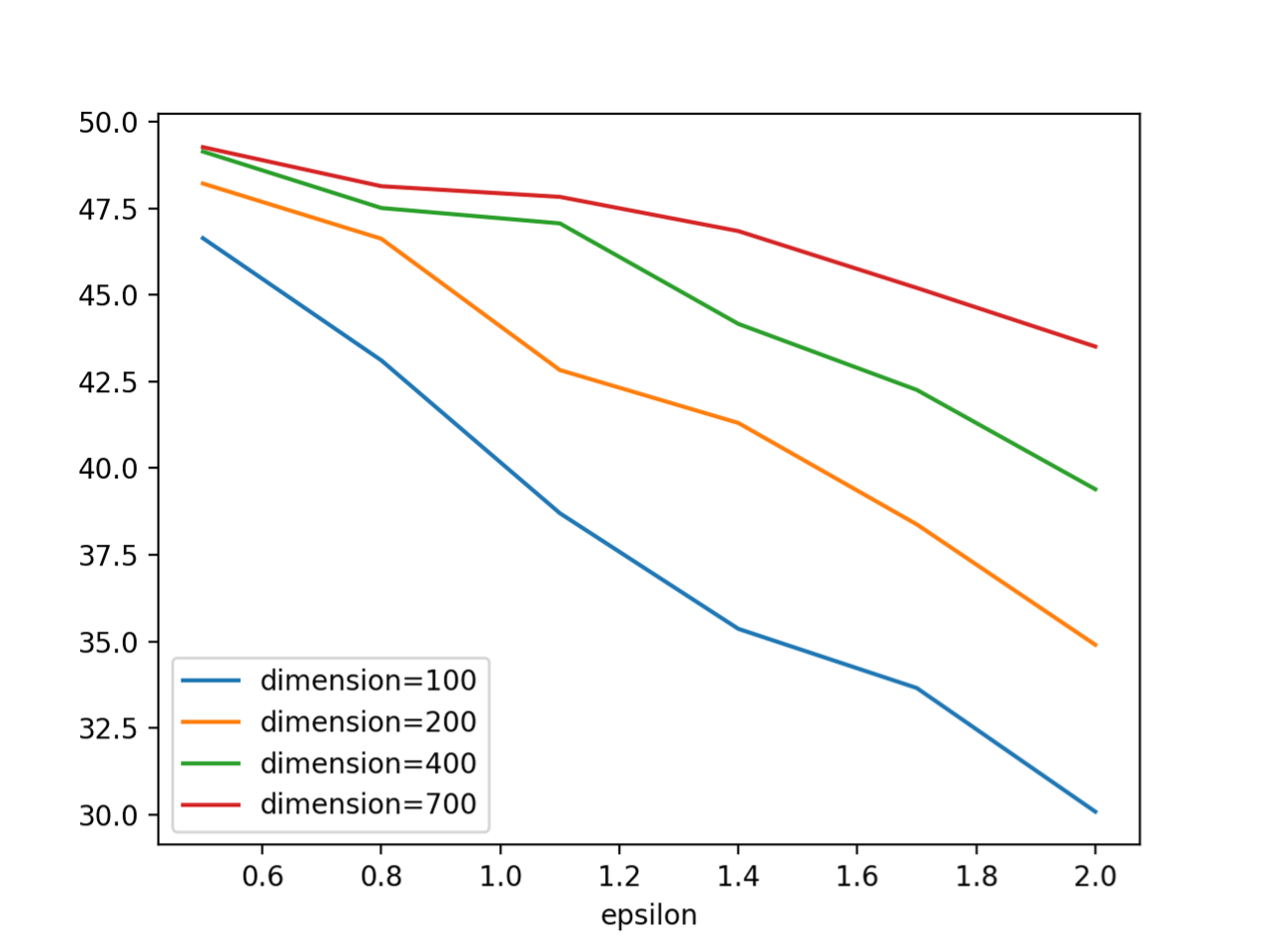} 
}
\subfigure[$\epsilon=1$ and $s^*=20$ \label{fig8b}]{
\includegraphics[width=0.31\textwidth]{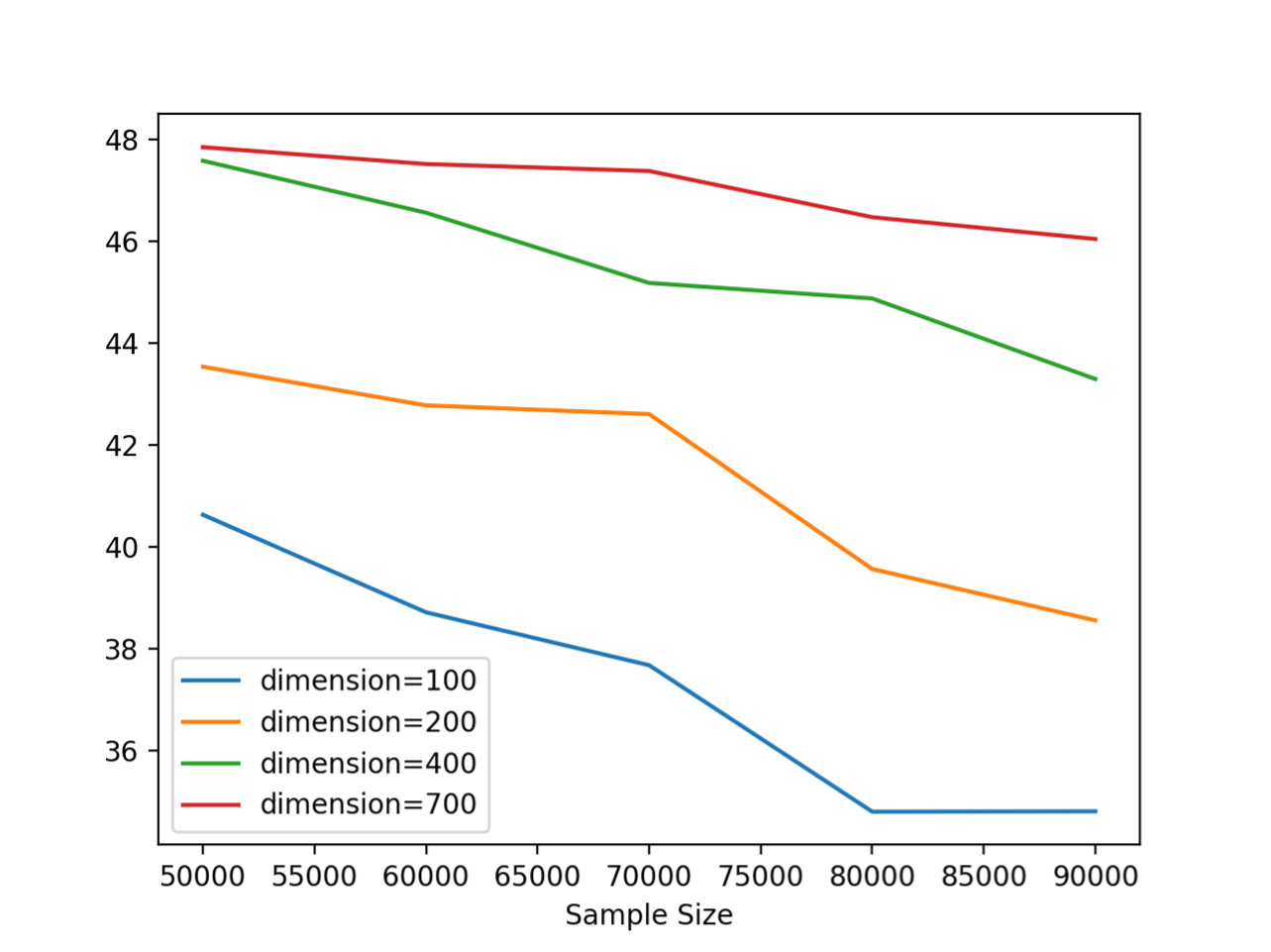} 
}
\subfigure[$\epsilon=1$ and $n=5\times 10^4$ \label{fig8c}]{
\includegraphics[width=0.31\textwidth]{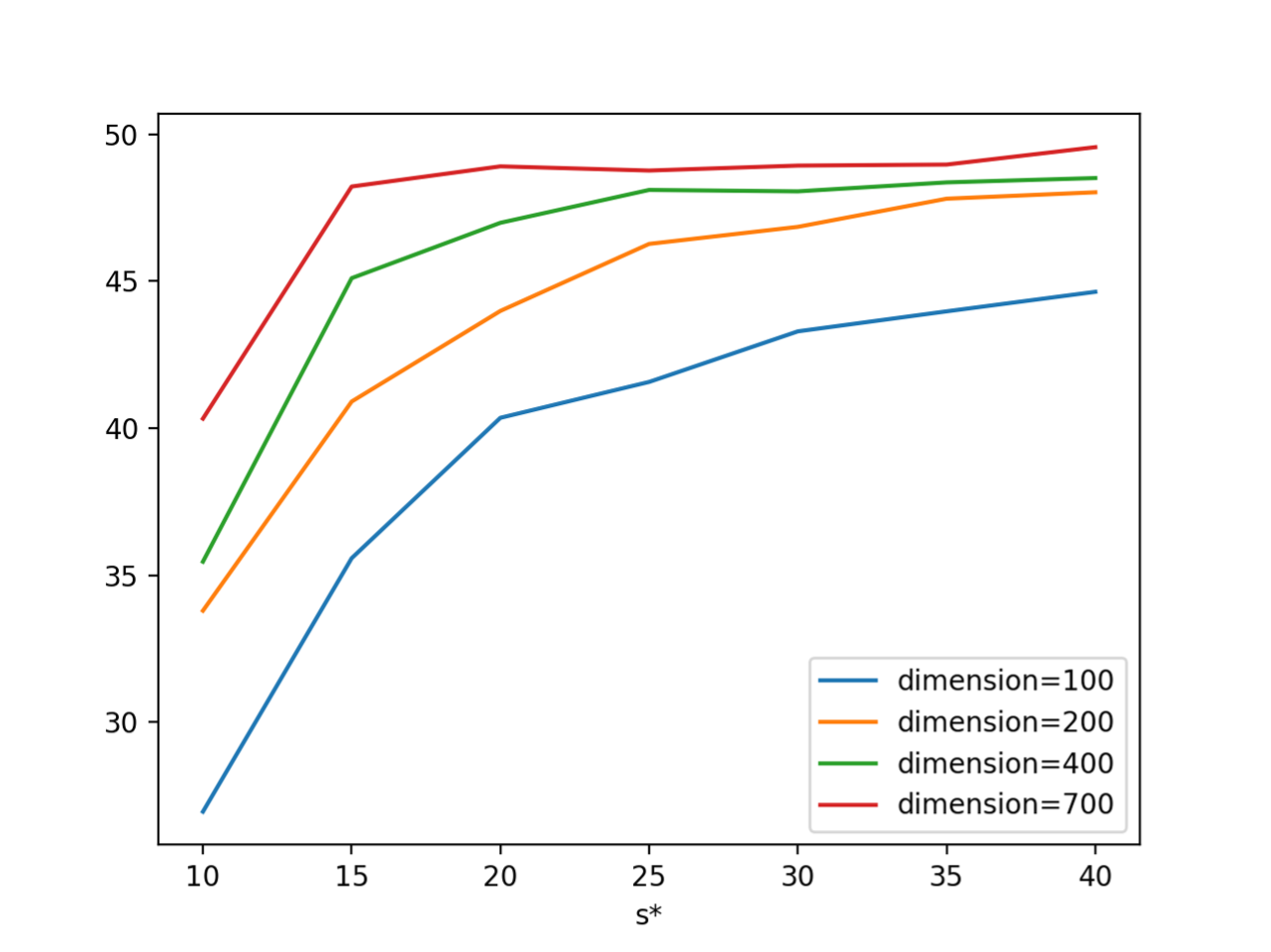} 
}
\caption{ Results of Algorithm \ref{alg:3} for linear regression with $x$ sampled from Gaussian distribution and noise sampled from log-gamma distribution.}
\label{fig:8}
\end{figure*}

From Figure \ref{fig:3} and \ref{fig:4} we can see that we have almost the same phenomenons as in Algorithm \ref{alg:1} in the previous subsection, which support our previous theoretical analysis. However, there are still some differences: First, compared with the results in Figure \ref{fig:1} we can see the trends in Figure \ref{fig:3} and \ref{fig:4} are unstable and non-smooth. Secondly, we can see the errors of Figure \ref{fig:3} are greater than the errors in Figure \ref{fig:1} under the same setting, which contradict to our previous theoretical results. We conjecture the main reason is that although the Algorithm \ref{alg:2} is better than Algorithm \ref{alg:1} theoretically, the hidden constant might be quite large and the sample size is not large enough. We leave it as an open problem to design more practical algorithms.

\subsection{Results of Algorithm \ref{alg:3} and  Algorithm \ref{alg:5}} 
We then study the practical behaviors of Algorithm \ref{alg:3}. Unlike the previous subsections, in Figure \ref{fig:5} we focus on the sparse linear model where $x$ is sampled from $\mathcal{N}(0, 5)$ and the noise is sampled from $\text{Lognormal}(0, 0.5)$. Given fixed $n=5\times 10^4$ and sparsity $s^*=20$, Figure \ref{fig5a} reveals the relation between the error and the privacy $\epsilon$ with different dimensions. Figure \ref{fig5b} shows the results of the error w.r.t various sample sizes for different dimensions when $\epsilon=1$ and $s^*=20$. Furthermore, in Figure \ref{fig5c} we also investigate the error for different sparsity $s^*$ with various dimensions under $\epsilon=1$ and $n=5\times 10^4$. In Figure \ref{fig:7} we studied the setting where the feature vector $x\sim \mathcal{N}(0, 5)$ and the noise sampled from the log-logistic distribution with $c=0.1$ (note that the PDF of  log-logistic distribution with $c$  is $p(w)=cw^{-c-1}(1+w^{-c})^{-2}$). And Figure \ref{fig:8} considers the case where $x\sim \mathcal{N}(0, 5)$ and the noise sampled from the log-gamma distribution with $c=0.5$ (note that the PDF of  log-gamma distribution with $c$  is $p(w)=\frac{\exp(cw)-\exp(w)}{\Gamma(c)}$, where $\Gamma(\cdot)$ is the Gamma function).

In Figure \ref{fig:6} we study Algorithm \ref{alg:5} with the feature vector $x\sim \mathcal{N}(0, 5)$ and the noise sampled from the logistic distribution with 
$u=0$ and $s=0.5$ (note that the PDF of  logistic distribution with $(u, s)$  is $p(w)=\frac{\exp(-(w-u)/s)}{s(1+\exp(-(w-u)/s))^2}$). And in Figure \ref{fig:9} we consider the case where $x\sim \text{Laplace}(5)$ and the noise is sampled from the log-gamma distribution with $c=0.5$.

From the above results we can see that since the upper bound of the error depends on the dimension logarithmically, increasing the dimension will just lightly affect the error. Moreover, when the sample size or the privacy parameter increases, the error will decrease, which is the same as in the previous parts. Finally, when the underlying sparsity $s^*$ is larger, the error will  become larger, and unlike the dimension, the dependence with the sparsity is not logarithmic. Thus, the sparsity will heavily affect the error. All of our results support the previous theoretical analysis. 

 \begin{figure*}[!htbp]
\centering
\subfigure[Sample size $n=8000$ and $s^*=20$ \label{fig6a}]{
\includegraphics[width=0.31\textwidth]{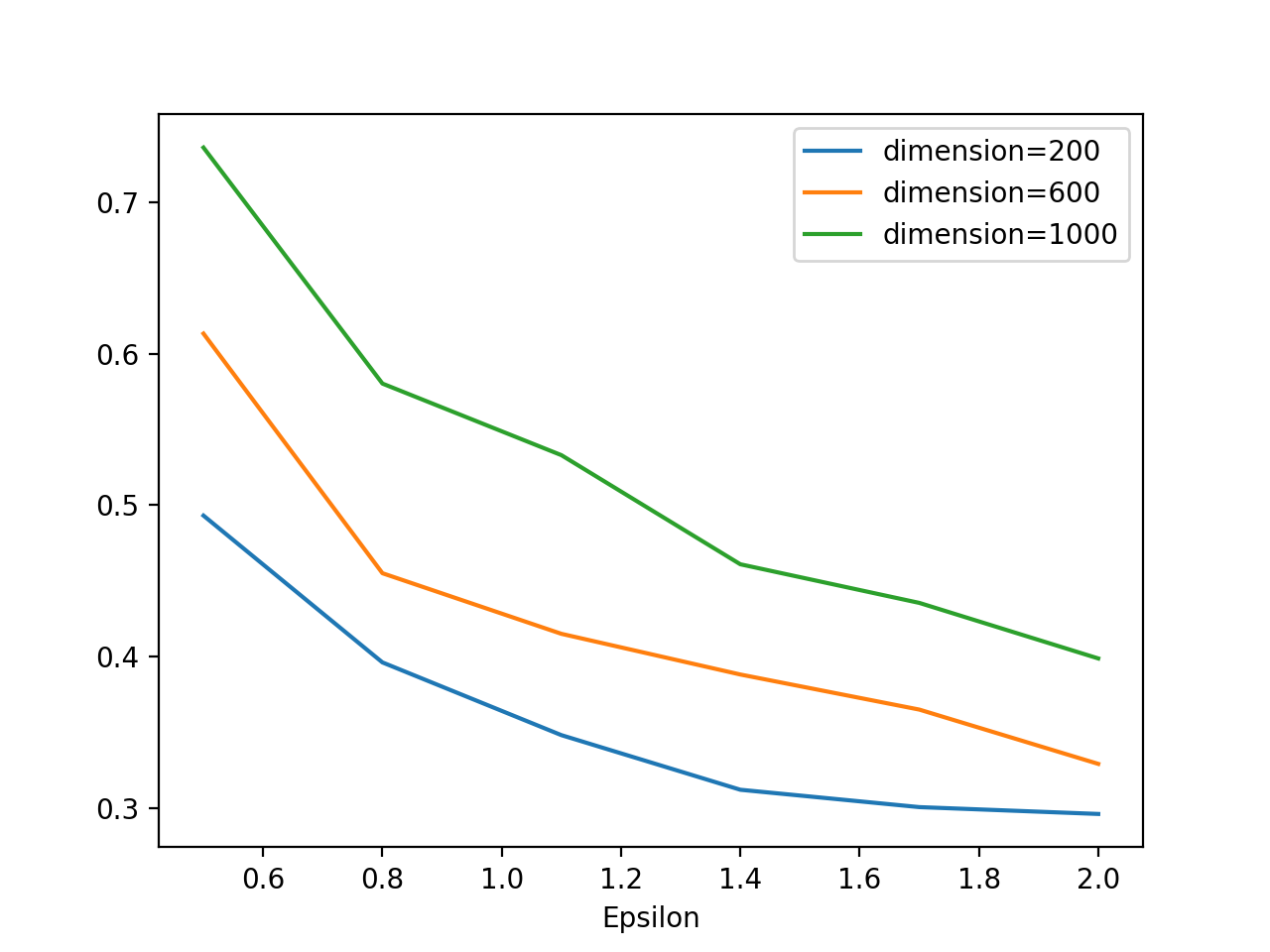} 
}
\subfigure[$\epsilon=1$ and $s^*=20$ \label{fig6b}]{
\includegraphics[width=0.31\textwidth]{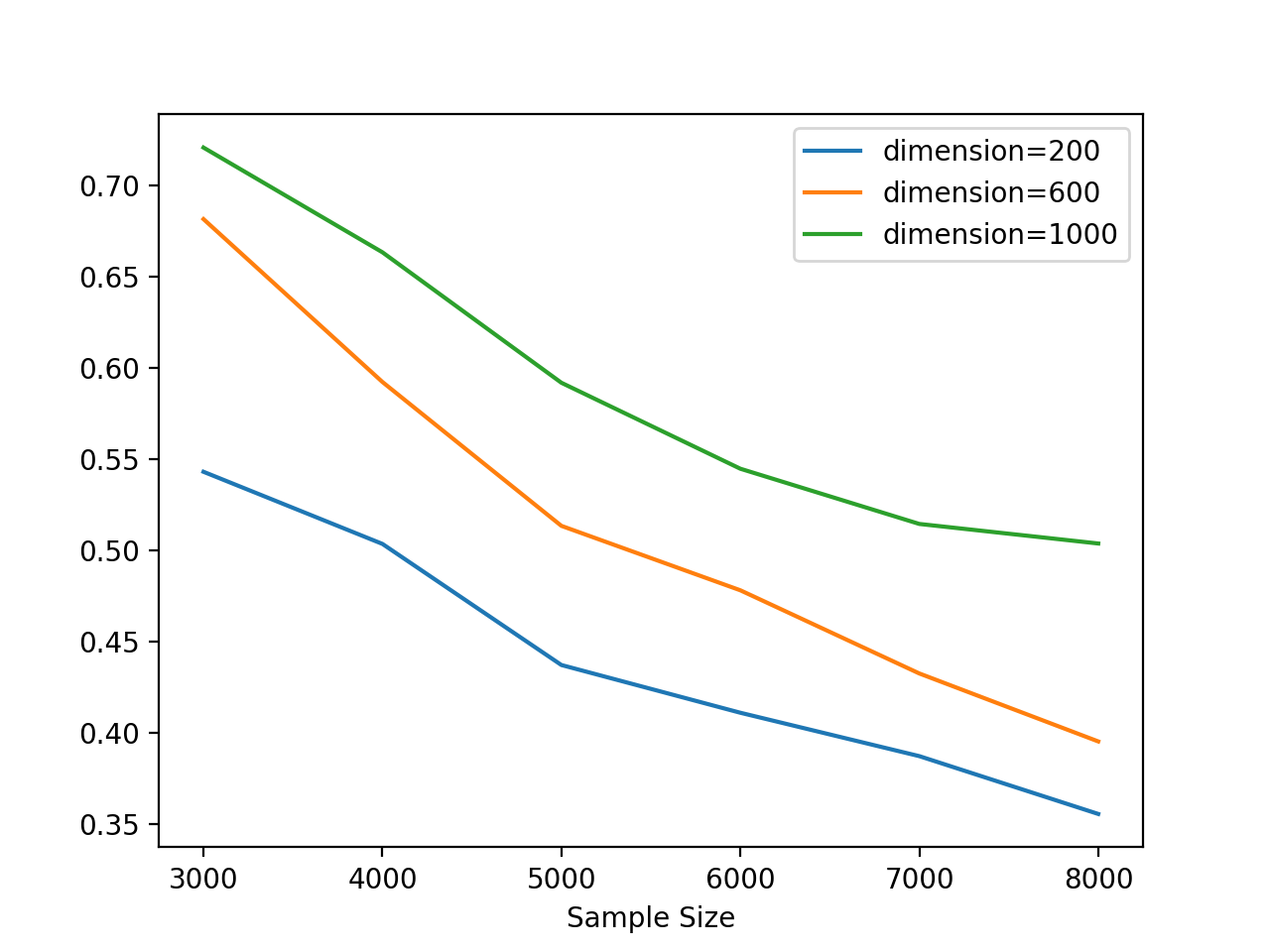} 
}
\subfigure[$\epsilon=1$ and $n=8000$ \label{fig6c}]{
\includegraphics[width=0.31\textwidth]{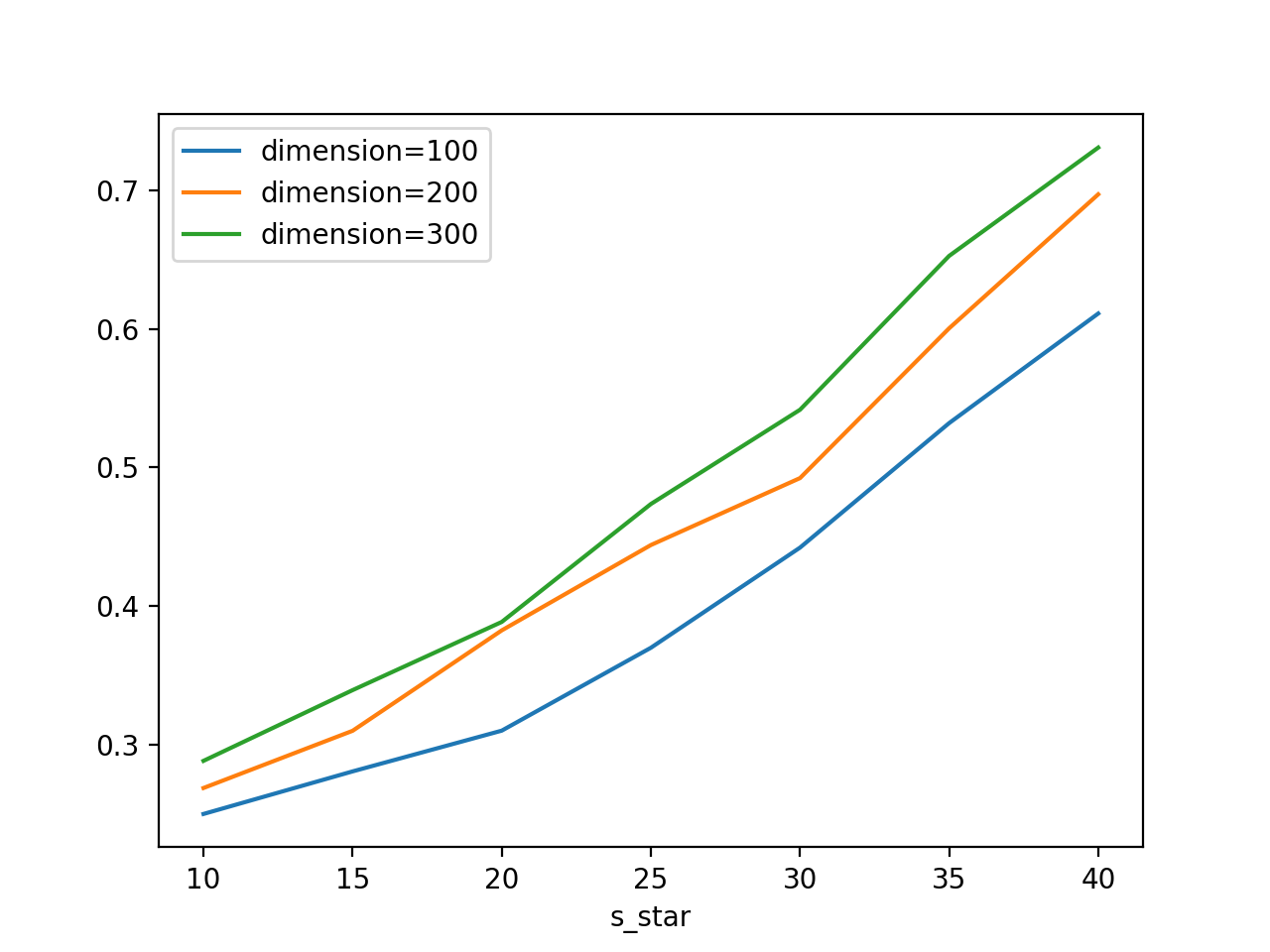} 
}
\caption{ Results of Algorithm \ref{alg:5} for regularized logistic regression with $x$ sampled from Gaussian distribution and noise sampled from log-normal distribution.}
\label{fig:6}
\end{figure*}
 \begin{figure*}[!htbp]
\centering
\subfigure[Sample size $n=8000$ and $s^*=20$ \label{fig9a}]{
\includegraphics[width=0.31\textwidth, height=0.14\textheight]{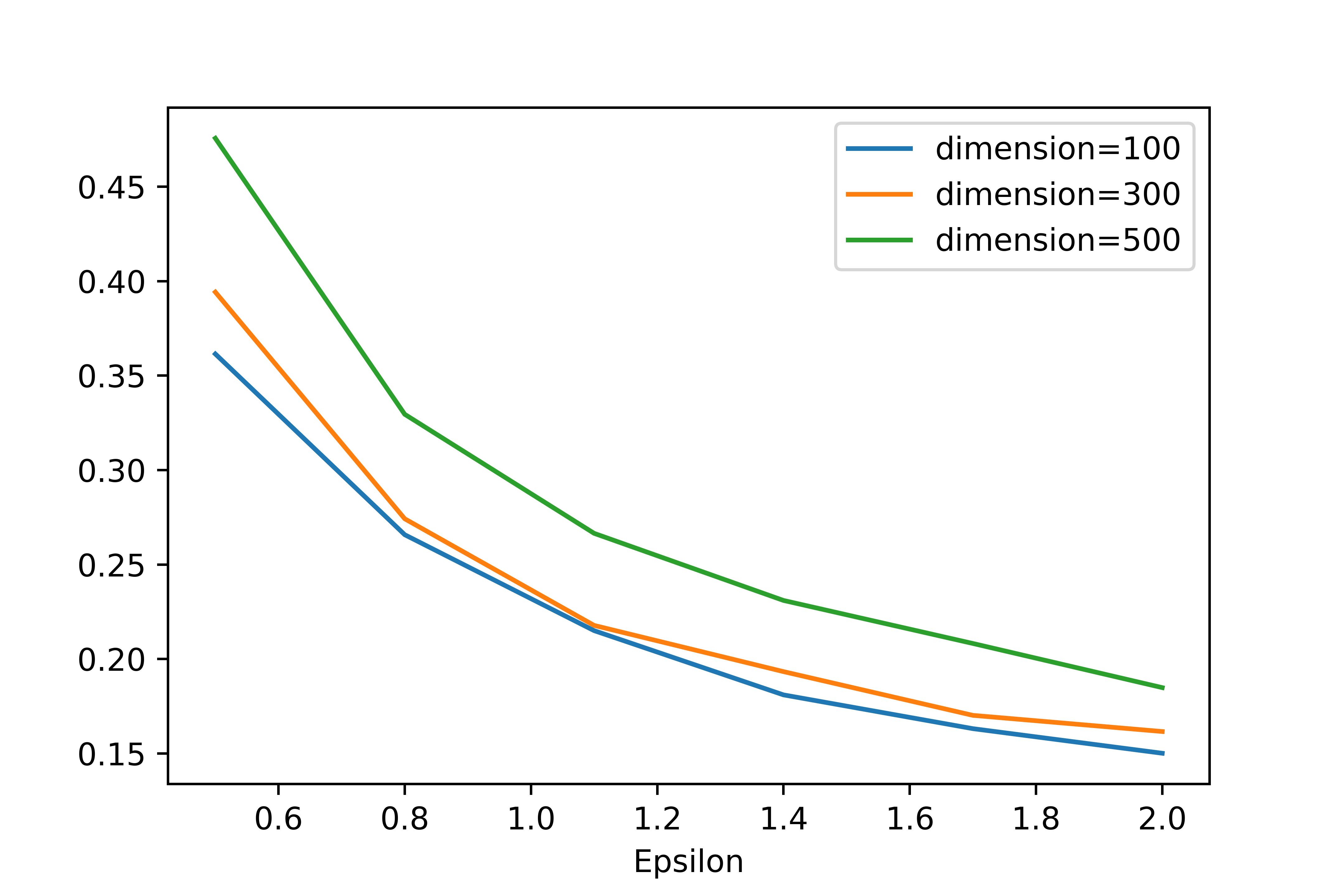} 
}
\subfigure[$\epsilon=1$ and $s^*=20$ \label{fig9b}]{
\includegraphics[width=0.31\textwidth]{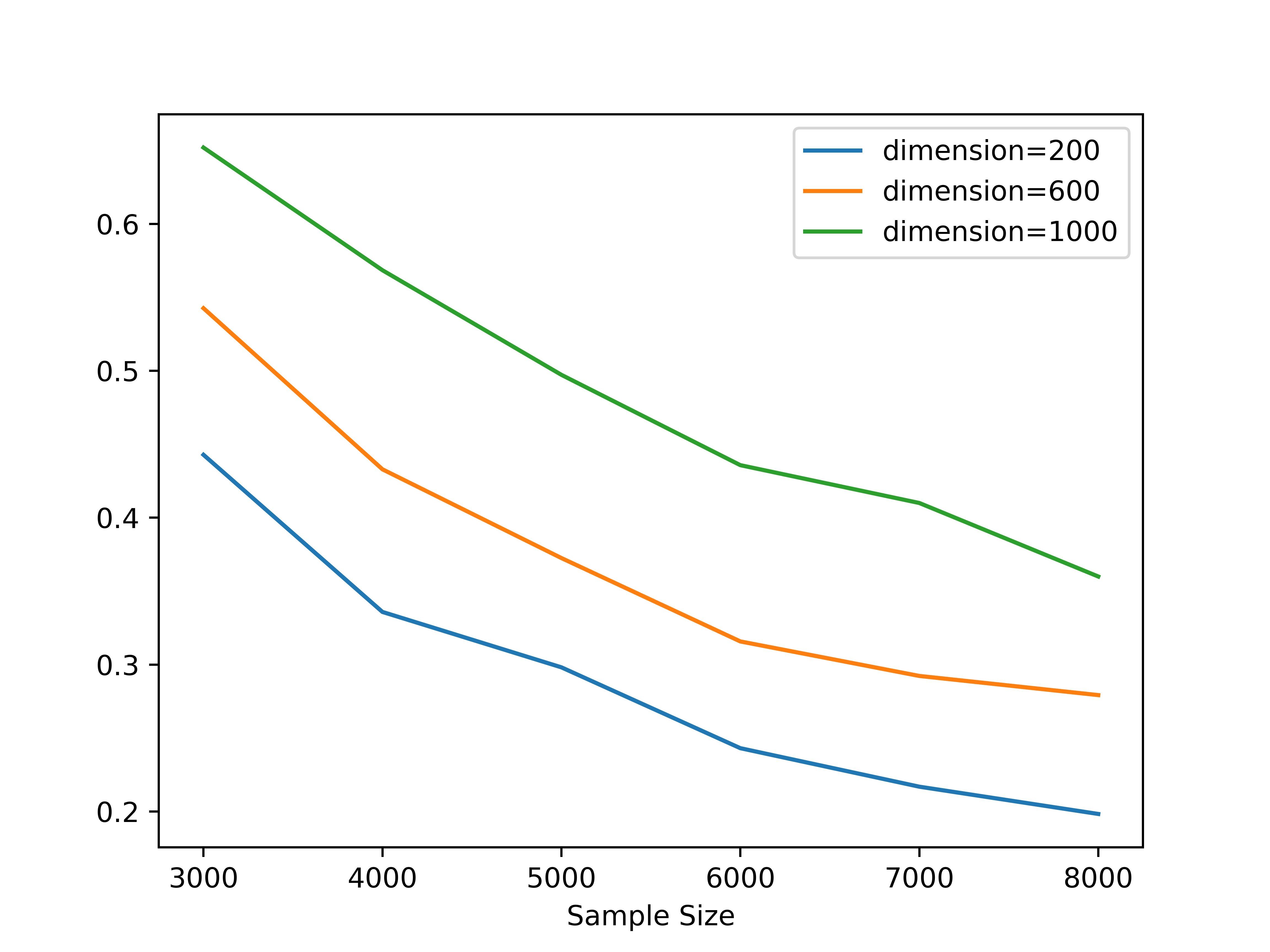} 
}
\subfigure[$\epsilon=1$ and $n=8000$ \label{fig9c}]{
\includegraphics[width=0.31\textwidth, height=0.14\textheight]{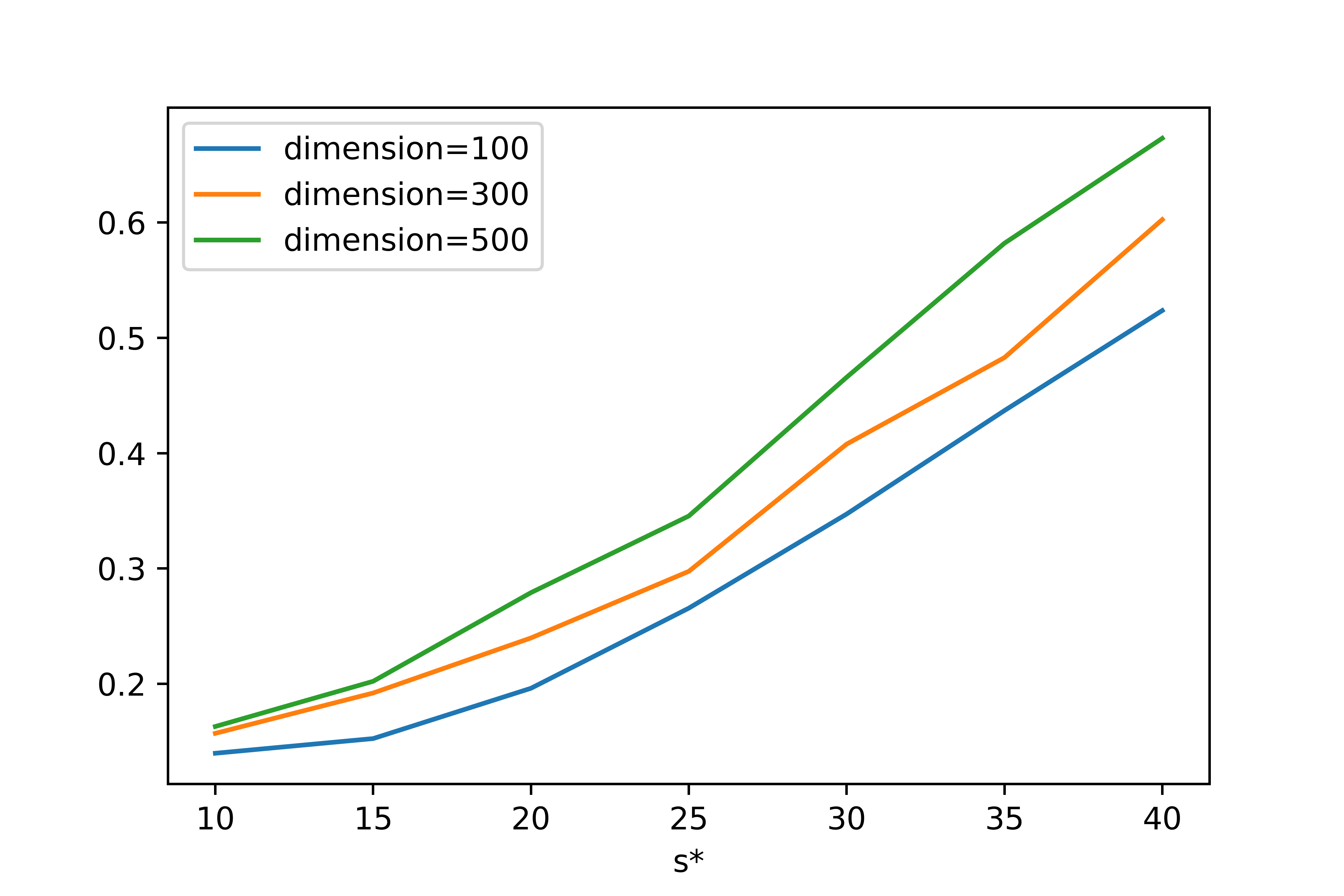} 
}
\caption{ Results of Algorithm \ref{alg:5} for regularized logistic regression with $x$ sampled from Laplacian distribution and noise sampled from log-gamma distribution.}
\label{fig:9}
\end{figure*}
\section{Conclusion}
In this paper, we studied the problem of Differentially Private Stochastic Convex Optimization (DP-SCO) in the high dimensional (sparse) setting, where the sample size $n$ is far less than the dimension of the space $d$ and the underlying data distribution may be heavy-tailed. We first considered the problem of DP-SCO where the constraint set is some polytope. We showed that if the gradient of loss function has bounded second order moment, then it is possible to achieve an excess population risk of $\tilde{O}(\frac{\log d}{(n\epsilon)^\frac{1}{3}})$ (with high probability) in the $\epsilon$-DP model, if we omit other terms. Moreover, for the LASSO problem, we showed that it is possible to achieve an error of $\tilde{O}(\frac{\log d}{(n\epsilon)^\frac{2}{5}})$ in the $(\epsilon, \delta)$-DP model. Next we studied DP-SCO for sparse learning with heavy-tailed data. We first investigated the sparse linear model and proposed a method whose output could achieve an estimation error of $\tilde{O}(\frac{s^{*2}\log^2 d}{n\epsilon})$, where $s^*$ is the sparsity of the underlying parameter. Then we studied a more general problem over the sparsity ({\em i.e.,} $\ell_0$-norm) constraint, and show that it is possible to achieve an error of $\tilde{O}(\frac{s^{*\frac{3}{2}}\log d}{n\epsilon})$ if the loss function is smooth and strongly convex. Finally, we showed a lower bound of $\tilde{O}(\frac{s^{*}\log d}{n\epsilon})$  for the high dimensional heavy-tailed sparse mean estimation in the $(\epsilon, \delta)$-DP model.

Besides the open problems we mentioned in the previous sections, there are still many other future work. First, in this paper, we studied the problem under various settings and assumptions and provided some bounds of the excess population risk. While we showed a lower bound for the high dimensional heavy-tailed sparse mean  problem, we still do not know the lower bounds of other problems. Previous results on the lower bounds need to assume the data is regular, thus we need new techniques or hard instances to get those lower bounds in the heavy-tailed setting. Secondly, in the heavy-tailed and  low dimensional case, we know that the bounds of excess population risk may be different in the high probability form and expectation form \cite{kamath2021improved,wang2020differentially}. Thus, our question is, in the high dimensional case, if we relax to the expectation form, can we further improve these upper bounds? Thirdly, we need to assume the gradient of the loss has bounded second order moment throughout the paper. However, sometimes this will not be held and the data may only has the $1+v$-th moment with some $v\in (0, 1)$ \cite{tao2021optimal}. Due to this weaker assumption, all the previous methods are failed. Thus, how to extend to this case in both low dimensional and high dimensional cases?

\section*{Acknowledgements}
 Di Wang and Lijie Hu were support in part by the baseline funding BAS/1/1689-01-01 and funding from the AI Initiative REI/1/4811-10-01 of King Abdullah University of Science and Technology (KAUST).
\bibliographystyle{plain}
\bibliography{nips}
\newpage 

\appendix

\section{Omitted Proofs}
\begin{proof}[{\bf Proof of Theorem \ref{thm:1}}]
In each iteration, by the definition of the robust estimator in (\ref{eq:5}) we can see that the after changing a data record ({\em i.e.,} $D_t$ to $D_t'$) we have $\|\tilde{g}(w^{t-1}, D_t)- \tilde{g}(w^{t-1}, D'_t)\|_\infty\leq \frac{4\sqrt{2}s}{3m}$. Thus for a fixed $v\in V$, the sensitivity of the score function satisfies $|u(D_{t}, v)-u(D'_{t}, v)|=|\langle v, \tilde{g}(w^{t-1}, D_t)-\tilde{g}(w^{t-1}, D_t') \rangle|\leq \|v\|_1 \|\tilde{g}(w^{t-1}, D_t)-\tilde{g}(w^{t-1}, D'_t)\|_\infty\leq \|\mathcal{W}\|_1\frac{4\sqrt{2}s}{3m}$. Thus, step 6 in Algorithm \ref{alg:1} is $\epsilon$-DP. Since in each iteration we use a new subset of the data, the whole algorithm will be $\epsilon$-DP. 
\end{proof}

\begin{proof}[{\bf Proof of Theorem \ref{thm:2}}]
Before analyzing the utility, we first provide a general upper bound of the error of $\hat{x}(s, \beta)$ in (\ref{eq:4}). 
\begin{lemma}\label{lemma:3}
Let  $x_1, x_2, \cdots, x_n$ be i.i.d. samples from  distribution $x\sim \mu$. Assume that there is some known upper bound on the second-order moment, {\em i.e.,} $\mathbb{E}_\mu x^2\leq \tau $. For a given failure probability $\zeta$,  then with probability at least $1-\zeta$ we have
\begin{equation}\label{aeq:10}
    | \hat{x}(s, \beta) -\mathbb{E}x| \leq  \frac{\tau }{2s}(\frac{1}{\beta}+1)+\frac{s}{n}(\frac{\beta}{2}+\log \frac{2}{\zeta}). 
\end{equation}
\end{lemma}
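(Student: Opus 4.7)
The plan is to split the total error as $\hat{x}(s,\beta)-\mathbb{E}x = \bigl(\hat{x}(s,\beta)-\mathbb{E}\hat{x}(s,\beta)\bigr) + \bigl(\mathbb{E}\hat{x}(s,\beta)-\mathbb{E}x\bigr)$ and bound the bias and the stochastic fluctuation separately, exploiting the boundedness of the cubic truncation $\phi$ in (\ref{eq:2}) and the Gaussian smoothing $\eta\sim\mathcal{N}(0,1/\beta)$. Writing $f(x) := s\,\mathbb{E}_\eta \phi(x(1+\eta)/s)$ so that $\hat{x}(s,\beta) = \frac{1}{n}\sum_i f(x_i)$, and using $\mathbb{E}_\eta\eta = 0$ so that $s\,\mathbb{E}_\eta[x(1+\eta)/s] = x$, the bias becomes $f(x)-x = s\,\mathbb{E}_\eta[\phi(Y)-Y]$ with $Y = x(1+\eta)/s$. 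A short case analysis on $|y|\leq\sqrt{2}$ versus $|y|>\sqrt{2}$ gives the uniform estimate $|\phi(y)-y| \leq y^2/2$, which together with $\mathbb{E}_\eta(1+\eta)^2 = 1 + 1/\beta$ and $\mathbb{E}X^2 \leq \tau$ yields $|\mathbb{E}\hat{x}(s,\beta)-\mathbb{E}X| \leq \tfrac{\tau}{2s}(1 + 1/\beta)$, matching the first summand of (\ref{aeq:10}).

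For the stochastic fluctuation I will use a Catoni--Giulini style exponential-moment argument. The key ingredient is the one-sided bound $\phi(y) \leq \log(1+y+y^2/2)$ (and its mirror $\phi(y) \geq -\log(1-y+y^2/2)$), which I would verify by comparing Taylor expansions on the cubic piece $|y|\leq\sqrt{2}$ and using the logarithmic growth of the right-hand side on the flat region $|y|>\sqrt{2}$. Applied pointwise at $Y = X(1+\eta)/s$ and combined with Jensen's inequality $e^{\mathbb{E}_\eta Z}\leq \mathbb{E}_\eta e^{Z}$ to pull the $\eta$-expectation through $\exp$, this gives
\begin{equation*}
\mathbb{E}\,e^{f(X)/s} \leq 1 + \mathbb{E}X/s + \tfrac{\tau(1+1/\beta)}{2s^2} \leq \exp\!\Bigl(\mathbb{E}X/s + \tfrac{\tau(1+1/\beta)}{2s^2}\Bigr).
\end{equation*}
A standard Chernoff--Markov step over i.i.d.\ $X_1,\dots,X_n$ (followed by the symmetric argument for the lower tail and a union bound with budget $\zeta/2$ each) then yields, with probability at least $1-\zeta$, a bound of exactly the shape of (\ref{aeq:10}): a bias-type term of size $\tfrac{\tau}{2s}(1+1/\beta)$ plus a concentration term proportional to $\tfrac{s}{n}$ times $\log(2/\zeta)$.

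The main obstacle is rigorously establishing the exponential inequality $\phi(y) \leq \log(1+y+y^2/2)$ for the piecewise-defined truncation $\phi$ of (\ref{eq:2}) uniformly in $y\in\mathbb{R}$; the smooth region and the flat region must be handled separately, and the transition at $|y|=\sqrt{2}$ requires checking continuity of the gap and its derivative. A secondary, more bookkeeping-level point is reconciling the exact form $\tfrac{s}{n}(\beta/2 + \log(2/\zeta))$ in (\ref{aeq:10}) with the slightly tighter $\tfrac{s\log(2/\zeta)}{n}$ produced by the direct Catoni--Chernoff step; the extra $\tfrac{s\beta}{2n}$ summand is most naturally absorbed via an AM--GM consolidation of the variance contribution $\tau/(s\beta)$ against $s\beta/n$, which must be calibrated so that the resulting $\beta$-dependence matches the choices of $s$ and $\beta$ used downstream in Theorem~\ref{thm:2}.
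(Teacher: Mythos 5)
Your argument is correct and establishes the lemma, but it follows a genuinely different route from the paper's. The paper proves Lemma~\ref{lemma:3} by a PAC-Bayesian (Donsker--Varadhan / Legendre transform) change-of-measure argument: it writes $\hat{x}(s,\beta)=\tfrac{s}{n}\int\sum_i\phi(x_i(1+\eta)/s)\,d\chi(\eta)$ with $\chi=\mathcal{N}(0,1/\beta)$, introduces a prior $v_0=\mathcal{N}(1,1/\beta)$, and uses the identity $\sup_v(\int h\,dv-K(v,v_0))=\log\int e^h\,dv_0$ together with the two-sided bound $-\log(1-u+u^2/2)\le\phi(u)\le\log(1+u+u^2/2)$. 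The KL term $K(\chi,v_0)=\beta/2$ is exactly what produces the extra $\tfrac{s\beta}{2n}$ summand in (\ref{aeq:10}). You instead keep the Gaussian smoothing inside the estimator, write $\hat{x}=\tfrac1n\sum_i f(X_i)$ with $f(x)=s\,\mathbb{E}_\eta\phi(x(1+\eta)/s)$, push the $\eta$-expectation through $\exp$ with Jensen, and run a direct Chernoff--Cramér step at rate $\lambda=1/s$. This is both simpler and sharper: it yields $|\hat{x}-\mathbb{E}x|\le\tfrac{\tau}{2s}(1+\tfrac1\beta)+\tfrac{s\log(2/\zeta)}{n}$ with probability $\ge 1-\zeta$, which implies (\ref{aeq:10}) a fortiori. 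Both proofs pivot on the same key inequality $-\log(1-u+u^2/2)\le\phi(u)\le\log(1+u+u^2/2)$, which you correctly flag as the main thing to check; what you trade away by avoiding the change-of-measure machinery is uniformity over posteriors, which is not used in the downstream application anyway (the paper applies the lemma pointwise in $j$ and $t$ with a union bound).

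Two small clean-ups worth noting. First, the preliminary bias--fluctuation split and the estimate $|\phi(y)-y|\le y^2/2$ are not actually needed: the Chernoff bound you set up already compares $\hat{x}$ directly to $\mathbb{E}x$ (not to $\mathbb{E}\hat{x}$) and absorbs the bias term $\tfrac{\tau}{2s}(1+\tfrac1\beta)$ automatically; splitting first and then re-centering the exponential moment would in fact cost you a factor of two. Second, there is no $\tfrac{s\beta}{2n}$ term to "absorb via AM--GM" --- your bound is simply strictly smaller than the lemma's, so no reconciliation is required; the $\beta/2$ in the paper is an artifact of the PAC-Bayesian KL penalty, not an intrinsic part of the concentration inequality.
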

\begin{proof}[{\bf Proof of Lemma \ref{lemma:3}}] 
Let $\mathcal{P}(\mathbb{R})$ denote all the probability measures on $\mathbb{R}$, with an appropriate $\sigma$-field tacitly assumed. Consider any two measures $v, v_0\in \mathcal{P}(\mathbb{R})$, and $h:\mathbb{R}\mapsto \mathbb{R}$ a $v_0$-measurable function. By \cite{catoni2004statistical}, it is proved that a Legendre transform of the mapping $v\mapsto K(v, v_0)$ takes the form of a cumulant generating function, namely 
\begin{equation}\label{aeq:11}
    \sup_v (\int h(u)dv(u)-K(v, v_0))=\log \int \exp(h(u))dv_0(u), 
\end{equation}
where the supremum is taken over $v\in \mathcal{P}(\mathbb{R})$. Following \cite{catoni2017dimension} here we use the Kullback
divergence for the Legendre transform of the mapping, so we define 
\begin{equation}\label{aeq:12}
     K(v, v_0)=\int \log (\frac{d v}{d v_0} )dv
\end{equation}
if $v_0\ll v$, and  $ K(v, v_0)=+\infty$ otherwise.

This identity is a technical tool and the choice of $h$ and $v_0$ are parameters that can be adjusted to fit the application. 
In actually setting these parameters, we will follow the technique given by \cite{catoni2017dimension}, which is later adapted by \cite{holland2019a}. Note the term 
\begin{equation*}
    \phi(\frac{x_i+\eta_i x_i}{s})
\end{equation*}
depends on two terms, namely the data $x_i$ and the artificial noise $\eta_i$ (if we fix $s$). Thus, for convenience we denote 
\begin{equation*}
    f(\eta, x):=\phi(\frac{x+\eta x}{s}).
\end{equation*}
By the definition of $\phi$ we can see that $f:\mathbb{R}^2\mapsto \mathbb{R}$ is measurable and bounded. Next, we denote that 
\begin{equation*}
    h(\eta)=\sum_{i=1}^n f(\eta, x_i)-c(\eta), 
\end{equation*}
where $c(\epsilon)$ is a term to be determined shortly. Take $h(\eta)$ into (\ref{aeq:11}) we have 
\begin{align*}
    B:&=  \sup_v (\int h(u)dv(u)-K(v, v_0))\\
    &= \log \int \exp(\sum_{i=1}^n f(\eta, x_i)-c(\eta))dv(\eta), 
\end{align*}
Taking the exponential of this $B$ and then taking expectation with respect to the sample, we
have 
\begin{align*}
    \mathbb{E}\exp(B)&=\mathbb{E}\int (\frac{\exp(\sum_{i=1}^n f(\eta, x_i))}{\exp(c(\eta))})v(\eta) \\
    &= \int \frac{\Pi_{i=1}^n \mathbb{E}\exp( f(\eta, x_i))}{\exp(c(\eta))})v(\eta)
\end{align*}
The first equality comes from simple log/exp manipulations, and the second equality from
taking the integration over the sample inside the integration with respect to $v$, valid via Fubini’s theorem. By setting 
\begin{equation*}
    c(\eta)=n \log \mathbb{E} \exp (f(\eta, x)). 
\end{equation*}
 With this preparation done, we can start on the high-probability upper bound of interest:
\begin{align*}
    P(B\geq \log \frac{1}{\zeta})&= P(\exp(B)\geq \frac{1}{\zeta}) \\
    &= \mathbb{E}\mathbb{I}(\exp(B){\zeta} \geq 1) \\
    &\leq \mathbb{E} \exp(B){\zeta}=\zeta, 
\end{align*}
where the last equality is due to $\mathbb{E} \exp(B)=1$ by setting $ c(\eta)=n \log \mathbb{E} f(\eta, x)$. Note that since our setting of $c(\eta)$ is such that $c(\cdot)$ is $v$-measurable (via the measurability of $f$), the resulting $h$ is indeed measurable w.r.t $v$. Thus by the definition of $B$ we have with probability at least $1-\zeta$
\begin{equation*}
    \sup_v (\int h(u)dv(u)-K(v, v_0))\leq \log \frac{1}{\zeta}. 
\end{equation*}
    Take the implicit form of $B$ via $h(\eta)$ and $c(\eta)$ and divide by $n$ form both side we have 
    \begin{align}\label{aeq:13}
        \frac{1}{n}\int \sum_{i=1}^n f(\eta, x_i)dv(\eta)\leq \int \log \mathbb{E}\exp (f(\eta,x))d v(\eta) + \frac{K(v, v_0)+\log \frac{1}{\zeta}}{n}. 
    \end{align}
It is notable that by definition $\hat{x}(s, \beta)$ in (\ref{eq:4}) satisfies 
\begin{align*}
    \hat{x}(s, \beta)&=  \frac{s}{n}\sum_{i=1}^n \int \phi(\frac{x_i+\eta_i x_i}{s})d \chi(\eta_i)=\frac{s}{n}\int \sum_{i=1}^n f(\eta, x_i)dv(\eta) \\
    &\leq s\int \log \mathbb{E}\exp(f(\eta,x))d v(\eta)+ \frac{sK(v, v_0)+s\log \frac{1}{\zeta}}{n}.
\end{align*}
In the following we will bound the term of $\int \log \mathbb{E}\exp(f(\eta,x))d v(\eta) $ and $K(v, v_0)$. Starting with the first term, recall the definition of the truncation function $\phi(\cdot)$ in (\ref{eq:2}) we can it satisfies that for all $u\in \mathbb{R}$
\begin{equation}\label{aeq:14}
    -\log (1-u + \frac{u^2}{2})\leq \phi(u)\leq \log(1+u+\frac{u^2}{2}). 
\end{equation}
Thus we have 
\begin{align}
    &\int \log \mathbb{E}f(\eta,x)d v(\eta) \nonumber \\
    &= \int \log \mathbb{E} \exp(\phi(\frac{(1+\eta)x}{s}) d v(\eta) \nonumber \\
    & \leq \int \log (1+\frac{(1+\eta)\mathbb{E}x }{s}+ \frac{(1+\eta)^2\mathbb{E}x^2 }{2s^2}) dv(\eta)  \nonumber \\
    &\leq \int (\frac{(1+\eta)\mathbb{E}x }{s}+ \frac{(1+\eta)^2\mathbb{E}x^2 }{2s^2}) dv(\eta)  \nonumber \\
    &= \frac{\mathbb{E}x}{s} (1+\mathbb{E}_\eta \eta)+ \frac{\mathbb{E} x^2}{2s^2} \mathbb{E}_\eta (1+\eta)^2 \nonumber \\
    &= \frac{\mathbb{E}x}{s}+ \frac{\mathbb{E} x^2}{2s^2} (1+\frac{1}{\beta}). \label{aeq:15}
\end{align}
Where the last equality is due to $\eta \sim \mathcal{N}(0, \frac{1}{\beta})$.

For term $K(v, v_0)$, it is critical to select an appropriate measure $v_0$ to easily calculate the KL-divergence. Here we set $v_0\sim \mathcal{N}(1, \frac{1}{\beta})$. In this case we have 
\begin{align}
    K(v, v_0) &= \int_{-\infty}^{+\infty} \log (\exp(\frac{\beta(u-1)^2}{2}-\frac{\beta u^2}{2}))\sqrt{\frac{\beta}{2\pi}}\exp(-\frac{\beta u^2}{2})du \nonumber \\
    &= \int_{-\infty}^{+\infty} \frac{(1-2u)\beta }{2}\sqrt{\frac{\beta}{2\pi}}\exp(-\frac{\beta u^2}{2})du \nonumber \\
    &=\frac{\beta}{2}. \label{aeq:16}
\end{align}
Thus, combining with (\ref{aeq:15}) and (\ref{aeq:16}) we have with probability at least $1-\zeta$ 
\begin{equation}\label{aeq:17}
     \hat{x} (s, \beta)\leq \mathbb{E}x + \frac{\mathbb{E}x^2}{2s}(\frac{1}{\beta}+1)+\frac{s}{n}(\frac{\beta}{2}+\log \frac{1}{\zeta}). 
\end{equation}
Next, we will get a lower bound of $ \hat{x} (s, \beta)-\mathbb{E}x $. The proof is quite similar as in the above proof. The main difference is here we set 
\begin{equation*}
    f(\eta, x):=-\phi(\frac{x+\eta x}{s}). 
\end{equation*}
Thus we have 
\begin{align*}
    -\hat{x}&=  \frac{s}{n}\sum_{i=1}^n \int -\phi(\frac{x_i+\eta_i x_i}{s})d \chi(\eta_i)=\frac{s}{n}\int \sum_{i=1}^n f(\eta, x_i)dv(\eta) \\
    &\leq s\int \log \mathbb{E}\exp(f(\eta,x))d v(\eta)+ \frac{sK(v, v_0)+s\log \frac{1}{\zeta}}{n} \\ 
    &\leq s\int \log \mathbb{E}\exp(-\phi(\frac{x+\eta x}{s}))d v(\eta)+ \frac{sK(v, v_0)+s\log \frac{1}{\zeta}}{n}
\end{align*}
By (\ref{aeq:14}) we have 
\begin{align*}
     -\hat{x} &\leq s\int \log \mathbb{E}\exp(-\phi(\frac{x+\eta x}{s}))d v(\eta)+ \frac{sK(v, v_0)+s\log \frac{1}{\zeta}}{n} \\
     &\leq s[ (-1)\int \frac{(1+\eta)\mathbb{E}x}{s}dv(\eta)+ \int \frac{(1+\eta)^2\mathbb{E}x^2}{s^2}dv(\eta)] \\
     &\qquad + \frac{sK(v, v_0)+s\log \frac{1}{\zeta}}{n}\\
     &\leq -\mathbb{E} x+ \frac{\mathbb{E}x^2}{2s}(\frac{1}{\beta}+1)+\frac{s}{n}(\frac{\beta}{2}+\log \frac{1}{\zeta}). 
\end{align*}
     Thus we have 
     \begin{equation}\label{aeq:18}
         \mathbb{E} x-\hat{x}(s, \beta) \leq \frac{\tau}{2s}(\frac{1}{\beta}+1)+\frac{s}{n}(\frac{\beta}{2}+\log \frac{1}{\zeta}) 
     \end{equation}
     In total we have with probability at least $1-2\zeta$, 
     \begin{equation}\label{aeq:19}
        |\mathbb{E} x-\hat{x}(s, \beta)| \leq \frac{\tau}{2s}(\frac{1}{\beta}+1)+\frac{s}{n}(\frac{\beta}{2}+\log \frac{1}{\zeta}).
     \end{equation}
\end{proof}
Since in each iteration, we will use the estimator (\ref{eq:4}) to each coordinate of the gradient. Taking the union bound and setting the failure probability $\zeta=\frac{\zeta}{dT}$ in Lemma \ref{lemma:3}, and since in each iteration we use independent samples, we have with probability at least $1-\zeta$, for each $t\in [T]$
\begin{equation}\label{aeq:20}
    \|\tilde{g}(w^{t-1}, D_t)-\nabla L_\mathcal{D}(w^{t-1})\|_\infty \leq \frac{\tau }{2s}(\frac{1}{\beta}+1)+\frac{s}{m}(\frac{\beta}{2}+\log \frac{2dT}{\zeta}).
\end{equation}
In the following we will assume (\ref{aeq:20}) holds. We will show the following lemma: 
\begin{lemma}\label{lemma:4}
Assume (\ref{aeq:20}) holds, then with probability at least $1-\zeta$, in each iteration $t\in [T]$, for $\tilde{w}^{t-1}$ we have 
\begin{multline}
       \langle \tilde{w}^{t-1}, \nabla L_\mathcal{D}(w^{t-1})\leq \min_{v\in V} \langle v, \nabla L_\mathcal{D}(w^{t-1})\rangle+ O(\frac{\|\mathcal{W}\|_1\tau }{s}(\frac{1}{\beta}+1) \\+\frac{\|\mathcal{W}\|_1sT}{n}(\beta+\log \frac{dT}{\zeta})+\frac{\|\mathcal{W}\|_1 sT\log |V| \log \frac{T}{\zeta}}{n\epsilon} ). 
\end{multline}
\end{lemma}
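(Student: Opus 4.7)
The plan is to use the utility guarantee of the exponential mechanism to control $\langle \tilde{w}^{t-1}, \tilde{g}(w^{t-1}, D_t)\rangle$ against $\min_{v\in V}\langle v, \tilde{g}(w^{t-1}, D_t)\rangle$, and then bridge from the robust estimator $\tilde{g}$ to the true gradient $\nabla L_\mathcal{D}(w^{t-1})$ via the $\ell_\infty$ concentration bound (\ref{aeq:20}) together with Hölder's inequality on the polytope $\mathcal{W}$. All randomness is already accounted for, so we only need a union bound over the $T$ exponential mechanism calls with failure probability $\zeta/T$ each.

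First, fix an iteration $t$. By the construction of the score function $u(D_t,\cdot)$ in Algorithm \ref{alg:1} and the entrywise boundedness of $\phi$, the sensitivity of $u$ on $V$ satisfies $\Delta \leq \frac{4\sqrt{2}\|\mathcal{W}\|_1 s}{3m}$ with $m=n/T$, as established in the proof of Theorem \ref{thm:1}. Applying Lemma \ref{lemma:exp} to the exponential mechanism with budget $\epsilon$, we obtain with probability at least $1-\zeta/T$
\begin{equation*}
-\langle \tilde{w}^{t-1}, \tilde{g}(w^{t-1}, D_t)\rangle \;\geq\; \max_{v\in V}\bigl(-\langle v,\tilde{g}(w^{t-1}, D_t)\rangle\bigr) \;-\; \frac{8\sqrt{2}\,\|\mathcal{W}\|_1 s T}{3n\epsilon}\Bigl(\log |V|+\log\tfrac{T}{\zeta}\Bigr),
\end{equation*}
which, after flipping signs, yields
\begin{equation*}
\langle \tilde{w}^{t-1}, \tilde{g}(w^{t-1}, D_t)\rangle \;\leq\; \min_{v\in V}\langle v,\tilde{g}(w^{t-1}, D_t)\rangle \;+\; O\!\Bigl(\tfrac{\|\mathcal{W}\|_1 s T \log |V|\log(T/\zeta)}{n\epsilon}\Bigr).
\end{equation*}
A union bound over $t\in[T]$ makes this hold simultaneously for all iterations with probability at least $1-\zeta$.

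Next, let $v^\star \in \arg\min_{v\in V}\langle v, \nabla L_\mathcal{D}(w^{t-1})\rangle$ and decompose
\begin{equation*}
\langle \tilde{w}^{t-1}-v^\star,\nabla L_\mathcal{D}(w^{t-1})\rangle \;=\; \langle \tilde{w}^{t-1}-v^\star, \tilde{g}(w^{t-1}, D_t)\rangle \;+\; \langle \tilde{w}^{t-1}-v^\star,\nabla L_\mathcal{D}(w^{t-1})-\tilde{g}(w^{t-1}, D_t)\rangle.
\end{equation*}
The first summand is bounded by the exponential mechanism inequality above (since $v^\star\in V$ implies $\langle v^\star,\tilde g\rangle\geq \min_{v\in V}\langle v,\tilde g\rangle$). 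For the second summand, since $\tilde{w}^{t-1},v^\star\in V\subseteq\mathcal{W}$, Hölder's inequality combined with the $\ell_\infty$ bound (\ref{aeq:20}) gives
\begin{equation*}
\bigl|\langle \tilde{w}^{t-1}-v^\star,\nabla L_\mathcal{D}(w^{t-1})-\tilde{g}(w^{t-1}, D_t)\rangle\bigr| \;\leq\; \|\mathcal{W}\|_1\cdot\Bigl(\tfrac{\tau}{2s}(\tfrac{1}{\beta}+1)+\tfrac{sT}{n}(\tfrac{\beta}{2}+\log\tfrac{2dT}{\zeta})\Bigr).
\end{equation*}
Summing the two contributions and absorbing constants into the $O(\cdot)$ yields exactly the stated bound.

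The only real subtlety is the bookkeeping of the failure probabilities: event (\ref{aeq:20}) is assumed to hold throughout, and the exponential mechanism succeeds uniformly across the $T$ rounds after a union bound at level $\zeta/T$. A secondary point to check is that the $\ell_1$ diameter $\|\mathcal{W}\|_1$ upper bounds both $\|v\|_1$ used in the sensitivity calculation and $\|\tilde{w}^{t-1}-v^\star\|_1$ used in Hölder; both reduce to the fact that $V\subseteq\mathcal{W}$. No convexity or smoothness of $L_\mathcal{D}$ is needed at this stage, since the lemma is purely a per-iteration guarantee about the linear objective $\langle v,\nabla L_\mathcal{D}(w^{t-1})\rangle$ minimized by the exponential mechanism step.
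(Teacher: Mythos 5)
Your proof is correct and follows essentially the same approach as the paper: apply the exponential-mechanism utility bound (Lemma \ref{lemma:exp}) with a union bound over the $T$ rounds, and bridge $\tilde g$ to $\nabla L_{\mathcal D}$ via H\"older together with the entrywise bound (\ref{aeq:20}). The only cosmetic difference is that you group the estimator error on $\tilde w^{t-1}$ and on $v^\star$ into a single inner product $\langle \tilde w^{t-1}-v^\star,\cdot\rangle$ (which cleanly uses the $\ell_1$-diameter $\|\mathcal W\|_1$), whereas the paper bounds the two sides of (\ref{aeq:22}) separately in (\ref{aeq:23}) and (\ref{aeq:24}); the arithmetic is identical.
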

\begin{proof}[ {\bf Proof of Lemma \ref{lemma:4}}]
By the utility of the exponential mechanism (Lemma \ref{lemma:exp}) we have with probability at least $1-\zeta$, in each iteration $t\in [T]$,
\begin{equation}\label{aeq:22}
    \langle \tilde{w}^{t-1}, \tilde{g}(w^{t-1}, D_t)\rangle\leq \min_{v\in V} \langle v, \tilde{g}(w^{t-1}, D_t) \rangle + O(\frac{sT\|\mathcal{W}\|_1\log |V| \log \frac{T}{\zeta}}{n\epsilon}). 
\end{equation}
For the left side of (\ref{aeq:22}) we have 
\begin{align}
    & \langle \tilde{w}^{t-1}, \nabla L_\mathcal{D}(w^{t-1})- \tilde{g}(w^{t-1}, D_t)\rangle \nm \\
    &\leq \|\tilde{w}^{t-1}\|_1 \|\nabla L_\mathcal{D}(w^{t-1})- \tilde{g}(w^{t-1}, D_t)\|_\infty  \nonumber \\
     &\leq \|\mathcal{W}\|_1 \big(\frac{\tau }{s}(\frac{1}{\beta}+1)+\frac{s}{m}({\beta}+\log \frac{2dT}{\zeta})\big). \label{aeq:23} 
\end{align}
For the right side of (\ref{aeq:22}), we denote $v_1=\arg\min_{v\in V} \langle v, \tilde{g}(w^{t-1}, D_t) \rangle $ and $v_2= \arg\min_{v\in V} \langle v, \nabla L_\mathcal{D}(w^{t-1})\rangle$, then we have 
\begin{align}
    &\langle v_1, \tilde{g}(w^{t-1}, D_t) \rangle \leq \langle v_2, \tilde{g}(w^{t-1}, D_t) \rangle \nonumber \\
    &\leq  \langle v_2, \nabla L_\mathcal{D}(w^{t-1}) \rangle+ \|\mathcal{W}\|_1 \big(\frac{\tau }{2s}(\frac{1}{\beta}+1)+\frac{s}{m}(\frac{\beta}{2}+\log \frac{2dT}{\zeta})\big). \label{aeq:24} 
\end{align}
We can finish the proof by combing with (\ref{aeq:23}), (\ref{aeq:24}) and (\ref{aeq:22}).
\end{proof}
Next, we recall the converge rate of Frank-Wolfe method proposed in \cite{jaggi2013revisiting}. 
\begin{lemma}[Theorem 1 in \cite{jaggi2013revisiting}]\label{lemma:5}
In the Frank-Wolfe algorithm, suppose in each iteration we get $w^t=(1-\eta_{t-1})w^{t-1}+\eta_{t-1}s_t$, where $s_t \in \mathcal{W}$ such that 
\begin{equation}\label{aeq:25}
    \langle s_t, \nabla L_\mathcal{D}(w^{t-1})\rangle \leq \min_{s\in \mathcal{W}} \langle s, \nabla L_\mathcal{D}(w^{t-1})\rangle +\frac{1}{2}\chi \eta_{t-1} \Gamma_L. 
\end{equation}
Where $\chi>0$ is fixed,  $\Gamma_L$ is the curvature constant of the function $L_\mathcal{D}(w)$ which is bound by $\Gamma_L\leq \alpha \|\mathcal{W}\|_1$. if $L_\mathcal{D}(\cdot)$ is $\alpha$-smooth. 
If we take $\eta_{t-1}=\frac{2}{t+2}$ then 
\begin{equation*}
    L_\mathcal{D}(w^t)-\min_{w\in \mathcal{W}} L_\mathcal{D}(w)\leq \frac{2\Gamma_L}{t+2}(1+\chi). 
\end{equation*}
\end{lemma}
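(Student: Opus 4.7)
My plan is to establish the rate by deriving a one-step recursion for the suboptimality $h_t := L_\mathcal{D}(w^t) - \min_{w \in \mathcal{W}} L_\mathcal{D}(w)$ and then closing the argument by induction on $t$, following the standard Frank--Wolfe analysis.

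First I would invoke the defining property of the curvature constant $\Gamma_L$. Since $w^t = (1-\eta_{t-1}) w^{t-1} + \eta_{t-1} s_t$ with $w^{t-1}, s_t \in \mathcal{W}$ and $\eta_{t-1}\in [0,1]$, the definition of $\Gamma_L$ yields
\begin{equation*}
L_\mathcal{D}(w^t) \leq L_\mathcal{D}(w^{t-1}) + \eta_{t-1}\langle s_t - w^{t-1}, \nabla L_\mathcal{D}(w^{t-1})\rangle + \frac{\eta_{t-1}^2 \Gamma_L}{2}.
\end{equation*}
Next I would control the linear term using both the approximate-minimization hypothesis (\ref{aeq:25}) and convexity of $L_\mathcal{D}$. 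Fixing $w^* \in \arg\min_{w\in\mathcal{W}} L_\mathcal{D}(w)$,
\begin{equation*}
\langle s_t - w^{t-1}, \nabla L_\mathcal{D}(w^{t-1})\rangle \leq \langle w^* - w^{t-1}, \nabla L_\mathcal{D}(w^{t-1})\rangle + \frac{\chi\eta_{t-1}\Gamma_L}{2} \leq L_\mathcal{D}(w^*) - L_\mathcal{D}(w^{t-1}) + \frac{\chi\eta_{t-1}\Gamma_L}{2}.
\end{equation*}
Substituting back into the descent inequality and subtracting $L_\mathcal{D}(w^*)$ from both sides collapses everything into the clean recursion
\begin{equation*}
h_t \leq (1 - \eta_{t-1}) h_{t-1} + \frac{(1+\chi)\eta_{t-1}^2 \Gamma_L}{2}.
\end{equation*}

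Finally I would specialize to $\eta_{t-1} = 2/(t+2)$ and prove $h_t \leq 2\Gamma_L(1+\chi)/(t+2)$ by induction on $t$. The base case follows from the bound $h_0 \leq \Gamma_L$, itself an immediate consequence of the curvature definition applied with $x = w^*$, $s = w^0$, combined with one step of the recursion. For the inductive step, substituting the hypothesis $h_{t-1} \leq 2\Gamma_L(1+\chi)/(t+1)$ reduces the claim to the elementary inequality
\begin{equation*}
\frac{t}{t+2}\cdot \frac{1}{t+1} + \frac{1}{(t+2)^2} \leq \frac{1}{t+2},
\end{equation*}
which simplifies to $(t^2 + 3t + 1)/(t^2 + 3t + 2) \leq 1$ and is therefore immediate.

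The hard part is essentially negligible here, since this is a classical Frank--Wolfe result. The only point that requires a bit of care is arranging the approximation error $\frac{1}{2}\chi\eta_{t-1}\Gamma_L$ from (\ref{aeq:25}) so that it merges with the curvature term $\frac{1}{2}\eta_{t-1}^2\Gamma_L$ into a single quantity of the form $\frac{1}{2}(1+\chi)\eta_{t-1}^2\Gamma_L$; preserving the $\eta_{t-1}^2$ decay in the error term is precisely what allows the $1/(t+2)$ rate to survive the induction. Had the approximation error instead scaled like $\chi\Gamma_L$ (independent of $\eta_{t-1}$), the recursion would no longer close to an $O(1/t)$ bound.
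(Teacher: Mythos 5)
The paper does not reproduce a proof of this lemma---it is imported verbatim as Theorem~1 of Jaggi (2013)---so the relevant comparison is against the standard Frank--Wolfe analysis, which is exactly the argument you reconstruct. Your derivation of the one-step recursion $h_t \le (1-\eta_{t-1})h_{t-1} + \tfrac{1}{2}(1+\chi)\eta_{t-1}^2 \Gamma_L$ from the curvature bound, the approximate linear-minimization condition~\eqref{aeq:25}, and convexity is correct, and the inductive step with $\eta_{t-1}=2/(t+2)$ reduces to the elementary inequality you state, which does hold.

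The one genuine flaw is in the base case. You claim $h_0 \le \Gamma_L$ ``follows from the curvature definition applied with $x = w^*$, $s = w^0$.'' That application gives $L_\mathcal{D}(w^0) - L_\mathcal{D}(w^*) \le \langle w^0 - w^*, \nabla L_\mathcal{D}(w^*)\rangle + \tfrac{\Gamma_L}{2}$, and by the first-order optimality condition at the constrained minimizer $w^*$ the inner-product term is \emph{nonnegative}, so this inequality does not yield an upper bound on $h_0$ at all. In Jaggi's original formulation the base case is handled differently: the step sizes are indexed so that the very first update uses $\gamma_0 = 1$, which makes $w^{(1)} = s_0$ and gives $h_1 \le \tfrac{1}{2}(1+\chi)\Gamma_L$ directly from the recursion, with no dependence on $h_0$. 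The lemma as transcribed in the paper (with $\eta_{t-1} = 2/(t+2)$, hence $\eta_0 = 2/3$) has a small indexing mismatch with Jaggi's statement (which would read $\gamma_{t-1} = 2/(t+1)$), and it is precisely this mismatch that forces you into needing a bound on $h_0$ that isn't available. To close the argument cleanly you should either re-index so the first step size is $1$, or observe that since the algorithm operates over a compact $\mathcal{W}$ and $L_\mathcal{D}$ is $\alpha$-smooth, one can bound $h_0$ a priori (at the cost of a slightly different constant), but it does not ``immediately'' follow from the curvature definition as written.
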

Now we will begin to proof Theorem \ref{thm:2}.  Under the condition of Lemma \ref{lemma:4} we can see if $\chi$ in Lemma \ref{lemma:5} satisfies 
\begin{multline*}
    \frac{1}{2}\chi \eta_{t-1}\alpha \|\mathcal{W}\|_1 \geq  O(\frac{\|\mathcal{W}\|_1\tau }{s}(\frac{1}{\beta}+1) +\frac{\|\mathcal{W}\|_1sT}{n}(\beta+\log \frac{dT}{\zeta})+\frac{\|\mathcal{W}\|_1 sT\log |V| \log \frac{T}{\zeta}}{n\epsilon} )
\end{multline*}
That is 
\begin{equation*}
   \chi  =O(\frac{T}{\alpha} \big(  \frac{\tau }{s}(\frac{1}{\beta}+1)+\frac{sT}{n}(\beta+\log \frac{dT}{\zeta})+\frac{ sT\log |V| \log \frac{T}{\zeta}}{n\epsilon} \big) ),
\end{equation*}

Then by Lemma \ref{lemma:5} we have if $\eta=\frac{2}{t+2}$ then (note that this hold with probability at least $1-2\zeta$), 
\begin{align}
     & L_\mathcal{D}(w^T)-\min_{w\in \mathcal{W}} L_\mathcal{D}(w)\nonumber\\
     &\leq O\big( \frac{\alpha\|\mathcal{W}\|_1}{T}+ \frac{\|\mathcal{W}\|_1\tau }{s}(\frac{1}{\beta}+1) +\frac{\|\mathcal{W}\|_1sT}{n}(\beta+\log \frac{dT}{\zeta}) +\frac{\|\mathcal{W}\|_1 sT\log |V| \log \frac{T}{\zeta}}{n\epsilon} \big)\nonumber \\
     &\leq O\big( \frac{\alpha\|\mathcal{W}\|_1}{T}+ \frac{\|\mathcal{W}\|_1\tau }{s\beta} +\frac{\|\mathcal{W}\|_1 sT\beta \log |V| \log \frac{T}{\zeta}}{n\epsilon} \big) . \label{aeq:26}
\end{align}
Thus, take $\beta=O(1), s=O(\sqrt{\frac{n\epsilon\tau }{T \log \frac{|V|dT}{\zeta}}})$ we have 
   \begin{equation*}
          L_\mathcal{D}(w^T)-\min_{w\in \mathcal{W}} L_\mathcal{D}(w)\leq O \big( \frac{\alpha\|\mathcal{W}\|_1}{T}+\|\mathcal{W}\|_1 \frac{\sqrt{\tau T\log \frac{|V|dT}{\zeta}}}{\sqrt{n\epsilon}}\big). 
   \end{equation*}
   Taking $T=\tilde{O}\big( (\frac{n\epsilon \alpha^2}{\tau\log \frac{|V|d}{\zeta} })^\frac{1}{3}  \big)$. We can get 
   \begin{equation}\label{aeq:27}
          L_\mathcal{D}(w^T)-\min_{w\in \mathcal{W}} L_\mathcal{D}(w)\leq O \big( \frac{\|\mathcal{W}\|_1 (\alpha\tau\log \frac{n |V|d}{\zeta} )^\frac{1}{3} }{(n\epsilon)^\frac{1}{3}}\big).
   \end{equation}

\end{proof}
\begin{proof}[{\bf Proof of Theorem \ref{thm:3}}]
We first show the following lemma.
\begin{lemma}\label{lemma:6}
Under Assumption \ref{ass:2}, for any $w\in \mathcal{W}$ we have 
\begin{equation*}
     L_\mathcal{D}(w)- L_\mathcal{D}(w^*)\leq \frac{C_\psi}{2c_\psi}\langle \nabla L_\mathcal{D}(w), w-w^* \rangle.
\end{equation*}
\end{lemma}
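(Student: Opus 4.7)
The plan is to reduce the multivariate claim to a one-dimensional inequality by exploiting the linear structure of the residual. Set $v = w - w^*$; under the model $y = \langle x, w^*\rangle + \xi$ we have $\langle x, w\rangle - y = \langle x, v\rangle - \xi$, so defining
\[
g(t) := \mathbb{E}_\xi[\psi(t - \xi)]
\]
we can write $L_\mathcal{D}(w) = \mathbb{E}_x[g(\langle x, v\rangle)]$ and $\nabla L_\mathcal{D}(w) = \mathbb{E}_x[g'(\langle x, v\rangle)\, x]$. Because $\xi$ is symmetric and $\psi'$ is odd, $g$ is an even function; in particular $g'(0) = h(0) = 0$, so $\nabla L_\mathcal{D}(w^*) = 0$ and $L_\mathcal{D}(w^*) = \mathbb{E}_x[g(0)]$. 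Consequently,
\[
L_\mathcal{D}(w) - L_\mathcal{D}(w^*) = \mathbb{E}_x[g(\langle x, v\rangle) - g(0)], \qquad \langle \nabla L_\mathcal{D}(w), w - w^* \rangle = \mathbb{E}_x[g'(\langle x, v\rangle)\, \langle x, v\rangle].
\]

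It therefore suffices to establish the pointwise one-dimensional inequality $g(u) - g(0) \leq \tfrac{C_\psi}{2c_\psi}\, g'(u)\, u$ for every $u \in \mathbb{R}$ and then take expectation. For the upper bound on $g(u) - g(0)$, I would use the bound $g''(t) = \mathbb{E}_\xi[\psi''(t-\xi)] \leq C_\psi$ from Assumption~\ref{ass:2}(1) and Taylor's formula with integral remainder around $0$, giving $g(u) - g(0) \leq \tfrac{C_\psi}{2} u^2$. For the lower bound on $g'(u)\, u$, I would use that $h = g'$ is odd with $h(0) = 0$ and $h'(0) > c_\psi$, together with the structural fact that $h(s)$ has the same sign as $s$ (a standard consequence of $\psi'(s) > 0$ for $s > 0$ and $\xi$ symmetric), to deduce $h(u)\, u \geq c_\psi\, u^2$. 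Dividing these two estimates yields the pointwise inequality, and taking $\mathbb{E}_x$ on both sides completes the proof.

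The main obstacle is the lower bound $g'(u)\, u \geq c_\psi u^2$ for all $u$: the assumption only states $h'(0) > c_\psi$, and so one must extend this infinitesimal strong-convexity at $0$ to a global ratio bound $h(u)/u \geq c_\psi$. The natural way is either (i) to invoke a monotonicity/convexity property of $h$ on $[0,\infty)$ which, for the biweight and similar redescending M-losses, follows from the symmetry of $\xi$ and the sign structure of $\psi'$ (as used in Loh~\cite{loh2013regularized}), or (ii) to restrict to the effective range of $u = \langle x, v\rangle$ dictated by $\|v\|_1 \le 2$ and the second-moment condition on $x$, so that $u$ stays in a neighborhood of $0$ on which $h'(s) \geq c_\psi$ by continuity. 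Either route yields the needed pointwise bound, after which the expectation step is immediate.
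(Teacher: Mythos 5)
Your route is essentially the paper's: an upper bound $L_\mathcal{D}(w)-L_\mathcal{D}(w^*)\le \tfrac{C_\psi}{2}\mathbb{E}\langle w-w^*,x\rangle^2$ from the second-order smoothness of $\psi$ (your $g''\le C_\psi$ together with $g'(0)=0$ is the same calculation as the paper's Taylor bound on $\psi$ plus the observation $\nabla L_\mathcal{D}(w^*)=0$), combined with the lower bound $\langle\nabla L_\mathcal{D}(w),w-w^*\rangle\ge c_\psi\mathbb{E}\langle w-w^*,x\rangle^2$ via the pointwise ratio $h(u)/u\ge c_\psi$. The obstacle you flag --- that $h'(0)>c_\psi$ is an infinitesimal condition and does not by itself yield the global bound $h(u)u\ge c_\psi u^2$ --- is genuine, and in fact the paper's own proof leaves it unaddressed (it simply asserts ``the inequality is due to the fact that $h(0)=0$ and $h'(0)\ge c_\psi$'' without further argument), so you are not missing a step the paper supplies; one of the two repairs you sketch (a structural monotonicity/concavity property of $h$ on $[0,\infty)$ inherited from $\psi'$ and the symmetry of $\xi$, or a restriction of the effective range of $\langle x,w-w^*\rangle$) is what is implicitly being relied upon.
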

\begin{proof}[{\bf Proof of Lemma \ref{lemma:6}} ]
First, the smoothness of $\psi$ implies that for any $s, s^*$, we have 
\begin{equation*}
    \psi(s)-\psi(s^*)\leq \psi'(s^*)(s-s^*)+\frac{C_\psi}{2}(s-s^*)^2.
\end{equation*}
Taking $s=\langle w, x \rangle$ and $s^*=\langle w^*, x\rangle$, and then taking expectation w.r.t. $(x,y)$, we get
\begin{align}
   & L_\mathcal{D}(w)-L_\mathcal{D}(w^*)\nm \\
   &\leq \mathbb{E}_{x,y}[\psi'(\langle w^*, x\rangle -y)\langle w-w^*, x \rangle ]+\frac{C_\psi}{2}\mathbb{E}\langle w-w^*, x\rangle ^2 \nonumber\\
    &=\langle \nabla L_\mathcal{P}(w^*), w-w^*\rangle +\frac{C_\psi}{2}\mathbb{E}\langle w-w^*, x\rangle ^2. \label{aeq:28}
\end{align}
By Assumption \ref{ass:2}, we have
\begin{equation*}
    \nabla L_\mathcal{P}(w^*)=\mathbb{E}_{x,\xi}[\psi'(-\xi)x]=0.
\end{equation*}
Thus, we get $ L_\mathcal{P}(w)-L_\mathcal{P}(w^*)\leq\frac{C_\psi}{2}\mathbb{E}\langle w-w^*, x\rangle ^2 $. On the other hand, using gradient we have 
\begin{align*}
    \langle \nabla L_\mathcal{P}(w), w-w^* \rangle &= \mathbb{E}_x[\mathbb{E}_\xi \psi'(\langle w-w^*, x\rangle -\xi)\langle w-w^*, x\rangle ]\\
    &=\mathbb{E}_x[h(\langle w-w^*, x\rangle )\langle w-w^*, x\rangle ].
\end{align*}
By the assumption on function $h(\cdot)$, we  get
\begin{align*}
   h(\langle w-w^*, x\rangle )\langle w-w^*, x\rangle &=\frac{h(\langle w-w^*, x\rangle )}{\langle w-w^*, x\rangle} \langle w-w^*, x\rangle^2 \\
   &\geq c_\psi \langle w-w^*, x\rangle^2,
\end{align*}
where the inequality is due to the fact that $h(0)=0$ and $h'(0)\geq c_\psi$.

Taking the expectation, we have 
\begin{equation*}
     \langle \nabla L_\mathcal{P}(w), w-w^* \rangle\geq c_\psi \mathbb{E}_x \langle w-w^*, x\rangle^2.
\end{equation*}
Combing previous inequalities we can finish the proof. 
\end{proof}
Next, we will verify the loss function satisfies is smooth and has bounded second order moment for each coordinate of the gradient. For the smooth, we can see that 
\begin{equation*}
    \|\nabla^2 L_\mathcal{D}(w) \|_2 = \|\mathbb{E} x^Tx \psi''(\langle w, x\rangle -y)\|_2 \leq C_\psi \lambda_{\max} (\mathbb{E}(xx^T)). 
\end{equation*}
For each coordinate $j\in [d]$, 
\begin{equation*}
    \mathbb{E}(\nabla_j \ell(w; z))^2=\mathbb{E} [(x_j\psi'(\langle w, x\rangle -y))^2 ]\leq C^2_\psi \mathbb{E} x^2_j=O(C^2_\psi). 
\end{equation*}
Similar to Lemma \ref{lemma:4}, we have  with probability at least $1-2\zeta$, in each iteration $t\in [T]$, for $\tilde{w}^{t-1}$ we have 
\begin{multline}\label{aeq:29}
       \langle \tilde{w}^{t-1}, \nabla L_\mathcal{D}(w^{t-1})\leq \min_{v\in V} \langle v, \nabla L_\mathcal{D}(w^{t-1})\rangle \\ + O(\frac{ C^2_\psi }{s}(\frac{1}{\beta}+1)+\frac{sT}{n}(\beta+\log \frac{dT}{\zeta})+\frac{ sT \log \frac{dT}{\zeta}}{n\epsilon} ). 
\end{multline}
Denote $v^{t-1}=\arg\min_{v\in V}\langle v,\nabla L_\mathcal{D}(w^{t-1})\rangle$ and $\mu=O(\frac{ C^2_\psi }{s}(\frac{1}{\beta}+1)+\frac{sT}{n}(\beta+\log \frac{dT}{\zeta})+\frac{ sT \log \frac{dT}{\zeta}}{n\epsilon} )$. 
	By the smooth property, we have 
	\begin{align*}
	&	\frac{ C_\psi \lambda_{\max} (\mathbb{E}(xx^T))}{2}\|w^{t}-w^{t-1}\|_2^2\\ &\geq L_\mathcal{D} (w^{t})-L_\mathcal{D}(w^{t-1})-\langle L_\mathcal{D}(w^{t-1}),w^{t}-w^{t-1}\rangle\\
		&=L_\mathcal{D}(w^{t})-L_\mathcal{D}(w^{t-1})-\eta_{t-1}\langle \nabla L_\mathcal{D}(w^{t-1}), \tilde{w}^{t-1}-w^{t-1}\rangle\\
		&\geq L_\mathcal{D}(w^{t})-L_\mathcal{D}(w^{t-1})-\eta_{t-1}(\langle  \nabla L_\mathcal{D}(w^{t-1}),v^{t-1}-w^{t-1}\rangle +\mu).
	\end{align*}
Thus, we have 
	\begin{align*}
	   & L_\mathcal{D}(w^{t})-L_\mathcal{D}(w^{t-1})+\eta_{t-1} \langle  \nabla L_\mathcal{D}(w^{t-1}), w^{t-1}-w^*\rangle\\
	   &\leq    L_\mathcal{D}(w^{t})-L_\mathcal{D}(w^{t-1})+\eta_{t-1} \langle  \nabla L_\mathcal{D}(w^{t-1}), w^{t-1}-v^{t-1}\rangle 
	   \\
	   & \leq 	\frac{ C_\psi \eta^2_{t-1} \lambda_{\max} (\mathbb{E}(xx^T))}{2}\|\mathcal{W}\|_1^2+\eta_{t-1}\mu 
	\end{align*}
	By Lemma \ref{lemma:6} we can get 
	\begin{multline}\label{aeq:30}
	    L_\mathcal{D}(w^{t})-L_\mathcal{D}(w^{t-1})+\frac{2c_\psi}{C_\psi}\eta_{t-1} (L_\mathcal{D}(w^{t-1})-L_\mathcal{D}(w^*)) \\
	    \leq \frac{ C_\psi \eta^2_{t-1} \lambda_{\max} (\mathbb{E}(xx^T))}{2}\|\mathcal{W}\|_1^2+\eta_{t-1}\mu. 
	\end{multline}
	Denote $\Delta_{t}=  L_\mathcal{D}(w^{t})-L_\mathcal{D}(w^*)$, we have \begin{equation*}
	    \Delta_t\leq (1-\frac{2c_\psi}{C_\psi}\eta_{t-1})\Delta_{t-1}+\frac{ C_\psi\eta^2_{t-1} \lambda_{\max} (\mathbb{E}(xx^T))}{2}\|\mathcal{W}\|_1^2+\eta_{t-1}\mu.  
	\end{equation*}
	Let $\eta_{t-1}=\eta$ for all $t$. Sum from $t=1\cdots T$ we have 
	\begin{align*}
	      \Delta_T &\leq (1-\frac{2c_\psi}{C_\psi}\eta)^T\Delta_{0}+\frac{C_\psi}{2c_\psi\eta} \big(\frac{ C_\psi\eta^2 \lambda_{\max} (\mathbb{E}(xx^T))}{2}+\eta \mu \big) \\ 
	      &\leq \frac{C_\psi}{2c_\psi}\frac{1}{\eta T}\Delta_{0}+ \big(\frac{ C^2_\psi\eta \lambda_{\max} (\mathbb{E}(xx^T))}{2c_\psi}+ \frac{C_\psi}{2c_\psi} \mu \\
	      &= O\big(\frac{C_\psi}{2c_\psi}\frac{1}{\eta T}+\frac{ C^2_\psi\eta \lambda_{\max} (\mathbb{E}(xx^T))}{2c_\psi} +\frac{ C^3_\psi }{c_\psi s\beta}+\frac{C_\psi sT \beta \log \frac{dT}{\zeta}}{c_\psi n\epsilon} \big).
	\end{align*}
	  Note that the second inequality is due to $(1-\eta)^T\leq \frac{1}{\eta T}$, since it is equivalent to $\eta T\leq \frac{\eta T}{1-\eta}\leq  (1+\frac{\eta}{1-\eta})^T$

	Taking $\beta=O(1)$, $s=O(\frac{\sqrt{n\epsilon}}{ \sqrt{T\log \frac{dT}{\zeta}}})$, $\eta=\frac{1}{\sqrt{T}}$, and $T=\tilde{O}(\sqrt{\frac{n\epsilon}{\log \frac{d}{\zeta}}})$ we get the result.
\end{proof}
\begin{proof}[{\bf Proof of Theorem \ref{thm:4}}]
We can see that after truncating, each entry in $\tilde{x}_i$ and $\tilde{y}_i$ satisfies $|\tilde{x}_{i,j}|\leq K, |\tilde{y}_{i}|\leq K$. Consider a neighboring data of $D$, $D'$ and its correspond truncated data $\tilde{D}'$. Then for each fixed $w$ and $v\in V$ we have 
\begin{align*}
&|u(\tilde{D}, v)-u(\tilde{D}', v)|\leq \|\mathcal{W}\|_1 \|\tilde{g}(w^{t-1}, \tilde{D})- \tilde{g}(w^{t-1}, \tilde{D}')\|_\infty \\ &\leq  \frac{4\|\mathcal{W}\|_1}{n}\|\tilde{x}_n(\langle \tilde{x}_n,  w^{t-1} \rangle-\tilde{y}_n)\|_\infty
\leq \frac{8\|\mathcal{W}\|_1 K^2}{n}. 
\end{align*}
Thus the sensitivity of $u(\tilde{D}, v)$ is $\frac{8\|\mathcal{W}\|_1 K^2}{n}$. And since in each iteration it is $\frac{\epsilon}{2\sqrt{2T\log \frac{1}{\delta}}}$-DP. By the advanced composition theorem (Lemma \ref{lemma:adv}) we can see the while algorithm is $(\epsilon, \delta)$-DP. 
\end{proof}
\begin{proof}[{\bf Proof of Theorem \ref{thm:5}}]
The key lemma of the whole proof is the following: 
\begin{lemma}\label{lemma:7}
Denote $g(w)=\nabla L_\mathcal{D}(w)= \mathbb{E}[2x(\langle w, x\rangle-y)]$. We have the following inequality with probability at least $1-\zeta$:
\begin{equation}\label{aeq:33}
    \sup_{v\in V}\sup_{w\in \mathcal{W}} |\langle v, \tilde{g}(w, \tilde{D})-g(w) \rangle | \leq O(\|\mathcal{W}\|^2_1(\sqrt{\frac{M\log \frac{d}{\zeta}}{n}}+\frac{K^2\log \frac{d}{\zeta}}{n}+ \frac{M}{K^2})). 
\end{equation}
\end{lemma}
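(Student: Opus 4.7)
The plan is to decompose the quadratic-in-$(v,w)$ error into a linear system and then split it into a stochastic (variance) term and a deterministic truncation-bias term, each of which can be controlled entrywise. Write
\[
\tilde g(w,\tilde D) = \tilde\Sigma\, w - \tilde b, \qquad g(w)=\Sigma\, w - b,
\]
with $\tilde\Sigma=\tfrac{2}{n}\sum_i \tilde x_i\tilde x_i^{\!\top}$, $\Sigma=2\,\mathbb{E}[xx^{\!\top}]$, $\tilde b=\tfrac{2}{n}\sum_i \tilde x_i\tilde y_i$, $b=2\,\mathbb{E}[xy]$. Then
\[
\langle v,\tilde g(w,\tilde D)-g(w)\rangle = v^{\!\top}(\tilde\Sigma-\Sigma)w - v^{\!\top}(\tilde b-b),
\]
and H\"older's inequality, together with $\|v\|_1,\|w\|_1\le \|\mathcal{W}\|_1$ for any $v\in V$ and $w\in\mathcal{W}$, gives
\[
\sup_{v\in V,\, w\in\mathcal{W}}|\langle v,\tilde g(w,\tilde D)-g(w)\rangle|
\;\le\; \|\mathcal{W}\|_1^2\,\|\tilde\Sigma-\Sigma\|_{\max} + \|\mathcal{W}\|_1\,\|\tilde b-b\|_\infty.
\]
This reduces a supremum over an infinite set to two entrywise deviation bounds and automatically produces the $\|\mathcal{W}\|_1^2$ prefactor.

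Next, I will split each of $\tilde\Sigma-\Sigma$ and $\tilde b-b$ into a concentration piece and a truncation-bias piece, $\tilde\Sigma-\Sigma=(\tilde\Sigma-\mathbb{E}\tilde\Sigma)+(\mathbb{E}\tilde\Sigma-\Sigma)$ (and analogously for $\tilde b$). For the bias, Assumption \ref{ass:3} and Markov give $\Pr(|x_j|>K)\le \mathbb{E}[x_j^4]/K^4\le M/K^4$ (since $\mathbb{E}[x_j^4]=\mathbb{E}[(x_jx_j)^2]\le M$), and Cauchy--Schwarz then yields
\[
|\mathbb{E}[\tilde x_j\tilde x_k]-\mathbb{E}[x_jx_k]|\;\le\; \mathbb{E}\!\left[|x_jx_k|\,\mathbbm{1}\{|x_j|>K \text{ or } |x_k|>K\}\right]\;\le\; O\!\left(\tfrac{M}{K^2}\right),
\]
and likewise $|\mathbb{E}[\tilde x_j\tilde y]-\mathbb{E}[x_jy]|\le O(M/K^2)$ using $\mathbb{E}[y^4]\le M$. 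Taking maxima over $j,k\in[d]$ produces the $M/K^2$ summand in the target bound.

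For the concentration piece, each entry of $\tilde\Sigma-\mathbb{E}\tilde\Sigma$ (resp.\ $\tilde b-\mathbb{E}\tilde b$) is an average of i.i.d.\ random variables that are bounded by $2K^2$ in absolute value and whose second moment is bounded by $4\,\mathbb{E}[x_j^2x_k^2]\le 4M$ (resp.\ $4\,\mathbb{E}[x_j^2y^2]\le 4M$ by Cauchy--Schwarz and $\mathbb{E}[x_j^4],\mathbb{E}[y^4]\le M$). Bernstein's inequality then bounds each entry by $O\!\big(\sqrt{M\log(1/\zeta')/n}+K^2\log(1/\zeta')/n\big)$; a union bound over $d^2+d$ entries with $\zeta'=\zeta/(d^2+d)$ gives exactly the first two summands in the target with $\log(d/\zeta)$. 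Combining bias and concentration for both $\tilde\Sigma-\Sigma$ and $\tilde b-b$ and plugging into the H\"older bound finishes the proof.

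The main obstacle is the simultaneity of the supremum in $v$ and $w$: a naive covering argument would either blow up with $d$ polynomially or force one to control the process $\langle v,\tilde g(w,\tilde D)-g(w)\rangle$ as $w$ ranges over a continuous set. The key observation that sidesteps this is that $\tilde g-g$ is affine in $w$, so the sup reduces to an entrywise max norm of a matrix and a vector; after that the work is the standard two-step (bias vs.\ variance) analysis, where the only subtlety is to keep the fourth-moment assumption of Assumption~\ref{ass:3} consistent across the three bounds $\mathbb{E}[x_j^4]\le M$, $\mathbb{E}[(x_jx_k)^2]\le M$, and $\mathbb{E}[(x_jy)^2]\le \sqrt{\mathbb{E}[x_j^4]\mathbb{E}[y^4]}\le M$.
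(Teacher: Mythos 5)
Your proof is correct and takes essentially the same route as the paper: reduce the double supremum via H\"older (over $v$ and then $w$) to entrywise max-norm bounds on $\frac{1}{n}\sum_i\tilde x_i\tilde x_i^{\top}-\mathbb{E}[xx^{\top}]$ and $\frac{1}{n}\sum_i\tilde x_i\tilde y_i-\mathbb{E}[xy]$, then split each entry into a Bernstein concentration term and an $O(M/K^2)$ truncation-bias term (via Markov on fourth moments and Cauchy--Schwarz), and union-bound over $O(d^2)$ entries. The only cosmetic differences are that you absorb the factor of $2$ into $\tilde\Sigma,\Sigma,\tilde b,b$ and keep the slightly sharper prefactor $\|\mathcal{W}\|_1$ rather than $\|\mathcal{W}\|_1^2$ on the vector term, neither of which changes the argument.
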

\begin{proof}[{\bf Proof of Lemma \ref{lemma:7}}]
We can easily get 
\begin{align*}
     & \sup_{v\in V}\sup_{w\in \mathcal{W}} |\langle v, \tilde{g}(w, \tilde{D})-g(w) \rangle | \\
     &\leq \|\mathcal{W}\|_1 \sup_{w\in \mathcal{W}} \| \tilde{g}(w, \tilde{D})-g(w) \|_\infty \\
      &\leq  \|\mathcal{W}\|_1 \sup_{w\in \mathcal{W}}(\|2(\frac{1}{n}\sum_{i=1}^n \tilde{x}_i\tilde{x}_i^T-\mathbb{E}[xx^T])w\|_\infty) + \|2(\frac{1}{n}\sum_{i=1}^n \tilde{x}_i\tilde{y}-\mathbb{E}[xy])|_\infty\\
      &\leq \|\mathcal{W}\|_1  \sup_{w\in \mathcal{W}}\max_{j\in [d]}(2(\frac{1}{n}\sum_{i=1}^n \tilde{x}_i\tilde{x}_i^T-\mathbb{E}[xx^T])^T_j w)  + \|2(\frac{1}{n}\sum_{i=1}^n \tilde{x}_i\tilde{y}-\mathbb{E}[xy])|_\infty\\
      &\leq 2\|\mathcal{W}\|^2_1 \big(\underbrace{\|(\frac{1}{n}\sum_{i=1}^n \tilde{x}_i\tilde{x}_i^T-\mathbb{E}[xx^T])\|_{\infty, \infty}}_{A}+\underbrace{ \|(\frac{1}{n}\sum_{i=1}^n \tilde{x}_i\tilde{y}-\mathbb{E}[xy])|_\infty}_{B }\big),
\end{align*}
where $\|S\|_{\infty, \infty}$ for a $d\times d$ matrix $S$ is $\max_{i\in[d],j\in [d]}|S_{ij}|$. In the following we will bound term $A$ and $B$.

First we consider the term $A$, for simplicity for each $j, k\in [d]$ denote $\hat{\sigma}_{jk}=(\frac{1}{n}\sum_{i=1}^n \tilde{x}_i\tilde{x}_i^T)_{jk}=\frac{1}{n}\sum_{i=1}^n\tilde{x}_{i,j}\tilde{x}_{i, k}$, $\tilde{\sigma}_{jk}= (\mathbb{E}[\tilde{x}\tilde{x}^T])_{jk}= \mathbb{E}[\tilde{x}_j\tilde{x}_k]$ and $\sigma_{jk}=(\mathbb{E}[{x}{x}^T])_{jk}= \mathbb{E}[{x}_j{x}_k]$. We have 
\begin{equation*}
    |\hat{\sigma}_{jk}- \sigma_{jk}| \leq |\hat{\sigma}_{jk}-\tilde{\sigma}_{jk}|+|\tilde{\sigma}_{jk}-\sigma_{jk}|. 
\end{equation*}
We know that $|\tilde{x}_j\tilde{x}_k|\leq K^2$ and $\text{Var}(\tilde{x}_j\tilde{x}_k)\leq  \text{Var}(x_jx_k)\leq \mathbb{E}(x_jx_k)^2\leq  M$. 

By using Bernstein's inequality (Lemma \ref{bern}) we have for any $t>0$
\begin{equation}\label{aeq:34}
    \text{Pr}(|\hat{\sigma}_{jk}-\sigma_{jk}|\leq  C(\sqrt{\frac{Mt}{n}}+\frac{K^2t}{n}))\geq 1-\exp(-t). 
\end{equation}
Taking the union bound for all $k,j\in [d]$ we have 
\begin{equation}\label{aeq:35}
    \text{Pr}(\max_{j,k}|\hat{\sigma}_{jk}-\sigma_{jk}|\leq  C(\sqrt{\frac{Mt}{n}}+\frac{K^2t}{n}))\geq 1-d^2\exp(-t). 
\end{equation}
For the term $|\tilde{\sigma}_{ij}-\sigma_{ij}|$ by definition we have there is a university constant $C_2>0$,
\begin{align}\label{aeq:36}
   & |\tilde{\sigma}_{jk}-\sigma_{jk}|\nm \\
   &= |\mathbb{E}[\tilde{x}_j (\tilde{x}_k-x_k)\mathbb{I}(|x_k|\geq K)] |+| \mathbb{E}[(\tilde{x}_j-x_j)\mathbb{I}(|x_j|\geq K) x_k]|\nm \\
   & \leq C_2\frac{M}{K^2}. 
\end{align}
To get (\ref{aeq:36}), we have
\begin{align*}
    &   |\tilde{\sigma}_{jk}-\sigma_{jk}|\\
    &= |\mathbb{E}[\tilde{x}_j (\tilde{x}_k-x_k)\mathbb{I}(|x_k|\geq K)] |+| \mathbb{E}[(\tilde{x}_j-x_j)\mathbb{I}(|x_j|\geq K) x_k]| \\
    &\leq \sqrt{\mathbb{E}(\tilde{x}_j (\tilde{x}_k-x_k))^2 \text{Pr}( |x_k|^4\geq K^4) } + \sqrt{\mathbb{E}  ((\tilde{x}_j-x_j) x_k)^2 \text{Pr} (|x_j|^4\geq K^4)}\\
    &\leq O(\frac{M}{K^2}), 
\end{align*}
where the last inequality is due to that $\mathbb{E}(\tilde{x}_j (\tilde{x}_k-x_k))^2\leq  4\mathbb{E}(x_j x_k))^2 \leq 4M$ and $\mathbb{E}  ((\tilde{x}_j-x_j) x_k)^2\leq  4\mathbb{E}(x_j x_k))^2\leq 4M $.

Combing with (\ref{aeq:36}) and (\ref{aeq:35}) we have with probability at least $1-d^2\exp(-t)$, 
\begin{equation}\label{aeq:37}
    A\leq O(\sqrt{\frac{Mt}{n}}+\frac{K^2t}{n}+ \frac{M}{K^2}). 
\end{equation}
We can use the same technique to term $B$, for simplicity for each $j \in [d]$ denote $\hat{\sigma}_{j}=\frac{1}{n}\sum_{i=1}^n \tilde{y}_i\tilde{x}_j$, $\tilde{\sigma}_{j}= \mathbb{E}[\tilde{y}\tilde{x}_j]$ and $\sigma_{j}= \mathbb{E}[y{x}_j]$. We have 
\begin{equation*}
    |\hat{\sigma}_{j}- \sigma_{j}| \leq |\hat{\sigma}_{j}-\tilde{\sigma}_{j}|+|\tilde{\sigma}_{j}-\sigma_{j}|. 
\end{equation*}
Since $|\tilde{x}_{j}\tilde{y}|\leq K^2 $ and $\text{Var}(\tilde{x}_{j}\tilde{y})\leq \text{Var}(x_jy)\leq M$ we have  for all $j\in [d]$
\begin{equation*}
  \text{Pr}(  |\hat{\sigma}_{j}-\tilde{\sigma}_{j}|\leq C(\sqrt{\frac{Mt}{n}}+\frac{K^2t}{n}))\geq 1-d\exp(-t). 
\end{equation*}
Moreover 

\begin{align*}
    &|\tilde{\sigma}_{j}-\sigma_{j}|\leq |\mathbb{E}[\tilde{y}(\tilde{x}_j-x_j)\mathbb{I}(|x_j|) \geq K ] |+|\mathbb{E}[x_j(\tilde{y}-y) \mathbb{I}(|y|\geq K)]|\\
     &\leq \sqrt{\mathbb{E}((\tilde{y}(\tilde{x}_j-x_j))^2 \text{Pr}( |x_j|^4\geq K^4) }  + \sqrt{\mathbb{E}  (x_j(\tilde{y}-y))^2 \text{Pr} (|y|^4\geq K^4)}\\
    &\leq O(\frac{\sqrt{M}}{K^2}), 
\end{align*}

we can easily see that with probability at most $1-d\exp(-t)$, 
\begin{equation}\label{aeq:38}
    B\leq O(\sqrt{\frac{Mt}{n}}+\frac{K^2t}{n}+ \frac{\sqrt{M}}{K^2}).
\end{equation}
Take $t=\frac{\zeta}{d^3}$, we can finish the proof. That is with probability at least $1-\zeta$
\begin{equation}\label{aeq:39}
   A+ B\leq O(\sqrt{\frac{M\log \frac{d}{\zeta}}{n}}+\frac{K^2\log \frac{d}{\zeta}}{n}+ \frac{M}{K^2}).
\end{equation}
\begin{lemma}\label{bern}[\cite{vershynin2018high}]
Let $X_1, \cdots, X_n$ be $n$ be independent zero-mean random variables. Suppose each $|X_i|\leq s$, and $\mathbb{E}[X_i^2]\leq r$. Then there is a universal constant $C$ such that for all $t>0$, 
\begin{equation*}
  \text{Pr}(\frac{1}{n}\sum_{i=1}^n X_i >C(\frac{st}{n}+\sqrt{\frac{r t}{n}}))\leq \exp(-t). 
\end{equation*}
\end{lemma}
\end{proof}
Now lets back to the proof of Theorem \ref{thm:5}. By using the utility of the exponential mechanism and take the union for all iterations we have with probability at least $1-\zeta$, 
    \begin{equation}\label{aeq:40}
    \langle \tilde{w}^{t-1}, \tilde{g}(w^{t-1}, D)\rangle\leq \min_{v\in V} \langle v, \tilde{g}(w^{t-1}, D) \rangle + O(\frac{\|\mathcal{W}\|_1K^2\log |V|\sqrt{T\log \frac{1}{\delta}}  \log \frac{T}{\zeta}}{n\epsilon}). 
\end{equation}
Using Lemma \ref{lemma:4} and Lemma \ref{lemma:7} we have with probability at least $1-\zeta$ for all $t\in [T]$, 
    \begin{multline}\label{aeq:41}
    \langle \tilde{w}^{t-1}, g(w^{t-1})\rangle\leq \min_{v\in V} \langle v, g(w^{t-1}) \rangle  + O(\frac{\|\mathcal{W}\|_1K^2\log |V|\sqrt{T\log \frac{1}{\delta}}  \log \frac{T}{\zeta}}{n\epsilon}\\ +\|\mathcal{W}\|^2_1(\sqrt{\frac{M\log \frac{d}{\zeta}}{n}}+\frac{K^2\log \frac{d}{\zeta}}{n}+ \frac{M}{K^2})). 
\end{multline}
Since $L_\mathcal{D}(\cdot)$ is $\lambda_{\max}(\mathbb{E}(xx^T))$-smooth, we can get its curvature constant is bound by $\Gamma_L\leq \lambda_{\max}(\mathbb{E}(xx^T)) \|\mathcal{W}\|_1$. Under the condition of Lemma \ref{lemma:5} we can see if $\chi$ in Lemma \ref{lemma:5} satisfies 
\begin{multline*}
       \frac{1}{2}\chi \eta_{t-1}\lambda_{\max}(\mathbb{E}(xx^T)) \geq  O(\frac{K^2\log |V|\sqrt{T\log \frac{1}{\delta}}  \log \frac{T}{\zeta}}{n\epsilon}+\|\mathcal{W}\|_1(\sqrt{\frac{M\log \frac{d}{\zeta}}{n}}+\frac{K^2\log \frac{d}{\zeta}}{n}+ \frac{M}{K^2}) ). 
\end{multline*}
That is 
\begin{multline*}
   \chi  =O(\frac{T}{\lambda_{\max}(\mathbb{E}(xx^T))} \big(  \frac{K^2\log |V|\sqrt{T\log \frac{1}{\delta}}  \log \frac{T}{\zeta}}{n\epsilon} +\|\mathcal{W}\|_1(\sqrt{\frac{M\log \frac{d}{\zeta}}{n}}+\frac{K^2\log \frac{d}{\zeta}}{n}+ \frac{M}{K^2})  \big) ),
\end{multline*}

Then by Lemma \ref{lemma:3} we have if $\eta=\frac{2}{t+2}$ then (note that this hold with probability at least $1-2\zeta$), 
\begin{align}
   & L_\mathcal{D}(w^T)-\min_{w\in \mathcal{W}}L_\mathcal{D} (w) \nm\\
   &\leq O\big(\frac{\lambda_{\max}(\mathbb{E}(xx^T))}{T}+ \frac{\|\mathcal{W}\|_1 K^2\log |V|\sqrt{T\log \frac{1}{\delta}}  \log \frac{T}{\zeta}}{n\epsilon} \nonumber +\|\mathcal{W}\|^2_1(\sqrt{\frac{M\log \frac{d}{\zeta}}{n}}+\frac{K^2\log \frac{d}{\zeta}}{n}+ \frac{M}{K^2})\big)\nonumber \\
    &\leq O(\frac{\lambda_{\max}(\mathbb{E}(xx^T))}{T}+\sqrt{\frac{M\log \frac{d}{\zeta}}{n}}+ \frac{ K^2\sqrt{T\log \frac{1}{\delta}}  \log \frac{dT}{\zeta}}{n\epsilon}+\frac{1}{K^2}). \label{aeq:42}
\end{align}
Take $K=\frac{{(n\epsilon)^{\frac{1}{4}}}}{T^\frac{1}{8}}$, and $T=\tilde{O}((\frac{\sqrt{n\epsilon}\lambda_{\max}(\mathbb{E}(xx^T))}{\sqrt{\log \frac{1}{\delta}}  \log \frac{dT}{\zeta}})^\frac{4}{5})$. 
We have 
\begin{equation}\label{aeq:43}
      L_\mathcal{D}(w^T)-\min_{w\in \mathcal{W}}L_\mathcal{D} (w)\leq O(\frac{\lambda_{\max}^\frac{1}{5}(\mathbb{E}(xx^T)){(\sqrt{\log \frac{1}{\delta}}  \log \frac{dn}{\zeta})^\frac{4}{5}}}{(n\epsilon)^\frac{2}{5}}). 
\end{equation}
\end{proof}
\begin{proof}[\textbf{Proof of Theorem \ref{thm:6}}]
Since in each iteration of Algorithm \ref{alg:3} we use a new data. Thus, it is sufficient to show it is $(\epsilon, \delta)$-DP in each iteration. The proof is based on a lemma which shows that Algorithm \ref{alg:4} is $(\epsilon, \delta)$-DP: 
\begin{lemma}[Lemma 3.3 in \cite{cai2019cost}]\label{lemma:9}
If for every pair of neighboring datasets $D, D'$ we have $\|v(D)-v(D')\|_\infty \leq \lambda$, then Algorithm \ref{alg:4} is $(\epsilon, \delta)$-DP. 
\end{lemma}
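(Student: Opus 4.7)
The plan is to decompose Algorithm \ref{alg:4} (Peeling) into a sequence of $2s$ adaptively composed subroutines, bound each subroutine's privacy loss at level $\epsilon_0 := \epsilon/(2\sqrt{3s\log(1/\delta)})$, and then invoke the advanced composition theorem (Lemma \ref{lemma:adv}) to obtain $(\epsilon,\delta)$-DP for the whole algorithm. The noise scale $b = 2\lambda\sqrt{3s\log(1/\delta)}/\epsilon$ prescribed by the algorithm is precisely $\lambda/\epsilon_0$, which will be the ``unit'' of privacy spend per subroutine.

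First I would analyze the $s$ peeling rounds. Conditional on the previously selected set $S$, round $i$ outputs $j^* = \arg\max_{j\in[d]\setminus S}|v_j|+w_{i,j}$ with $w_{i,j}\sim\mathrm{Lap}(b)$. This is exactly the Report-Noisy-Max mechanism applied to the score vector $q(v) = (|v_j|)_{j\in[d]\setminus S}$. By the reverse triangle inequality, $\bigl\| |v(D)|-|v(D')| \bigr\|_\infty \leq \|v(D)-v(D')\|_\infty \leq \lambda$, so the $\ell_\infty$-sensitivity of the scores is at most $\lambda$. The standard Report-Noisy-Max analysis (Dwork--Roth, Claim 3.9) then yields that each round is $(\lambda/b) = \epsilon_0$-DP, and crucially this guarantee holds uniformly over every fixing of $S$ produced by prior rounds.

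Next I would view the final output $v_S+\tilde w_S$ as $s$ independent per-coordinate Laplace releases. For each fixed $j\in S$, the scalar $v_j+\tilde w_j$ is a Laplace mechanism on a quantity of sensitivity $\leq\lambda$ with noise scale $b$, and hence is $\epsilon_0$-DP by Definition \ref{def:2}. Since $\{\tilde w_j\}_{j\in S}$ are independent, jointly releasing $v_S+\tilde w_S$ is equivalent to $s$ (trivially adaptive) $\epsilon_0$-DP mechanisms. Combining the $s$ peeling rounds with the $s$ release coordinates yields $T=2s$ adaptively composed $\epsilon_0$-DP mechanisms. Applying Lemma \ref{lemma:adv} (with $\delta'=0$ per subroutine), the total privacy loss satisfies $\epsilon_{\mathrm{tot}} \lesssim \epsilon_0\sqrt{4s\log(1/\delta)} + 2s\,\epsilon_0(e^{\epsilon_0}-1)$; plugging in our choice of $\epsilon_0$ makes the dominant term at most $\epsilon$ and the quadratic correction a lower-order term for $\epsilon<1$, so Algorithm \ref{alg:4} is $(\epsilon,\delta)$-DP.

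The main obstacle is handling adaptivity: both the candidate menu $[d]\setminus S$ in each peeling round and the selected support $S$ used in the final release are data-dependent through the random outputs of previous subroutines, so different neighboring datasets may well land in different menus. This is resolved by the standard observation that advanced composition is closed under adaptive composition, requiring only that each subroutine be $\epsilon_0$-DP conditional on every fixing of the prior outputs, which both Report-Noisy-Max and the per-coordinate Laplace mechanism satisfy without modification. A smaller subtlety is that the selection score is $|v_j|$ rather than $v_j$, which is dispatched by the reverse triangle inequality used above.
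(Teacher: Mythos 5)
First, a point of reference: the paper never proves this statement itself; it imports it verbatim as Lemma 3.3 of \cite{cai2019cost}. Your overall decomposition — $s$ noisy-argmax peeling rounds plus $s$ per-coordinate Laplace releases of $v_S+\tilde w_S$, glued together by adaptive composition — is the natural reconstruction of that reference's argument, and your handling of adaptivity (conditioning on every fixing of the previously released outputs) and of the score $|v_j|$ via the reverse triangle inequality is fine.

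There is, however, a genuine gap in the per-round privacy claim. Dwork--Roth Claim 3.9 gives $\epsilon$-DP for Report-Noisy-Max with $\mathrm{Lap}(1/\epsilon)$ noise only for \emph{monotone} counting queries: its proof uses that all scores move in the same direction (by at most the sensitivity) when one record is changed. Here the scores are $|v_j(D)|$ with only the bound $\bigl\||v(D)|-|v(D')|\bigr\|_\infty\le\lambda$ and no monotonicity, so the citable guarantee for noise $\mathrm{Lap}(b)$ is $(2\lambda/b)$-DP per selection round, not $(\lambda/b)$-DP; the factor $2$ in the prescribed scale $b=2\lambda\sqrt{3s\log(1/\delta)}/\epsilon$ is there precisely to absorb it. With the corrected level $2\epsilon_0=\epsilon/\sqrt{3s\log(1/\delta)}$ for the $s$ selection rounds, your composition arithmetic no longer closes: bounding all $2s$ mechanisms at the largest level gives roughly $2\epsilon_0\sqrt{4s\log(1/\delta)}=2\epsilon/\sqrt{3}>\epsilon$. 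The lemma can still be recovered along your lines, e.g.\ by heterogeneous advanced composition with the $\sum_i\epsilon_i^2$ accounting over $s$ rounds at $2\epsilon_0$ and $s$ Laplace coordinates at $\epsilon_0$, which yields approximately $\sqrt{5/6}\,\epsilon$ plus second-order terms, but that is a different and more delicate budget calculation than the one you wrote. So you need either to justify the $\epsilon_0$-per-round claim directly (which Claim 3.9 does not give you in this non-monotone setting) or to use the factor-$2$ Report-Noisy-Max bound and redo the composition accordingly.
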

Thus by Lemma \ref{lemma:9}, it is sufficient for us to bound the $\ell_\infty$-norm sensitivity of $w^{t+0.75}$. We have 
\begin{align*}
   & \|w^{t+0.75}-w'^{t+0.75}\|_\infty \leq \frac{2\eta_0}{m}\|\tilde{x}(\langle \tilde{x}, w^t\rangle -\tilde{y})\|_\infty \\
   &\leq \frac{2\eta_0}{m}(\|\tilde{x}\langle \tilde{x}, w^t\rangle\|_\infty+  \|\tilde{x}\tilde{y}\|_\infty)\leq \frac{2K^2\eta_0}{m}(\sqrt{s}+1), 
\end{align*}
where the last inequality is due to that $\langle \tilde{x}, w^t\rangle \leq \sqrt{s}\|\tilde{x}\|_\infty$ since $\|w^t\|_2\leq 1$ and $w^t$ is $s$-sparse.  
\end{proof}
\begin{proof}[{\bf Proof of Theorem \ref{thm:8}}]
For the guarantee of DP. Since in each iteration of Algorithm \ref{alg:5} we use a new data. Thus, it is sufficient to show it is $(\epsilon, \delta)$-DP in each iteration. Since we have 
\begin{equation*}
    \|w^{t+0.5}-w'^{t+0.5}\|_\infty \leq  \|\eta\tilde{g}(w^{t-1}, D_t)-\eta\tilde{g}(w^{t-1}, D'_t)\|_\infty\leq \frac{4\sqrt{2}\eta s}{3m}.
\end{equation*}
Thus, by Lemma \ref{lemma:9} we can see it is $(\epsilon, \delta)$-DP.

In the following we will proof the utility. For simplicity we will omit the subscript $r$ in $u_r, \lambda_r$.

We denote $\tilde{g}^t=\tilde{g}(w^{t-1}, D_t)$ and $g^t=\nabla L_\mathcal{D}(w^{t-1})$. The utility is almost the same as in the proof of Theorem \ref{thm:7} where $w^{t+0.75}=w^{t+1}$ in Algorithm \ref{alg:5} (since there is no projection step). We can follow its proof and finally get (\ref{aeq:69}) if  $\eta=\frac{2}{3}$, $s=72\frac{\gamma^2}{\mu^2}s^*$
\begin{equation}
    L_\mathcal{D}(w^{t+1})-\LT\leq (\frac{2 s^*}{s+s^*}+\frac{\mu}{24\gamma})(L_\mathcal{D}(w^*)-\LT)+O(N^t+N^t_2), 
\end{equation}
where 
$N^t+N^t_2\leq O(\sum_{i\in [s]}\|w_i\|_\infty^2+(2s+s^*)\|\tilde{g}^t-g^t\|_\infty^2+s\|\tilde{w}\|_\infty^2)$.
Take the union we have with probability at least $1-\zeta$ for all the iterations, $$\sum_{i\in [s]}\|w_i\|_\infty^2+s\|\tilde{w}\|_\infty^2\leq O(\frac{\eta_0^2s^2k^2T^2\log^2 \frac{Ts}{\zeta}\log\frac{1}{\delta}}{n^2\epsilon^2}).$$
For the term $\|\tilde{g}^t-g^t\|_\infty^2$, by Lemma \ref{lemma:3}, we have with probability at least $1-\zeta$, for all $t\in [T]$, 
\begin{equation}
    \|\tilde{g}^t-g^t\|_\infty^2\leq O(\frac{\tau^2}{k^2\beta^2}+\frac{\beta^2T^2k^2\log^2 \frac{dT}{\zeta}}{n^2} ). 
\end{equation}
Thus, 
$$N^t+N^t_2\leq O(\frac{s^2k^2T^2\log^2 \frac{Ts}{\zeta}\log\frac{1}{\delta}}{\gamma^2 n^2\epsilon^2}+\frac{s\tau^2}{k^2\beta^2}+\frac{s\beta^2T^2k^2\log^2 \frac{dT}{\zeta}}{n^2}).$$
Take $\beta=O(1)$ and $k=\sqrt[4]{\frac{n^2\epsilon^2\tau^2}{(sT)^2\log \frac{Ts}{\zeta}}}$, we have 

$$N^t+N^t_2\leq O(\frac{\tau s^{\frac{3}{2}}T\log \frac{Ts}{\zeta}\sqrt{\log\frac{1}{\delta}}}{\gamma^2 n\epsilon}).$$
In total we have for all $t\in [T]$ with probability at least $1-\zeta$, 
\begin{align}
      L_\mathcal{D}(w^{t+1})-L_\mathcal{D}(w^*)
      &\leq (1-\frac{\mu}{12\gamma})(\LT-L_\mathcal{D}(w^*)) \nm  +O(\frac{s^{\frac{3}{2}}T\tau \log \frac{ Ts}{\zeta}\sqrt{\log\frac{1}{\delta}}}{\gamma n\epsilon})
\end{align}

\begin{align*}
  L_\mathcal{D}(w^{T+1})-L_\mathcal{D}(w^*) &\leq (1-\frac{\mu}{12\gamma})^{T+1}(L_\mathcal{D}(w^1)-L_\mathcal{D}(w^*))+O(\frac{\tau s^{\frac{3}{2}}T\log \frac{Td}{\zeta}\sqrt{\log\frac{1}{\delta}}}{\mu n\epsilon}).
\end{align*}
Take $T=\tilde{O}(\frac{\gamma}{\mu}\log n)$ and $s=O((\frac{\gamma}{\mu})^2s^*)$ we have the result.

\end{proof}

\subsection{Proof of the Lower Bound in Theorem \ref{thm:8}}

\begin{proof}[{\bf Proof of Theorem \ref{thm:9}}]
Our following proof is inspired by the proof of the lower bound for non-sparse mean estimation in the $(\epsilon, \delta)$-DP model (Proposition 4 in \cite{barber2014privacy}), where here we extend to the sparse case.

Let $P_0$ be a point mass distribution supported on $X=0$, and for fixed $p\in [0, 1]$ let $P_v$ be a point mass supported on $X=\frac{\sqrt{\tau}}{\sqrt{p}}v$ where $\|v\|_2\leq 1$ and is $s^*$-sparse. We will show for any $p\in [0, 1]$, the probability $P_{\theta_v}:=(1-p)P_0+pP_v\in \mathcal{P}$. That is due to that $\mu(P_{\theta_v})= \sqrt{p\tau }v$ and $\mathbb{E}_{X\sim P_{\theta_v}} X_j^2=\tau v_j^2\leq \tau$. Next we recall the following lemma
\begin{lemma}\label{dlemma:2}[\cite{raskutti2011minimax}]
For any $s\in [d]$, define the set 
\begin{equation*}
\mathcal{H}(s):=\{z\in\{-1, 0, +1\}^d\mid \|z\|_0=s\} 
\end{equation*}
with Hamming distance $\rho_{H}(z,z')=\sum_{i=1}^d1[z_j\neq z'_{j}]$ between the vectors $z$ and $z'$. Then, there exists a subset $\tilde{\mathcal{H}}\subset \mathcal{H}$ with  cardinality $|\tilde{\mathcal{H}}|\geq \exp(\frac{s}{2}\log \frac{d-s}{s/2})$ such that $\rho_H(z, z')\geq \frac{s}{2}$ for all $z, z'\in \tilde{\mathcal{H}}$.
\end{lemma}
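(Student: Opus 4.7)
The plan is a Gilbert--Varshamov style greedy packing argument applied to $\mathcal{H}(s)$ under the Hamming metric. I would initialize $\tilde{\mathcal{H}} = \emptyset$ and iteratively add to $\tilde{\mathcal{H}}$ any $z \in \mathcal{H}(s)$ whose Hamming distance to every already-selected element is at least $s/2$. When no further point can be added, the open Hamming balls of radius $s/2$ around elements of $\tilde{\mathcal{H}}$ cover $\mathcal{H}(s)$, yielding
\[
|\tilde{\mathcal{H}}| \;\geq\; \frac{|\mathcal{H}(s)|}{\max_{z \in \mathcal{H}(s)} |B_H(z, s/2) \cap \mathcal{H}(s)|},
\qquad |\mathcal{H}(s)| = \binom{d}{s} 2^s,
\]
where $B_H(z, r) = \{w \in \{-1,0,+1\}^d : \rho_H(w,z) < r\}$. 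The task then reduces to estimating the denominator.

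Next I would bound the ball volume by combinatorial counting. Fixing $z$ with support $S$ of size $s$, any $z' \in \mathcal{H}(s)$ is determined by the overlap counts $a = |\{j \in S : z'_j = z_j\}|$, $b = |\{j \in S : z'_j = -z_j\}|$, and $c = s - a - b = |\{j \notin S : z'_j \neq 0\}|$. A direct position-by-position count then gives $\rho_H(z, z') = s - a + c$, so the condition $\rho_H(z, z') < s/2$ forces $a - c > s/2$, and in particular $c < s/2$. Summing binomial counts of allowed $(a,b,c)$ triples gives
\[
|B_H(z, s/2) \cap \mathcal{H}(s)| \;\leq\; \sum_{c < s/2} \binom{d-s}{c} 2^c \sum_{a + b = s - c} \binom{s}{a}\binom{s-a}{b} \;\leq\; C^s \binom{d-s}{\lceil s/2 \rceil}
\]
for some absolute constant $C$, using $\sum_{a+b \leq s}\binom{s}{a}\binom{s-a}{b} \leq 3^s$ and the monotonicity of $\binom{d-s}{c}$ for $c \leq (d-s)/2$. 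Combining this with the elementary comparison $\binom{d}{s}/\binom{d-s}{s/2} \geq ((d-s)/(s/2))^{s/2}$ (obtained by cancelling common factorials and bounding $d - s/2 - j \geq d - s$ for $j \leq s/2$) and absorbing the residual $C^s$ and $2^s$ factors into the asymptotic regime $d \gg s$, I would conclude $|\tilde{\mathcal{H}}| \geq \exp((s/2)\log((d-s)/(s/2)))$, as claimed.

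The main obstacle is constant tracking: the triple sum for the ball volume produces $3^s$- and $2^s$-type factors from mixing overlap sign patterns, and one must verify that they are dominated by the $2^s$ factor in $|\mathcal{H}(s)|$ and by the slack inherent in $((d-s)/(s/2))^{s/2}$ under the implicit regime $d \gg s$. The cleanest shortcut, following Raskutti--Wainwright--Yu, is to restrict attention to $\mathcal{H}(s) \cap \{0, +1\}^d$, so that $|\mathcal{H}(s)| = \binom{d}{s}$ and the ball volume collapses to the single binomial sum $\sum_{k < s/2}\binom{s}{k}\binom{d-s}{k}$; standard Chernoff-type bounds on partial binomial sums then give the ratio $\binom{d}{s}/\sum_{k < s/2}\binom{s}{k}\binom{d-s}{k} \geq ((d-s)/(s/2))^{s/2}$ directly, sidestepping the constant bookkeeping and proving the lemma.
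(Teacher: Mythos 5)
You should first note that the paper itself does not prove this lemma: it is imported verbatim from \cite{raskutti2011minimax} and used as a black box in the lower-bound argument, so your proposal is really a from-scratch reconstruction. The overall plan (maximal $s/2$-packing plus a volume/counting bound, i.e.\ Gilbert--Varshamov) is the natural route and is not doomed, and your distance formula $\rho_H(z,z')=s-a+c$ is correct. The problem is that the specific estimates you chain together are not strong enough to reach the stated constant $\tfrac{s}{2}\log\frac{d-s}{s/2}$, and one of them is simply false. The ``elementary comparison'' $\binom{d}{s}/\binom{d-s}{s/2}\geq\big(\frac{d-s}{s/2}\big)^{s/2}$ fails: for $d=100$, $s=10$ the left side is $\binom{100}{10}/\binom{90}{5}\approx 3.9\times 10^{5}$ while the right side is $18^{5}\approx 1.9\times 10^{6}$, and Stirling shows the ratio of the two sides tends to $(\sqrt{e}/2)^{s}/\sqrt{2}<1$ as $d\to\infty$, so the gap is an exponential-in-$s$ factor that no regime $d\gg s$ repairs. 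The same comment applies to ``absorbing the residual $C^{s}$ and $2^{s}$ factors'': the target bound $\exp(\tfrac{s}{2}\log\frac{d-s}{s/2})$ has no slack for constant-to-the-$s$ losses, and your replacement of the inner sum by $3^{s}$ discards exactly the constraint ($a>s/2+c$, i.e.\ large signed overlap) that must be exploited to get the clean $\tfrac{s}{2}$ in the exponent. As written, your chain yields roughly $\binom{d}{s}2^{s}/(C^{s}\binom{d-s}{\lceil s/2\rceil})$, which is smaller than the claimed bound by a factor of order $(\mathrm{const})^{-s}$ independent of $d$.

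The ``cleanest shortcut'' also has a concrete slip: for two binary vectors of equal weight $s$ the Hamming distance is $2k$ (with $k$ the number of non-overlapping ones), so $\rho_H<s/2$ forces $k<s/4$, not $k<s/2$; with your sum $\sum_{k<s/2}\binom{s}{k}\binom{d-s}{k}$ the claimed ratio bound is false numerically (again $d=100$, $s=10$ gives about $3\times 10^{4}\ll 1.9\times 10^{6}$), and even with the corrected range the inequality you invoke is not a standard Chernoff statement and needs a proof, plausibly under a condition like $d\gtrsim s$. To fix the argument you would either have to carry out the packing count keeping the joint constraint on $(a,b,c)$ (rather than crude $3^{s}$ and monotonicity bounds), or simply do what the paper does and cite the lemma from \cite{raskutti2011minimax}, whose proof relies on a sharper combinatorial estimate than plain Gilbert--Varshamov precisely because of the tight constant in the exponent.
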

We denote the index set $\mathcal{V}=\frac{1}{\sqrt{2s}} \tilde{H}$ where $\tilde{H}$ is in Lemma \ref{dlemma:2}. We can see that for any $v, v'\in \mathcal{V}$ we have $\|v-v'\|_2\geq \sqrt{2}$ and each $\|v\|_2\leq 1$. Thus, 
\begin{equation*}
    \rho^*(\mathcal{V}):=\min\{\rho(\theta_v, \theta_{v'})| v, v'\in \mathcal{V}, v\neq v'\}\geq \sqrt{2}\sqrt{p\tau}. 
\end{equation*}
Thus by Lemma \ref{thm:lower} we have 
\begin{align*}
       &\mathcal{M}_n(\theta(\mathcal{P}), Q, \Phi\circ \rho) \\
       & \geq \Phi(  \rho^*(\mathcal{V})) \frac{(|\mathcal{V}|-1)(\frac{1}{2}e^{-\epsilon \lc np \rc}-\delta \frac{1-e^{-\epsilon \lc n p \rc} }{1-e^{-\epsilon}}  )}{1+(|\mathcal{V}|-1)e^{-\epsilon \lc n p \rc} } \\
       &\geq 2p\tau \frac{(|\mathcal{V}|-1)(\frac{1}{2}e^{-\epsilon \lc np \rc}-\delta \frac{1-e^{-\epsilon \lc n p \rc}}{1-e^{-\epsilon}} )}{1+(|\mathcal{V}|-1)e^{-\epsilon \lc n p \rc} } \\
      & \geq 2p\tau \frac{( \exp(\frac{s}{2}\log \frac{d-s}{s/2})-1)(\frac{1}{2}e^{-\epsilon \lc np \rc}-\delta \frac{1-e^{-\epsilon \lc n p \rc}}{1-e^{-\epsilon}} )}{1+( \exp(\frac{s}{2}\log \frac{d-s}{s/2})-1)e^{-\epsilon \lc n p \rc} }
\end{align*}
where $|\mathcal{V}|\geq \exp(\frac{s}{2}\log \frac{d-s}{s/2})$ by Lemma \ref{dlemma:2}. Now we set 
\begin{equation*}
    p=\frac{1}{n\epsilon}\min \{\frac{s}{2}\log \frac{d-s}{s/2}-\epsilon, \log(\frac{1-e^{-\epsilon}}{4\delta e^\epsilon})\}. 
\end{equation*}
Thus we have 
\begin{equation*}
    \frac{1}{2}e^{-\epsilon \lc np \rc}-\delta \frac{1-e^{-\epsilon \lc n p \rc}}{1-e^{-\epsilon}}\geq \frac{1}{4}\exp(-\epsilon (np+1)), 
\end{equation*}
\begin{align*}
 &\frac{( \exp(\frac{s}{2}\log \frac{d-s}{s/2})-1)(\frac{1}{2}e^{-\epsilon \lc np \rc}-\delta \frac{1-e^{-\epsilon \lc n p \rc}}{1-e^{-\epsilon}} )}{1+( \exp(\frac{s}{2}\log \frac{d-s}{s/2})-1)e^{-\epsilon \lc n p \rc} }\\&\geq 
  \frac{( \exp(\frac{s}{2}\log \frac{d-s}{s/2})-1)(\frac{1}{2}e^{-\epsilon \lc np \rc}-\delta \frac{1}{1-e^{-\epsilon}} )}{1+( \exp(\frac{s}{2}\log \frac{d-s}{s/2})-1)e^{-\epsilon \lc n p \rc} }\\
  &\geq \frac{\frac{1}{4}\exp(-\epsilon (np+1))( \exp(\frac{s}{2}\log \frac{d-s}{s/2})-1)}{1+( \exp(\frac{s}{2}\log \frac{d-s}{s/2})-1)e^{-\epsilon (n p+1)}} \\
  &\geq \frac{1}{8}.
\end{align*}
Thus in total we have 
\begin{align*}
      &\mathcal{M}_n(\theta(\mathcal{P}), Q, \Phi\circ \rho)\\
      &\geq \frac{\tau}{4}\frac{1}{n\epsilon}\min \{\frac{s}{2}\log \frac{d-s}{s/2}-\epsilon, \log(\frac{1-e^{-\epsilon}}{4\delta e^\epsilon})\} \\
      &=\Omega (\frac{\tau \min \{s\log d, \log \frac{1}{\delta}\} }{n\epsilon}). 
\end{align*}
\end{proof}
\subsection*{Explicit Form of $\hat{C}(a, b)$ in (\ref{eq:4}) }
We first define the following notations:
\begin{align*}
   &V_- := \frac{\sqrt{2}-a}{b}, V_{+}=\frac{\sqrt{2}+a}{b} \\
   &F_{-}:= \Phi(-V_-), F_{+}:=\Phi(-V_+) \\
   &E_{-}:= \exp(-\frac{V^2_-}{2}), E_{+}:=\exp(-\frac{V^2_{+}}{2}),
\end{align*}
where $\Phi$ denotes the CDF of the standard Gaussian distribution. Then \begin{equation*}
    \hat{C}(a,b)=T_1+T_2+\cdots+T_5, 
\end{equation*}
where 
\begin{align*}
    &T_1:= \frac{2\sqrt{2}}{3}(F_{-}-F_{+}) \\
    & T_2:= -(a-\frac{a^3}{6})(F_{-}+F_{+}) \\ 
    & T_3:=\frac{b}{\sqrt{2\pi}}(1-\frac{a^2}{2})(E_{+}-E_{-})\\ 
    &T_4 : = \frac{ab^2}{2}\left(F_{+}+F_{-}+\frac{1}{\sqrt{2\pi}}(V_{+}E_{+}+V_{-}E_{-})\right) \\
    & T_5:= \frac{b^3}{6\sqrt{2\pi}}\left((2+V_{-}^2)E_{-}-(2+V_{+}^2)E_{+}\right). 
\end{align*}
\begin{proof}[{\bf Proof of Theorem \ref{thm:7}}]
Before the proof, let us first recall two lemmas related to the output of Algorithm \ref{alg:4}. 
\begin{lemma}[Lemma 3.4 in \cite{cai2019cost} ] \label{lemma:10}
Let $S$ and $\{w_i\}_{i=1}^s$ be defined is Algorithm \ref{alg:4}. For every $R_1\subseteq S$ and $R_2 \subseteq S^c$ such that $|R_1|=|R_2|$ and every $c>0$,  we have
\begin{equation*}
    \|v_{R_2}\|_2^2\leq (1+c)\|v_{R_1}\|_2^2+4(1+\frac{1}{c})\sum_{i\in [s]}\|w_i\|_\infty^2, 
\end{equation*}
where $v$ is the input vector of Algorithm \ref{alg:4}. 
\end{lemma}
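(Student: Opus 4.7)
The plan is to exploit the greedy selection rule of the Peeling procedure together with an elementary Young-type inequality $(a+b)^2 \le (1+c)a^2+(1+1/c)b^2$. Since $|R_1|=|R_2|$ and $R_1\subseteq S$, $R_2\subseteq S^c$, the key observation is that every index in $R_2$ was \emph{available} at every step of the peeling loop (because it is never added to $S$), while every index in $R_1$ was eventually selected. Matching elements of these two sets therefore gives a per-pair comparison via the arg-max rule.

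Concretely, first order the elements of $R_1$ as $a_1,\dots,a_k$ according to the iteration $\tau_1<\tau_2<\dots<\tau_k$ at which each was appended to $S$ in Algorithm~\ref{alg:4}, and fix an arbitrary bijection $a_j\mapsto b_j$ with $R_2=\{b_1,\dots,b_k\}$. For each $j$, at the beginning of step $\tau_j$ we have $a_j,b_j\in[d]\setminus S_{\tau_j-1}$ (since $b_j$ is never chosen), and by the definition of $j^*$ in Algorithm~\ref{alg:4},
\begin{equation*}
|v_{a_j}|+w_{\tau_j,a_j}\ \ge\ |v_{b_j}|+w_{\tau_j,b_j}.
\end{equation*}
Rearranging and bounding the noise terms crudely by $\|w_{\tau_j}\|_\infty$ gives $|v_{b_j}|\le |v_{a_j}|+2\|w_{\tau_j}\|_\infty$.

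Next, I would square both sides and apply the inequality $(x+y)^2 \le (1+c)x^2+(1+1/c)y^2$ (valid for any $c>0$) with $x=|v_{a_j}|$ and $y=2\|w_{\tau_j}\|_\infty$, obtaining
\begin{equation*}
v_{b_j}^2\ \le\ (1+c)\,v_{a_j}^2+4\bigl(1+\tfrac{1}{c}\bigr)\|w_{\tau_j}\|_\infty^2.
\end{equation*}
Summing over $j=1,\dots,k$ yields $\|v_{R_2}\|_2^2\le (1+c)\|v_{R_1}\|_2^2+4(1+1/c)\sum_{j=1}^k\|w_{\tau_j}\|_\infty^2$, and since the $\tau_j$ are distinct indices in $[s]$, the last sum is dominated by $\sum_{i\in[s]}\|w_i\|_\infty^2$, which is exactly the desired bound.

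There is essentially no serious obstacle here: the argument is a short greedy-exchange calculation, and the only thing one must be careful about is that the pairing is allowed to be arbitrary (so no measurability or optimization over bijections is needed) and that the comparison $|v_{a_j}|+w_{\tau_j,a_j}\ge |v_{b_j}|+w_{\tau_j,b_j}$ genuinely holds at step $\tau_j$ because $b_j\notin S$ at any time. The only mildly delicate point is to notice that the bound must be uniform in $c$, which is handled automatically by the Young-type inequality, and that the $\|w_i\|_\infty^2$ sum on the right is over all $s$ iterations rather than only the selected $\tau_j$'s, which is just a slackening that makes the inequality cleaner.
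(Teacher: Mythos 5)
Your proof is correct. The paper does not prove this lemma---it simply cites it as Lemma 3.4 of \cite{cai2019cost}---and your greedy-exchange argument (order $R_1$ by selection time, pair each $a_j$ with some $b_j\in R_2\subseteq S^c$ that is still available at step $\tau_j$, invoke the arg-max rule to get $|v_{b_j}|\le |v_{a_j}|+2\|w_{\tau_j}\|_\infty$, square via the weighted Young inequality, and sum) is exactly the standard proof given in that reference.
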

\begin{lemma}[Lemma A.3 in \cite{cai2019cost} ]\label{lemma:11}
Consider in Algorithm \ref{alg:4} with input vector $\tilde{v}$ and the index set $S$. For any index set $I$, any $v \in \mathbb{R}^{|I|}$ which is a subvector of $\tilde{v}$  and $\hat{v}$ such that $\|\hat{v}\|_0\leq \hat{s}\leq s$, we have that for every $c>0$, 
\begin{equation*}
    \|v_S-v\|_2^2\leq (1+\frac{1}{c})\frac{|I|-s}{|I|-\hat{s}}\|\hat{v}-v\|_2^2+4(1+c)\sum_{i\in[s]}\|w_i\|_\infty^2. 
\end{equation*}
\end{lemma}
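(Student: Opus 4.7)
The plan is to decompose the squared error $\|v_S - v\|_2^2$ by analyzing which coordinates of the reference sparse vector $\hat{v}$ are captured by the Peeling output $S$ and which are missed, and then invoke Lemma \ref{lemma:10} to control the missed contributions by the accumulated Laplace noise. Throughout, I would view $v$ and $\hat{v}$ as living on the common index set $I$, and set $J := \mathrm{supp}(\hat{v}) \subseteq I$ with $|J| \leq \hat{s}$.

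First I would write $\|v_S - v\|_2^2 = \|v_{I \setminus S}\|_2^2$, which isolates the coordinates of $v$ that are thresholded away by Peeling. Decompose $I \setminus S$ into $(J \setminus S) \sqcup ((I \setminus J) \setminus S)$. On the second piece, $v$ agrees with $v - \hat{v}$ (since $\hat{v}$ vanishes there), so those coordinates contribute directly to $\|v - \hat{v}\|_2^2$. On the first piece $J \setminus S$, the entries of $v$ can be large even though $v - \hat{v}$ need not be, so one cannot bound them directly by $\|v-\hat{v}\|_2^2$; this is the step where the noisy top-$s$ selection must be exploited.

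To handle $J \setminus S$, I would apply Lemma \ref{lemma:10} with $R_2 = J \setminus S \subseteq S^c$ and $R_1 \subseteq S \setminus J$ of the same cardinality (which is possible since $|S| = s$ and $|J| \leq \hat{s} \leq s$, so $|S \setminus J| \geq s - \hat{s} \geq |J \setminus S|$). This yields $\|v_{J \setminus S}\|_2^2 \leq (1+c)\|v_{R_1}\|_2^2 + 4(1+1/c)\sum_{i \in [s]} \|w_i\|_\infty^2$, and since $R_1 \subseteq S \setminus J \subseteq I \setminus J$, the right-hand side again contributes to $\|v-\hat{v}\|_2^2$ (as $\hat{v}$ vanishes on $I \setminus J$). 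Combining the two pieces bounds $\|v_{I \setminus S}\|_2^2$ by a constant-multiple of $\|v-\hat{v}\|_2^2$ restricted to $I \setminus J$, plus the noise term.

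The $\frac{|I|-s}{|I|-\hat{s}}$ factor would emerge from a pigeonhole / averaging argument: the coordinates of $v$ on $I \setminus J$ have cardinality $|I| - \hat{s}$, while Peeling can afford to miss at most $|I| - s$ of them, so averaging the squared magnitudes over the appropriate subset produces exactly this ratio. Finally I would apply Young's inequality $(a+b)^2 \leq (1+c)a^2 + (1+1/c)b^2$ at the end to obtain the stated $(1+c)$ noise coefficient and $(1+1/c)\frac{|I|-s}{|I|-\hat{s}}$ leading coefficient. The main obstacle is the sharp counting that yields exactly $\frac{|I|-s}{|I|-\hat{s}}$ rather than a looser bound like $\frac{|I|-|S|}{|I|-\hat{s}}$; this requires the pairing between coordinates in $J \setminus S$ and their displaced counterparts in $S \setminus J$ to be set up so that no index in $I \setminus (J \cup S)$ is double-counted, which is where the equal-size constraint in Lemma \ref{lemma:10} is used delicately.
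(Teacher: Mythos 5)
The paper does not actually prove this statement---it cites it as Lemma A.3 of \cite{cai2019cost}---so there is no in-paper proof to compare against, and I will evaluate your plan on its own terms.

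Your high-level ingredients are the right ones (Lemma \ref{lemma:10}, a decomposition of $I\setminus S$, a pigeonhole on sorted magnitudes), but the specific decomposition you propose does not produce the factor $\frac{|I|-s}{|I|-\hat{s}}$, and I do not believe it can be repaired without changing the skeleton. Concretely: splitting $I\setminus S = (J\setminus S)\sqcup (I\setminus(J\cup S))$ and applying Lemma \ref{lemma:10} with $R_2 = J\setminus S$, $R_1\subseteq S\setminus J$ gives, after choosing $R_1$ to be the smallest-magnitude coordinates and writing $x=\|(v-\hat v)_{I\setminus(J\cup S)}\|_2^2$, $y=\|(v-\hat v)_{S\setminus J}\|_2^2$,
$$\|v_{I\setminus S}\|_2^2 \;\leq\; x + (1+\tfrac1c)\tfrac{|J\setminus S|}{|S\setminus J|}\,y + 4(1+c)\!\sum_{i\in[s]}\|w_i\|_\infty^2,$$
and you would need $x + (1+\tfrac1c)\tfrac{|J\setminus S|}{|S\setminus J|}y \leq (1+\tfrac1c)\tfrac{|I|-s}{|I|-\hat s}(x+y)$. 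This fails for two independent reasons. First, the $x$ term has coefficient $1$, while the target coefficient $(1+\tfrac1c)\tfrac{|I|-s}{|I|-\hat s}$ can be smaller than $1$ whenever $c$ is large (specifically whenever $c > \tfrac{|I|-s}{s-\hat s}$), so the inequality cannot hold for all $c>0$. Second, even if you could absorb $x$, the averaging ratio $\tfrac{|J\setminus S|}{|S\setminus J|}$ is not bounded by $\tfrac{|I|-s}{|I|-\hat s}$ in general: with $|I|=4$, $s=3$, $\hat s=|J|=2$ and $J\cap S=\emptyset$ one gets $\tfrac{|J\setminus S|}{|S\setminus J|}=\tfrac23 > \tfrac12 = \tfrac{|I|-s}{|I|-\hat s}$. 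So the ``appropriate subset'' over which you hope to average does not exist. (A smaller issue: you apply Lemma \ref{lemma:10} in the form that puts $(1+c)$ on $\|v_{R_1}\|^2$ and $(1+\tfrac1c)$ on the noise, whereas the target puts them the other way; the fix is simply to replace $c$ by $1/c$ in Lemma \ref{lemma:10}, not to apply Young's inequality again at the end, which would only stack up extra multiplicative losses.)

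The missing idea is an intermediate comparison set: let $S'$ be the exact (noiseless) top-$s$ index set of $v$ on $I$. Apply Lemma \ref{lemma:10} with $R_2 = S'\setminus S$ and $R_1 = S\setminus S'$ (these have equal cardinality because $|S|=|S'|=s$, and here one needs $S\subseteq I$, which holds in the algorithm's use) to get $\|v_{S'\setminus S}\|_2^2 \leq (1+\tfrac1c)\|v_{S\setminus S'}\|_2^2 + 4(1+c)\sum_i\|w_i\|_\infty^2$; adding $\|v_{I\setminus(S\cup S')}\|_2^2$ to both sides yields $\|v_{I\setminus S}\|_2^2 \leq (1+\tfrac1c)\|v_{I\setminus S'}\|_2^2 + 4(1+c)\sum_i\|w_i\|_\infty^2$. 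Then, entirely deterministically, since $v_{I\setminus S'}$ consists of the $|I|-s$ smallest-magnitude entries of $v$ on $I$, and $\|v-\hat v\|_2^2 \geq \|v_{I\setminus J}\|_2^2$ (as $\hat v$ vanishes off $J$) with $|I\setminus J|\geq |I|-\hat s \geq |I|-s$, the prefix-average monotonicity of sorted sequences gives $\|v_{I\setminus S'}\|_2^2 \leq \tfrac{|I|-s}{|I|-\hat s}\|v_{I\setminus J}\|_2^2 \leq \tfrac{|I|-s}{|I|-\hat s}\|v-\hat v\|_2^2$. Chaining these two steps gives exactly the claimed bound. The separation into a noise-comparison step against $S'$ followed by a $\hat v$-oblivious sorted-average bound is what makes the constants come out cleanly; comparing against $J$ in one step, as you propose, entangles the noise budget with the sparsity mismatch and breaks for part of the parameter range.
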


For simplicity we denote $\tilde{g}^t=\frac{1 }{m}\sum_{x\in \tilde{D}_t} \tilde{x}(\langle \tilde{x},  w^{t} \rangle-\tilde{y}) $,  $g^t=\mathbb{E}[x^T(\langle x, w^t \rangle -y)]=\nabla L_\mathcal{D}(w^t)$, $S^t=\text{supp}(w^t)$, $S^{t+1}=\text{supp}(w^{t+1})$, $S^*=\text{supp}(w^*)$ and $I^t=S^{t+1}\bigcup S^t \bigcup S^*$. We can see that $|S^t|\leq s$, $|S^{t+1}|\leq 2$ and $|I^t|\leq 2s+s^*$. We also denote $W^t=4\sum_{i\in [s]}\|w_i\|_\infty^2$, where $\{w_i\}$ are the vectors  in Algorithm \ref{alg:4} in the $t$-th iteration. We let  $\gamma=\lambda_{\max} (\mathbb{E}[xx^T])$, $\mu=\lambda_{\min} (\mathbb{E}[xx^T])$ and $\eta_0=\frac{\eta}{\gamma}$ for some $\eta$.

Then the smooth Lipschitz property we have 
\begin{align}
  & \LD-\LT \nm \\
  &\leq \lge \wD-\wT, g^t \rge + \frac{\gamma}{2}\|\wD-\wT\|_2^2 \nm \\
   &=\lge \wD_{I^t}-\wT_{I^t}, g^t_{I^t}\rge + \frac{\gamma}{2 }\| \wD_{I^t}-\wT_{I^t}\|_2^2 \nm \\
   &\leq \frac{\gamma}{2}\|\wD_{I^t}-\wT_{I^t}+\frac{\eta}{\gamma} g^t_{I^t}\|_2^2-\frac{\eta^2}{2\gamma}\|g^t_{I^t}\|_2^2+(1-\eta)\lge \wD-\wT, g^t \rge \label{aeq:46}
\end{align}
First, let us focus on the third term of (\ref{aeq:46})
\begin{align}
    \lge \wD-\wT, g^t \rge &= \lge \wD_{\sunion}-\wT_{\sunion}, g^t_{\sunion} \rge \nm \\
    &=  \lge \wD_{S^{t+1}}-\wT_{S^{t+1}}, g^t_{S^{t+1}}  \rge+ \lge \wD_\sdiffb-\wT_\sdiffb, g^t_\sdiffb \rge \nm \\
    &=  \lge \wD_{S^{t+1}}-\wT_{S^{t+1}}, g^t_{S^{t+1}}  \rge- \lge \wT_\sdiffb, g^t_\sdiffb \rge. \label{aeq:47}
\end{align}
From the definition we know that $\wD=\hatwD+\tilde{w}_{S^{t+1}}$, where $\hatwD=(w^t-\eta_0\tilde{g}^t)_{S^{t+1}}$. Thus, 
\begin{align}\label{aeq:48}
\lge \wD-\wT, g^t \rge= \lge \hatwD_{S^{t+1}}-\wT_{S^{t+1}}, g^t_{S^{t+1}} \rge+\lge \tilde{w}_{S^{t+1}},g^t_{S^{t+1}}\rge -\lge \wT_\sdiffb-g^t_\sdiffb \rge.
\end{align}
For the first term in (\ref{aeq:48}) we have 
\begin{align}
    \lge \hatwD_{S^{t+1}}-\wT_{S^{t+1}}, g^t_{S^{t+1}} \rge &= \lge -\eta_0\tilde{g}^t_{S^{t+1}}, g^t_{S^{t+1}} \rge =-\frac{\eta}{\gamma} \lge \tilde{g}^t_{S^{t+1}}, g^t_{S^{t+1}} \rge \nm \\
    &= -\frac{\eta}{\gamma} \|g^t_{S^{t+1}}\|_2^2-\frac{\eta}{\gamma} \langle \tilde{g}^t_{S^{t+1}}-g^t_{S^{t+1}}, g^t_{S^{t+1}} \rge \nm \\
    &\leq -\frac{\eta}{\gamma} \|g^t_{S^{t+1}}\|_2^2+ \frac{\eta}{2\gamma} \|g^t_{S^{t+1}}\|_2^2+ \frac{\eta}{2\gamma}\|\tilde{g}^t_{S^{t+1}}-g^t_{S^{t+1}}\|_2^2 \nm \\
    &= -\frac{\eta}{2\gamma} \|g^t_{S^{t+1}}\|_2^2+\frac{\eta}{2\gamma}\|\tilde{g}^t_{S^{t+1}}-g^t_{S^{t+1}}\|_2^2 \label{aeq:49}. 
\end{align}
Take (\ref{aeq:49}) into (\ref{aeq:48}) we have for $c>1$
\begin{multline}\label{aeq:50} 
   \lge \wD-\wT, g^t \rge\leq  -\frac{\eta}{2\gamma} \|g^t_{S^{t+1}}\|_2^2+\frac{\eta}{2\gamma}\|\tilde{g}^t_{S^{t+1}}-g^t_{S^{t+1}}\|_2^2+c\|\tilde{w}_{S^{t+1}}\|_2^2
   +\frac{1}{4c}\|g^t_{S^{t+1}}\|_2^2-\lge \wT_\sdiffb-g^t_\sdiffb \rge. 
\end{multline}
For the last term of (\ref{aeq:50}) we have 
\begin{align}
    -\lge \wT_\sdiffb-g^t_\sdiffb \rge &\leq \frac{\gamma}{2\eta}(\|\wT_\sdiffb-\frac{\eta}{\gamma}g^t_\sdiffb\|_2^2-(\frac{\eta}{\gamma})^2 \|g^t_\sdiffb\|_2^2) \nm \\
    &=\frac{\gamma}{2\eta} \|\wT_\sdiffb-\frac{\eta}{\gamma}g^t_\sdiffb\|_2^2-\frac{\eta}{2\gamma} \|g^t_\sdiffb\|_2^2.\label{aeq:51} 
\end{align}
In Lemma \ref{lemma:10}, let $v=\wT-\frac{\eta}{\gamma}\tilde{g}^t$, $R_2=S^t\backslash S^{t+1}$ and $R_1=S^{t+1}\backslash S^t$. We have for $c>1$
\begin{equation*}
    \|\wT_\sdiffb-\frac{\eta}{\gamma}\tilde{g}^t_\sdiffb\|_2^2\leq (1+\frac{1}{c}) \|\wT_\sdiffa-\frac{\eta}{\gamma}\tilde{g}^t_\sdiffa\|_2^2+(1+c)W^t.
\end{equation*}
Since for every $c>1$, $(1-\frac{1}{c}){\|a\|^2}-(c-1)\|b\|_2^2\leq \|a+b\|^2\leq  (1+\frac{1}{c})\|a\|_2^2+ (1+c) \|b\|_2^2$ we have 
\begin{align}
&(1-\frac{1}{c}){ \|\wT_\sdiffb-\frac{\eta}{\gamma}{g}^t_\sdiffb\|_2^2}- (c-1)\frac{\eta^2}{\gamma^2}\|{g}^t_\sdiffb-\tilde{g}^t_\sdiffb\|_2^2 \nm \\
&\leq  \|\wT_\sdiffb-\frac{\eta}{\gamma}\tilde{g}^t_\sdiffb\|_2^2
\leq (1+\frac{1}{c}) \|\wT_\sdiffa-\frac{\eta}{\gamma}\tilde{g}^t_\sdiffa\|_2^2+(1+c)W^t \nm \\
&\leq  (1+\frac{1}{c})[ (1+1/c)\|\wT_\sdiffa-\frac{\eta}{\gamma}{g}^t_\sdiffa\|_2^2+(1+c) \frac{\eta^2}{\gamma^2}\|{g}^t_\sdiffb-\tilde{g}^t_\sdiffb\|_2^2]+2(1+c)W^t. \label{newaeq:1}
\end{align}
That is 
\begin{multline*}
     \|\wT_\sdiffb-\frac{\eta}{\gamma}{g}^t_\sdiffb\|_2^2\leq \frac{(c+1)^2}{c(c-1)}\|\wT_\sdiffa-\frac{\eta}{\gamma}{g}^t_\sdiffa\|_2^2 \\+ (c+\frac{(c+1)^2}{c})\frac{\eta}{2\gamma}(\|{g}^t_\sdiffb-\tilde{g}^t_\sdiffb\|_2^2+\|{g}^t_\sdiffa-\tilde{g}^t_\sdiffa\|_2^2 )+ \frac{c(1+c)}{c-1}W^t. 
\end{multline*}

Thus
\begin{multline}
    -\lge \wT_\sdiffb-g^t_\sdiffb \rge \leq \frac{(c+1)^2}{2c(c-1)} \frac{\eta }{\gamma}\|g^t_{S^{t+1}\backslash S^t}\|_2^2\\+\frac{\gamma c(1+c)}{2\eta(c-1) }W^t-\frac{\eta}{2\gamma} \|g^t_\sdiffb\|_2^2 +\frac{(2c+3)\eta}{2\gamma}(\|{g}^t_\sdiffb-\tilde{g}^t_\sdiffb\|_2^2+\|{g}^t_\sdiffa-\tilde{g}^t_\sdiffa\|_2^2 ). \nm
\end{multline}
Thus in (\ref{aeq:50}) we have 
\begin{align}
      & \lge \wD-\wT, g^t \rge\leq -\frac{\eta}{2\gamma} \|g^t_{S^{t+1}}\|_2^2+\frac{\eta}{2\gamma}\|\tilde{g}^t_{S^{t+1}}-g^t_{S^{t+1}}\|_2^2+c\|\tilde{w}_{S^{t+1}}\|_2^2 \nm \\ 
   & \qquad +\frac{1}{4c}\|g^t_{S^{t+1}}\|_2^2-\lge \wT_\sdiffb-g^t_\sdiffb \rge \nm \\ 
   &\leq -\frac{\eta}{2\gamma} \|g^t_{S^{t+1}}\|_2^2+\frac{\eta}{2\gamma}\|\tilde{g}^t_{S^{t+1}}-g^t_{S^{t+1}}\|_2^2+c\|\tilde{w}_{S^{t+1}}\|_2^2 +\frac{1}{4c}\|g^t_{S^{t+1}}\|_2^2-\frac{\eta}{2\gamma} \|g^t_{S^{t+1}}\|_2^2  \nm \\
   &+ \frac{(c+1)^2}{2c(c-1)} \frac{\eta }{\gamma}\|g^t_{S^{t+1}\backslash S^t}\|_2^2+\frac{(2c+3)\eta}{2\gamma}(\|{g}^t_\sdiffb-\tilde{g}^t_\sdiffb\|_2^2+\|{g}^t_\sdiffa-\tilde{g}^t_\sdiffa\|_2^2 ) \nm  \\
   &=\frac{\eta }{2\gamma}\|g^t_{S^{t+1}\backslash S^t}\|_2^2+\frac{\eta}{2\gamma}\frac{3c+1}{c(c-1)}\|g^t_{S^{t+1}\backslash S^t}\|_2^2-\frac{\eta}{2\gamma} \|g^t_\sdiffb\|_2^2 -\frac{\eta}{2\gamma} \|g^t_{S^{t+1}}\|_2^2 \nm \\
   &\qquad+ \frac{1}{4c}\|g^t_{S^{t+1}}\|_2^2+ \frac{\gamma c(1+c)}{2\eta(c-1) }W^t+ \frac{\eta}{2\gamma}\|\tilde{g}^t_{S^{t+1}}-g^t_{S^{t+1}}\|_2^2+c\|\tilde{w}_{S^{t+1}}\|_2^2\nm \\
   &\qquad +\frac{(2c+3)\eta}{\gamma}(\|{g}^t_\sdiffb-\tilde{g}^t_\sdiffb\|_2^2+\|{g}^t_\sdiffa-\tilde{g}^t_\sdiffa\|_2^2 ) \nm \\ 
     &=\frac{\eta }{2\gamma}\|g^t_{S^{t+1}\backslash S^t}\|_2^2-\frac{\eta}{2\gamma} \|g^t_\sdiffb\|_2^2 -\frac{\eta}{2\gamma} \|g^t_{S^{t+1}}\|_2^2 +\frac{1}{c}(\frac{1}{4}+\frac{\eta}{2\gamma}+\frac{\eta}{2\gamma}\frac{3c+1}{(c-1)} )\|g^t_{S^{t+1}}\|_2^2\nm \\
   & +\underbrace{\frac{\gamma c(1+c)}{2\eta(c-1) }W^t+ \frac{\eta}{2\gamma}\|\tilde{g}^t_{S^{t+1}}-g^t_{S^{t+1}}\|_2^2+c\|\tilde{w}_{S^{t+1}}\|_2^2+\frac{(2c+3)\eta}{\gamma}(\|{g}^t_\sdiffb-\tilde{g}^t_\sdiffb\|_2^2+\|{g}^t_\sdiffa-\tilde{g}^t_\sdiffa\|_2^2 )}_{N^t}\nm \\
   &\leq \frac{\eta }{2\gamma}\|g^t_{S^{t+1}\backslash S^t}\|_2^2-\frac{\eta}{2\gamma} \|g^t_\sdiffb\|_2^2 -\frac{\eta}{2\gamma} \|g^t_{S^{t+1}}\|_2^2+C_1\frac{\eta}{\gamma c}\|g^t_{S^{t+1}}\|_2^2+N^t,  \label{aeq:52}
\end{align}
where $C_1>0$ is some constant. We can easily see that 
\begin{align*}
    \frac{\eta }{2\gamma}\|g^t_{S^{t+1}\backslash S^t}\|_2^2-\frac{\eta}{2\gamma} \|g^t_\sdiffb\|_2^2 -\frac{\eta}{2\gamma} \|g^t_{S^{t+1}}\|_2^2&= -\frac{\eta}{2\gamma} \|g^t_\sdiffb\|_2^2-\frac{\eta}{2\gamma} \|g^t_{S^{t+1}\bigcap S^t}\|_2^2 \\
    &=-\frac{\eta}{2\gamma} \|g^t_{S^{t+1}\bigcup S^t}\|_2^2 
\end{align*}
In total 
\begin{equation}\label{aeq:53}
        \lge \wD-\wT, g^t \rge\leq -\frac{\eta}{2\gamma} \|g^t_{S^{t+1}\bigcup S^t}\|_2^2 +C_1\frac{\eta}{\gamma c}\|g^t_{S^{t+1}}\|_2^2+N_1. 
\end{equation}

Take (\ref{aeq:53}) into (\ref{aeq:46}) we have 
\begin{align}
&\LD-\LT
   \leq \frac{\gamma}{2}\|\wD_{I^t}-\wT_{I^t}+\frac{\eta}{\gamma} g^t_{I^t}\|_2^2-\frac{\eta^2}{2\gamma}\|g^t_{I^t}\|_2^2+(1-\eta)\lge \wD-\wT, g^t \rge  \nm \\
   &\leq \frac{\gamma}{2}\|\wD_{I^t}-\wT_{I^t}+\frac{\eta}{\gamma} g^t_{I^t}\|_2^2-\frac{\eta^2}{2\gamma}\|g^t_{I^t}\|_2^2-\frac{(1-\eta)\eta}{2\gamma} \|g^t_{S^{t+1}\bigcup S^t}\|_2^2  +C_1\frac{(1-\eta)}{c}\frac{\eta}{\gamma}\|g^t_{S^{t+1}}\|_2^2 \nm \\& \qquad +(1-\eta)N^t \nm \\
   & \leq \frac{\gamma}{2}\|\wD_{I^t}-\wT_{I^t}+\frac{\eta}{\gamma} g^t_{I^t }\|_2^2-\frac{\eta^2}{2\gamma}\|g^t_{I^t\backslash (S^{t}\bigcup S^*)}\|_2^2- \frac{\eta^2}{2\gamma}\|g^t_{(S^{t}\bigcup S^*)}\|_2^2 -\frac{(1-\eta)\eta}{2\gamma} \|g^t_{S^{t+1}\bigcup S^t}\|_2^2 \nm \\& +C_1\frac{(1-\eta)}{c}\frac{\eta}{\gamma}\|g^t_{S^{t+1}}\|_2^2 \nm  +(1-\eta)N^t \nm  \\
   & \leq \frac{\gamma}{2}\|\wD_{I^t}-\wT_{I^t}+\frac{\eta}{\gamma} g^t_{I^t }\|_2^2-\frac{\eta^2}{2\gamma}\|g^t_{I^t\backslash (S^{t}\bigcup S^*)}\|_2^2- \frac{\eta^2}{2\gamma}\|g^t_{(S^{t}\bigcup S^*)}\|_2^2  \nm \\& -\frac{(1-\eta)\eta}{2\gamma} \|g^t_{S^{t+1}\backslash(S^* \bigcup S^t)}\|_2^2+C_1\frac{(1-\eta)}{c}\frac{\eta}{\gamma}\|g^t_{S^{t+1}}\|_2^2   +(1-\eta)N^t, \label{aeq:54}
\end{align}
where the last inequality is due to $S^{t+1}\backslash(S^* \bigcup S^t)\subseteq S^{t+1}\bigcup S^t$. Next we will analyze the term $ \frac{\gamma}{2}\|\wD_{I^t}-\wT_{I^t}+\frac{\eta}{\gamma} g^t_{I^t }\|_2^2-\frac{\eta^2}{2\gamma}\|g^t_{I^t\backslash (S^{t}\bigcup S^*)}\|_2^2$ in (\ref{aeq:54}). Let $R$ be a subset of $\sdiffb$ such that $|R|=|I^t\backslash (S^*\bigcup S^t)|=|S^{t+1}\backslash (S^t\bigcup S^*)|$. In Lemma \ref{lemma:10}, we take  $v=\wT-\frac{\eta}{\gamma}\tilde{g}^t$, $R_2=R$ and $R_1=I^t\backslash (S^*\bigcup S^t)$ we have for $c>1$, 
\begin{equation}\label{aeq:55}
    \|w^t_R-\frac{\eta}{\gamma}\tilde{g}^t_R\|_2^2\leq (1+c)\|(w^t-\frac{\eta}{\gamma}\tilde{g}^t)_{I^t\backslash (S^*\bigcup S^t)}\|_2^2+(1+\frac{1}{c})W^t. 
\end{equation}
Just as in (\ref{newaeq:1}) we have is for $c>1$, 
\begin{equation}\label{aeq:56}
   (\frac{\eta}{\gamma})^2 \|g^t_{I^t\backslash (S^*\bigcup S^t)}\|_2^2\geq (1-\frac{1}{c})\|w^t_R-\frac{\eta}{\gamma}g^t_R\|_2^2-cW^t-c\frac{\eta^2}{\gamma^2}(\|\tilde{g}^t_R-g^t_R\|_2^2+\|g^t_{I^t\backslash (S^*\bigcup S^t)}-\tilde{g}^t_{I^t\backslash (S^*\bigcup S^t)}\|_2^2). 
\end{equation}
Then we have 
\begin{align}
    &\frac{\gamma}{2}\|\wD_{I^t}-\wT_{I^t}+\frac{\eta}{\gamma} g^t_{I^t }\|_2^2-\frac{\eta^2}{2\gamma}\|g^t_{I^t\backslash (S^{t}\bigcup S^*)}\|_2^2\nm \\
    &\leq \frac{\gamma}{2}\|\tilde{w}_{S^{t+1}}\|_2^2+\frac{\gamma}{2}\|\hatwD_{I^t}-\wT_{I^t}+\frac{\eta}{\gamma} g^t_{I^t }\|_2^2-\frac{\gamma}{2}(1-\frac{1}{c})\|w^t_R-\frac{\eta}{\gamma}g^t_R\|_2^2+\frac{\gamma c}{2}W^t \nm \\
    &+c\frac{\eta^2}{2\gamma}(\|\tilde{g}^t_R-g^t_R\|_2^2+\|g^t_{I^t\backslash (S^*\bigcup S^t)}-\tilde{g}^t_{I^t\backslash (S^*\bigcup S^t)}\|_2^2)+c\frac{\eta^2}{\gamma^2}(\|\tilde{g}^t_R-g^t_R\|_2^2 \label{aeq:57} \\
    &= \frac{\gamma}{2}\|\hatwD_{I^t}-\wT_{I^t}+\frac{\eta}{\gamma} g^t_{I^t }\|_2^2- \frac{\gamma}{2}\|\hatwD_R- w^t_R+\frac{\eta}{\gamma}g^t_R\|_2^2+\frac{\gamma}{2}\|\tilde{w}_{S^{t+1}}\|_2^2+ \frac{\gamma c}{2}W^t  \nm \\
    & + \frac{\gamma}{2c}\|w^t_R-\frac{\eta}{\gamma}g^t_R\|_2^2+c\frac{\eta^2}{2\gamma}(\|\tilde{g}^t_R-g^t_R\|_2^2+\|g^t_{I^t\backslash (S^*\bigcup S^t)}-\tilde{g}^t_{I^t\backslash (S^*\bigcup S^t)}\|_2^2) \label{aeq:58} \\
    &\leq \frac{\gamma}{2}\|\hatwD_{I^t\backslash R}-\wT_{I^t\backslash R}+\frac{\eta}{\gamma} g^t_{I^t\backslash R}\|_2^2  + \frac{\gamma}{2c}(1+\frac{1}{c})\|\frac{\eta}{\gamma}g^t_{I^t\backslash (S^*\bigcup S^t)}\|_2^2  +\frac{\gamma}{2}\|\tilde{w}_{S^{t+1}}\|_2^2 \nm \\
    &+ \underbrace{\frac{\gamma c}{2}W^t+\frac{\gamma}{2c}(1+c)W^t +C_2c\frac{\eta^2}{2\gamma}(\|\tilde{g}^t_R-g^t_R\|_2^2+\|g^t_{I^t\backslash (S^*\bigcup S^t)}-\tilde{g}^t_{I^t\backslash (S^*\bigcup S^t)}\|_2^2)}_{N_1^t}.\label{aeq:59}
\end{align}
(\ref{aeq:57}) is due to that $[\hatwD_{I^t}- (\wT_{I^t}-\frac{\eta}{\gamma} g^t_{I^t })]_{S^{t+1}}=0$, thus $\lge \tilde{w}_{S^{t+1}}, \hatwD_{I^t}- (\wT_{I^t}-\frac{\eta}{\gamma} g^t_{I^t }) \rge=0$ and (\ref{aeq:56}). (\ref{aeq:58}) is due to  $\hatwD_R=0$, (\ref{aeq:59}) is due to (\ref{aeq:55}) by the same technique as in (\ref{newaeq:1}) and $w^t_{I^t\backslash (S^*\bigcup S^t)}=0$. In the following we will consider the first term in (\ref{aeq:59}). 

In Lemma \ref{lemma:11}, take $v=\wT_{I^t\backslash R}-\frac{\eta}{\gamma} \tilde{g}^t_{I^t\backslash R}$, $\hat{v}=w^*$, $S=S^{t+1}$ we have for all $c>1$ 
\begin{equation*}
    \|\hatwD_{I^t\backslash R}-\wT_{I^t\backslash R}+\frac{\eta}{\gamma} \tilde{g}^t_{I^t\backslash R}\|_2^2\leq (1+\frac{1}{c})\frac{|I^t\backslash R|-s}{|I^t\backslash R|-s^*}\|w^*-\wT_{I^t\backslash R}+\frac{\eta}{\gamma} \tilde{g}^t_{I^t\backslash R}\|_2^2+(1+c)W^t. 
\end{equation*}
Then we have 
\begin{align*}
&(1-\frac{1}{c})\|\hatwD_{I^t\backslash R}-\wT_{I^t\backslash R}+\frac{\eta}{\gamma} {g}^t_{I^t\backslash R}\|_2^2-(c-1)\frac{\eta^2}{\gamma^2}\|g^t_{I^t\backslash R}-\tilde{g}^t_{I^t\backslash R}\|_2^2 \nm \\&\leq 
\|\hatwD_{I^t\backslash R}-\wT_{I^t\backslash R}+\frac{\eta}{\gamma} \tilde{g}^t_{I^t\backslash R}\|_2^2 \\
&\leq (1+\frac{1}{c})\frac{|I^t\backslash R|-s}{|I^t\backslash R|-s^*}\|w^*-\wT_{I^t\backslash R}+\frac{\eta}{\gamma} \tilde{g}^t_{I^t\backslash R}\|_2^2+(1+c)W^t \\
&\leq (1+\frac{1}{c})\frac{|I^t\backslash R|-s}{|I^t\backslash R|-s^*}[(1+\frac{1}{c})\|w^*-\wT_{I^t\backslash R}+\frac{\eta}{\gamma} {g}^t_{I^t\backslash R}\|_2^2 +(1+c)\frac{\eta^2}{\gamma^2}\|{g}^t_{I^t\backslash R}-\tilde{g}^t_{I^t\backslash R} \|_2^2 ]+(1+c)W^t 
\end{align*}
That is 
\begin{multline*}
    \|\hatwD_{I^t\backslash R}-\wT_{I^t\backslash R}+\frac{\eta}{\gamma} {g}^t_{I^t\backslash R}\|_2^2\leq \frac{(c+1)^2}{c(c-1)} \frac{2s^*}{s+s^*}\|w^*-\wT_{I^t\backslash R}+\frac{\eta}{\gamma} g^t_{I^t\backslash R}\|_2^2\\+ \frac{(c+1)^2}{c-1}  \frac{\eta^2}{\gamma^2}\|{g}^t_{I^t\backslash R}-\tilde{g}^t_{I^t\backslash R} \|_2^2 + c \frac{\eta^2}{\gamma^2}\|g^t_{I^t\backslash R}-\tilde{g}^t_{I^t\backslash R}\|_2^2+\frac{(1+c)c}{c-1}W^t
\end{multline*}

Take $c\geq \frac{\sqrt{3}}{\sqrt{3}-\sqrt{2}}$, and since $|I^t\backslash R|\leq 2s^*+s$, we have 
\begin{align}\label{aeq:60}
      &\|\hatwD_{I^t\backslash R}-\wT_{I^t\backslash R}+\frac{\eta}{\gamma} g^t_{I^t\backslash R}\|_2^2\leq \frac{3}{2}\frac{2s^*}{s+s^*}\|w^*-\wT_{I^t\backslash R}+\frac{\eta}{\gamma} g^t_{I^t\backslash R}\|_2^2\nm \\
      &+\underbrace{C_3 c(\|{g}^t_{I^t\backslash R}-\tilde{g}^t_{I^t\backslash R} \|_2^2+\|g^t_{I^t\backslash R}-\tilde{g}^t_{I^t\backslash R}\|_2^2+W^t)}_{N_3^t}.  
\end{align}
 
Take (\ref{aeq:60}) into (\ref{aeq:59}) we have
\begin{align}
      &\frac{\gamma}{2}\|\wD_{I^t}-\wT_{I^t}+\frac{\eta}{\gamma} g^t_{I^t }\|_2^2-\frac{\eta^2}{2\gamma}\|g^t_{I^t\backslash (S^{t}\bigcup S^*)}\|_2^2\nm \\
      &\leq \frac{3\gamma s^*}{2(s+s^*)}\|w^*-\wT_{I^t\backslash R}+\frac{\eta}{\gamma} g^t_{I^t\backslash R}\|_2^2+ \frac{\gamma}{2c}(1+\frac{1}{c})\|\frac{\eta}{\gamma}g^t_{I^t\backslash (S^*\bigcup S^t)}\|_2^2+\frac{\gamma}{2}\|\tilde{w}_{S^{t+1}}\|_2^2  \nm \\
    &  + N_1^t+N_3^t\\
    &=  \frac{3\gamma s^*}{2(s+s^*)}\|w^*-\wT_{I^t\backslash R}+\frac{\eta}{\gamma} g^t_{I^t\backslash R}\|_2^2+ \frac{\gamma}{2c}(1+\frac{1}{c})\|\frac{\eta}{\gamma}g^t_{S^{t+1}}\|_2^2  \nm \\
    &\qquad  +\frac{\gamma}{2}\|\tilde{w}_{S^{t+1}}\|_2^2+ N_1^t+N_3^t \\
    &= \frac{3 s^*}{s+s^*}(
    \eta \lge  w^*-\wT, g^t \rge +\frac{\gamma}{2}\|w^*-\wT\|_2^2+\frac{\eta^2}{2c \gamma }\|g^t_{I^t}\|_2^2)+ \frac{\eta^2}{2c\gamma}(1+\frac{1}{c})\|g^t_{S^{t+1}}\|_2^2  \nm \\
    &\qquad  +\frac{\gamma}{2}\|\tilde{w}_{S^{t+1}}\|_2^2+  N_1^t+N_3^t  \\
    &\leq \frac{3 s^*}{s+s^*}(
    \eta (L_\mathcal{D}(w^*)-\LT
    )+\frac{\gamma-\eta \mu}{2}\|w^*-\wT\|_2^2+\frac{\eta^2}{2c \gamma }\|g^t_{I^t}\|_2^2)+ \frac{\eta^2}{2c\gamma}(1+\frac{1}{c})\|g^t_{S^{t+1}}\|_2^2  \nm \\
    &\qquad  +\underbrace{\frac{\gamma}{2}\|\tilde{w}_{S^{t+1}}\|_2^2+  N_1^t+N_3^t }_{N_2^t}.  \label{aeq:64}
\end{align}
Take (\ref{aeq:64}) into (\ref{aeq:54}) we have
\begin{align}
    &\LD-\LT\leq  \frac{\gamma}{2}\|\wD_{I^t}-\wT_{I^t}+\frac{\eta}{\gamma} g^t_{I^t }\|_2^2-\frac{\eta^2}{2\gamma}\|g^t_{I^t\backslash (S^{t}\bigcup S^*)}\|_2^2- \frac{\eta^2}{2\gamma}\|g^t_{(S^{t}\bigcup S^*)}\|_2^2  \nm \\& -\frac{(1-\eta)\eta}{2\gamma} \|g^t_{S^{t+1}\backslash(S^* \bigcup S^t)}\|_2^2+C_1\frac{(1-\eta)}{c}\frac{\eta}{\gamma}\|g^t_{S^{t+1}}\|_2^2   +(1-\eta)N^t \nm \\
    &\leq \frac{3 s^*}{s+s^*}(
    \eta (L_\mathcal{D}(w^*)-\LT)+\frac{\gamma-\eta \mu}{2}\|w^*-\wT\|_2^2+\frac{\eta^2}{2c \gamma }\|g^t_{I^t}\|_2^2)+ \frac{\eta^2}{2c\gamma}(1+\frac{1}{c})\|g^t_{S^{t+1}}\|_2^2   \nm \\& - \frac{\eta^2}{2\gamma}\|g^t_{(S^{t}\bigcup S^*)}\|_2^2-\frac{(1-\eta)\eta}{2\gamma} \|g^t_{S^{t+1}\backslash(S^* \bigcup S^t)}\|_2^2+C_1\frac{(1-\eta)}{c}\frac{\eta}{\gamma}\|g^t_{S^{t+1}}\|_2^2 \|g^t_{S^{t+1}}\|_2^2   +(1-\eta)N^t+N^t_2. \label{aeq:65}
\end{align}
We have when $c\rightarrow \infty$
\begin{align}
    \frac{\eta^2}{2c\gamma}(1+\frac{1}{c})\|g^t_{S^{t+1}}\|_2^2 +C_1\frac{(1-\eta)}{c}\frac{\eta}{\gamma}\|g^t_{S^{t+1}}\|_2^2  \rightarrow 0. \nm
\end{align}
Thus, if $\eta \geq \frac{1}{2}$ there must exits a sufficient large $c$ such that 
\begin{align}
    &\frac{\eta^2}{2c\gamma}(1+\frac{1}{c})\|g^t_{S^{t+1}}\|_2^2 +\frac{(1-\eta)}{c}(\frac{1}{4}+\frac{\eta}{2\gamma} )\|g^t_{S^{t+1}}\|_2^2 \leq \frac{\eta(1-\eta)}{4\gamma}\|g^t_{S^{t+1}}\|_2^2 \nm \\
    &\leq  \frac{\eta^2}{4\gamma}\|g^t_{(S^{t}\bigcup S^*)}\|_2^2+ \frac{(1-\eta)\eta}{4\gamma} \|g^t_{S^{t+1}\backslash(S^* \bigcup S^t)}\|_2^2
\end{align}
Thus, 
\begin{align}
    &\LD-\LT\leq  \frac{\gamma}{2}\|\wD_{I^t}-\wT_{I^t}+\frac{\eta}{\gamma} g^t_{I^t }\|_2^2-\frac{\eta^2}{2\gamma}\|g^t_{I^t\backslash (S^{t}\bigcup S^*)}\|_2^2- \frac{\eta^2}{2\gamma}\|g^t_{(S^{t}\bigcup S^*)}\|_2^2  \nm \\& -\frac{(1-\eta)\eta}{2\gamma} \|g^t_{S^{t+1}\backslash(S^* \bigcup S^t)}\|_2^2+C_1\frac{(1-\eta)}{c}\frac{\eta}{\gamma}\|g^t_{S^{t+1}}\|_2^2    +(1-\eta)N^t \nm \\
    &\leq \frac{3 s^*}{s+s^*}(
    \eta (L_\mathcal{D}(w^*)-\LT)+\frac{\gamma-\eta \mu}{2}\|w^*-\wT\|_2^2+\frac{\eta^2}{2c \gamma }\|g^t_{I^t}\|_2^2)  \nm \\& - \frac{\eta^2}{4\gamma}\|g^t_{(S^{t}\bigcup S^*)}\|_2^2-\frac{(1-\eta)\eta}{4\gamma} \|g^t_{S^{t+1}\backslash(S^* \bigcup S^t)}\|_2^2   +(1-\eta)N^t+N^t_2.
\end{align}
Take $\eta=\frac{2}{3}$, $s=72\frac{\gamma^2}{\mu^2}s^*$ so that $\frac{3 s^*}{s+s^*}\leq \frac{\mu^2}{24\gamma(\gamma-\eta \mu)}\leq \frac{1}{8}$. We have 
\begin{align}
     &\LD-\LT\leq \frac{3 s^*}{s+s^*}(
    \eta (L_\mathcal{D}(w^*)-\LT)+\frac{\gamma-\eta \mu}{2}\|w^*-\wT\|_2^2+\frac{\eta^2}{2c \gamma }\|g^t_{I^t}\|_2^2)  \nm \\& - \frac{\eta^2}{4\gamma}\|g^t_{(S^{t}\bigcup S^*)}\|_2^2-\frac{(1-\eta)\eta}{4\gamma} \|g^t_{S^{t+1}\backslash(S^* \bigcup S^t)}\|_2^2   +(1-\eta)N^t+N^t_2 \nm \\
    &\leq \frac{2 s^*}{s+s^*}(L_\mathcal{D}(w^*)-\LT)+ \frac{\mu^2}{48\gamma}\|w^*-w^t\|_2^2+\frac{1}{36\gamma}\|g^t_{I^t}\|_2^2 \nm \\
    &\qquad -\frac{1}{9\gamma}\|g^t_{(S^{t}\bigcup S^*)}\|_2^2-\frac{1}{18\gamma}\|g^t_{S^{t+1}\backslash(S^* \bigcup S^t)}\|_2^2+O(N^t+N^t_2) \nm \\
    &\leq  \frac{2 s^*}{s+s^*}(L_\mathcal{D}(w^*)-\LT)-\frac{3}{36\gamma}(\|g^t_{(S^{t}\bigcup S^*)}\|_2^2-\frac{\mu^2}{4}\|w^*-w^t\|_2^2)+O(N^t+N^t_2) \label{aeq:68}\\
    &\leq (\frac{2 s^*}{s+s^*}+\frac{\mu}{24\gamma})(L_\mathcal{D}(w^*)-\LT)+O(N^t+N^t_2) \label{aeq:69}. 
\end{align}
Where (\ref{aeq:68}) is due to the following lemma: 
\begin{lemma}\label{lemma:12}[Lemma 6 in \cite{jain2014iterative}]
\begin{equation}
    |g^t_{(S^{t}\bigcup S^*)}\|_2^2-\frac{\mu^2}{4}\|w^*-w^t\|_2^2\geq \frac{\mu}{2}(\LT-L_\mathcal{D}(w^*)).
\end{equation}
\end{lemma}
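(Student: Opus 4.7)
The statement to prove is the standard restricted-gradient/suboptimality inequality
\[
\|g^t_{S^t\cup S^*}\|_2^2 - \tfrac{\mu^2}{4}\|w^*-w^t\|_2^2 \;\geq\; \tfrac{\mu}{2}\bigl(L_\mathcal{D}(w^t)-L_\mathcal{D}(w^*)\bigr),
\]
where $g^t=\nabla L_\mathcal{D}(w^t)$ and $\mu=\lambda_{\min}(\mathbb{E}[xx^T])$ is the strong-convexity parameter of the linear-regression population risk. My plan is a three-line argument that just combines strong convexity with a support-restricted Cauchy--Schwarz and a tuned AM--GM split.

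First I would invoke the $\mu$-strong convexity of $L_\mathcal{D}$ (which, for linear regression, follows because $\nabla^2 L_\mathcal{D}$ is a positive multiple of $\mathbb{E}[xx^T]$): for any $w^*,w^t$,
\[
L_\mathcal{D}(w^*)\;\geq\; L_\mathcal{D}(w^t)+\langle g^t,w^*-w^t\rangle+\tfrac{\mu}{2}\|w^*-w^t\|_2^2,
\]
which rearranges to
\[
L_\mathcal{D}(w^t)-L_\mathcal{D}(w^*)\;\leq\;\langle g^t,w^t-w^*\rangle-\tfrac{\mu}{2}\|w^t-w^*\|_2^2.
\]
The key observation is that $w^t-w^*$ is supported on $S^t\cup S^*$, so the inner product collapses to its restriction: $\langle g^t,w^t-w^*\rangle=\langle g^t_{S^t\cup S^*},w^t-w^*\rangle$, and Cauchy--Schwarz gives $\langle g^t,w^t-w^*\rangle\leq \|g^t_{S^t\cup S^*}\|_2\,\|w^t-w^*\|_2$.

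Next I would apply the weighted AM--GM inequality $ab\leq \tfrac{1}{\mu}a^2+\tfrac{\mu}{4}b^2$ with $a=\|g^t_{S^t\cup S^*}\|_2$ and $b=\|w^t-w^*\|_2$, obtaining
\[
L_\mathcal{D}(w^t)-L_\mathcal{D}(w^*)\;\leq\;\tfrac{1}{\mu}\|g^t_{S^t\cup S^*}\|_2^2+\tfrac{\mu}{4}\|w^t-w^*\|_2^2-\tfrac{\mu}{2}\|w^t-w^*\|_2^2\;=\;\tfrac{1}{\mu}\|g^t_{S^t\cup S^*}\|_2^2-\tfrac{\mu}{4}\|w^t-w^*\|_2^2.
\]
Multiplying both sides by $\mu$ yields $\mu(L_\mathcal{D}(w^t)-L_\mathcal{D}(w^*))\leq \|g^t_{S^t\cup S^*}\|_2^2-\tfrac{\mu^2}{4}\|w^t-w^*\|_2^2$, which is already stronger than the claimed bound (since $\mu\geq \mu/2$ for $\mu>0$). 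The slack factor of two in the paper's statement is harmless and is what allows the clean form used in \eqref{aeq:68}.

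There is no real obstacle here; the only subtlety to flag is that the support-restriction step $\langle g^t,w^t-w^*\rangle=\langle g^t_{S^t\cup S^*},w^t-w^*\rangle$ must be stated explicitly, since otherwise Cauchy--Schwarz would only give $\|g^t\|_2\|w^t-w^*\|_2$ and lose the sparsity gain that makes the whole IHT analysis work. For the sparse-optimization setting of Theorem~\ref{thm:8}, the same argument goes through if we replace full strong convexity by the RSC assumption in Assumption~\ref{ass:4}: the difference $w^t-w^*$ is $(s+s^*)$-sparse, which is within the RSC parameter $r=2s+s^*$, so the strong-convexity inequality above holds with $\mu$ replaced by $\mu_r$ and the rest of the derivation is identical.
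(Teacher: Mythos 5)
Your proof is correct, but note that the paper never proves this statement itself: Lemma \ref{lemma:12} is imported verbatim as Lemma 6 of \cite{jain2014iterative} and used as a black box in the proof of Theorem \ref{thm:7}. What you supply is the standard self-contained argument behind that cited lemma — (restricted) strong convexity between $w^t$ and $w^*$, the observation that $w^t-w^*$ is supported on $S^t\cup S^*$ so the inner product collapses to the restricted gradient, then Cauchy--Schwarz and Young's inequality — and it in fact yields the stronger constant $\mu$ in place of $\mu/2$ (your looseness in taking modulus $\mu$ rather than $2\mu$ for the quadratic population risk is harmless). Two small caveats are worth making explicit. First, your final step "multiply by $\mu$ and relax $\mu$ to $\mu/2$" is only a relaxation when $L_\mathcal{D}(w^t)-L_\mathcal{D}(w^*)\ge 0$; this holds in the setting where the lemma is invoked, because in the sparse linear model $w^*$ is the unconstrained population minimizer ($\nabla L_\mathcal{D}(w^*)=-2\mathbb{E}[x\iota]=0$), and in any case your stronger $\mu$-version can be substituted directly into (\ref{aeq:68})--(\ref{aeq:69}) without needing the sign. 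Second, for the extension to Theorem \ref{thm:8} under Assumption \ref{ass:4}, the RSC inequality applies as you say since $\|w^t-w^*\|_0\le s+s^*\le r$, but the same sign/stationarity point should be kept in mind there (or, again, one simply uses the $\mu_r$-constant form you derived). With those remarks, your argument is a clean, elementary replacement for the citation.
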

Thus 
\begin{equation*}
   \LD- L_\mathcal{D}(w^*)\leq (1-\frac{5}{72}\frac{\mu}{\gamma})( \LT- L_\mathcal{D}(w^*))+O(N^t+N^t_2).
\end{equation*}
Where 
\begin{align}
           &N^t+N^t_2\leq O(\sum_{i\in [s]}\|w_i\|_\infty^2+(2s+s^*)\|\tilde{g}^t-g^t\|_\infty^2+s\|\tilde{w}\|_\infty^2) \label{aeq:71},
\end{align}
where  each coordinate of $w_i, \tilde{w}$ sampled from $ \sim \text{Lap}(O(\frac{K^2\eta_0sT\sqrt{\log\frac{1}{\delta}}}{n\epsilon}) )$. 
 We first bound the term of $\sum_{i\in [s]}\|w_i\|_\infty^2+s\|\tilde{w}\|_\infty^2$. We recall the following lemma: 
 \begin{lemma}[Lemma A.1 in \cite{cai2019cost}]\label{lemma:13}
 For a random vector $w\in \mathbb{R}^d$, where $w_i\sim \text{Lap}(\lambda)$, then for any $\zeta>0$, 
 \begin{equation*}
     \text{Pr}(\|w\|^2_\infty\geq 4\lambda^2\log^2 \frac{1}{\zeta}\log^2 k )\leq \zeta.
 \end{equation*}
 \end{lemma}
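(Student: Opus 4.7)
The plan is to combine the one-dimensional Laplace tail bound with a union bound across coordinates, and then perform a short algebraic rewriting to convert the resulting joint logarithm into the multiplicative form that appears in the statement.

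The first step is to recall that for $w_i \sim \text{Lap}(\lambda)$ (density $\tfrac{1}{2\lambda}\exp(-|x|/\lambda)$), direct integration gives $\Pr(|w_i| \geq t) = \exp(-t/\lambda)$ for every $t \geq 0$. Since the coordinates are independent (as generated in Algorithm \ref{alg:4}), a union bound across the $k$ coordinates yields $\Pr(\|w\|_\infty \geq t) \leq k \exp(-t/\lambda)$, and choosing $t = \lambda \log(k/\zeta)$ forces the right-hand side down to $\zeta$. Squaring the event then gives
\begin{equation*}
\Pr\!\left( \|w\|_\infty^2 \geq \lambda^2 \log^2(k/\zeta) \right) \leq \zeta.
\end{equation*}

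The remaining step is to replace $\log^2(k/\zeta)$ by the separated product $4 \log^2 k \cdot \log^2(1/\zeta)$ appearing in the lemma. In the non-trivial regime $k \geq e$ and $\zeta \leq 1/e$, both $\log k$ and $\log(1/\zeta)$ are at least $1$, so
\begin{equation*}
\log(k/\zeta) \;=\; \log k + \log(1/\zeta) \;\leq\; 2\max(\log k, \log(1/\zeta)) \;\leq\; 2 \log k \cdot \log(1/\zeta),
\end{equation*}
and squaring yields $\log^2(k/\zeta) \leq 4 \log^2 k \cdot \log^2(1/\zeta)$, which is precisely the stated inequality. The degenerate cases $k < e$ or $\zeta > 1/e$ either make the bound trivially large or can be absorbed into the implicit constants by a separate direct calculation.

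I do not expect any serious obstacle here, since both ingredients are textbook: the exact Laplace tail and the union bound. The only non-automatic move is the loose-but-convenient inequality $\log(k/\zeta) \leq 2 \log k \log(1/\zeta)$, which is what forces the extra constant factor of $4$ and the product (rather than additive) form in which the lemma is subsequently invoked in the proofs of Theorems \ref{thm:7} and \ref{thm:8}.
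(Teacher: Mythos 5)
The paper does not actually prove this lemma---it is invoked as a citation to Cai et al.\ (2019) with no proof reproduced---so there is no internal argument to compare against. Your proof is the natural and correct one: the exact Laplace tail $\Pr(|w_i|\geq t)=e^{-t/\lambda}$, a union bound over the $k$ coordinates, the choice $t=\lambda\log(k/\zeta)$, and the elementary rewriting $\log k + \log(1/\zeta)\leq 2\log k\,\log(1/\zeta)$ when both terms are at least $1$. One small caution worth being honest about: for $\log k<1$ or $\log(1/\zeta)<1$ the bound $4\lambda^2\log^2 k\,\log^2(1/\zeta)$ can genuinely fall below $\lambda^2\log^2(k/\zeta)$, and since the lemma has explicit constants with nothing to absorb slack into, the stated inequality is not literally valid there (take $\lambda=1$, $\zeta=0.9$, $k\approx 7$). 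This is a defect of the lemma as stated rather than of your argument, and is harmless because the paper only invokes it with small failure probability and large dimension, exactly the regime your proof covers.
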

 Take the union we have with probability at least $1-\zeta$, $$\sum_{i\in [s]}\|w_i\|_\infty^2+s\|\tilde{w}\|_\infty^2\leq O(\frac{K^4\eta_0^2s^3T^2\log^2 \frac{s}{\zeta}\log\frac{1}{\delta}}{n^2\epsilon^2}).$$
 For $s\|\tilde{g}^t-g^t\|_\infty^2$, follow the proof of Lemma \ref{lemma:7}, we can see that when  $K=\frac{{(n\epsilon)^{\frac{1}{4}}}}{(Ts)^\frac{1}{4}}$ with probability at least $1-\zeta$, 
$$s\|\tilde{g}^t-g^t\|_\infty^2\leq O(\frac{sTM\log \frac{d}{\zeta}}{n}+\frac{sT^2K^4\log^2 \frac{d}{\zeta}}{n^2}+ \frac{sM^2}{K^4})=O(\frac{sTM\log \frac{d}{\zeta}}{n\epsilon}).$$ In total, with probability $1-\zeta$ for each $t\in [T]$,
\begin{equation}
    N^t+N^t_2=O(\frac{\eta^2s^2T\log^2 \frac{dT}{\zeta}\log\frac{1}{\delta}}{\gamma^2 n\epsilon}+\frac{sTM\log \frac{dT}{\zeta}}{n\epsilon})=O(\frac{\eta^2s^2TM\log^2 \frac{dT}{\zeta}\log\frac{1}{\delta}}{\gamma^2 n\epsilon}). 
\end{equation}
In the following we will assume the above event holds. We note that by our model for any $w$
\begin{equation*}
 \gamma \|w-w^*\|_2^2\geq   L_\mathcal{D}(w) - L_\mathcal{D}(w^*)\geq \mu \|w-w^*\|_2^2.
\end{equation*}
Thus we have
\begin{align}
   & \LD- L_\mathcal{D}(w^*)\leq(1-\frac{5}{72}\frac{\mu}{\gamma})( \LT- L_\mathcal{D}(w^*))+O(\frac{\eta^2s^2TM\log^2 \frac{dT}{\zeta}\log\frac{1}{\delta}}{\gamma^2 n\epsilon})  \label{aeq:73}
\end{align}
In the following we will show that $w^{t+1}=\wD$ for all $t$. We will use induction, assume $w^{i+1}=w^{i+0.75}$ holds for all $i\in [t-1]$, we will show that it will also true for $t$. Use (\ref{aeq:73}) for $i\in [t-1]$ we have 
\begin{align}
   &\mu \|\wD-w^*\|_2^2\leq  \LD- L_\mathcal{D}(w^*)\nm \\ &\leq(1-\frac{5}{72}\frac{\mu}{\gamma})^t( L_\mathcal{D}(w^1)- L_\mathcal{D}(w^*))+
   O(\frac{\gamma}{\mu}\frac{\eta^2s^2TM\log^2 \frac{dT}{\zeta}\log\frac{1}{\delta}}{\gamma^2 n\epsilon}) \nm \\
   &\leq \gamma(1-\frac{5}{72}\frac{\mu}{\gamma})^t \|w^1-w^*\|_2^2
   +O(\frac{\gamma}{\mu}\frac{\eta^2s^2TM\log^2 \frac{dT}{\zeta}\log\frac{1}{\delta}}{\gamma^2 n\epsilon}). \nm
\end{align}
When $\|w^1-w^*\|_2^2\leq \frac{1}{2}\frac{\mu}{\gamma}$, and $n$ is large enough such that 
\begin{align*}
    &n\geq O(\frac{1}{\mu^3} \frac{s^2TM\log^2 \frac{dT}{\zeta}\log\frac{1}{\delta}}{\gamma^2 \epsilon}) 
\end{align*}
Then $\|\wD\|_2\leq \|w^*\|_2+\frac{1}{2}\leq 1$. Thus $w^{t+1}=\wD$. So we have 
\begin{align}
   & L_\mathcal{D}(w^{T+1})- L_\mathcal{D}(w^*)\leq(1-\frac{5}{72}\frac{\mu}{\gamma})^T( L_\mathcal{D}(w^1)- L_\mathcal{D}(w^*))+O(\frac{s^2TM\log^2 \frac{dT}{\zeta}\log\frac{1}{\delta}}{\mu^2 \gamma n\epsilon})  \nm
\end{align}
Thus, take $T=\tilde{O}(\frac{\gamma}{\mu}\log n)$ and  $s= O( (\frac{\gamma}{\mu})^2 s^*)$ we have the result. 
\end{proof}

\end{document}